\DeclareMathOperator{\diag}{\textrm{\normalfont{diag}}}
\DeclareMathOperator{\tvec}{\textrm{\normalfont{vec}}}
\DeclareMathOperator{\tr}{\textrm{\normalfont{tr}}}
\DeclareMathOperator*{\argmin}{argmin}
\newcommand{\bThe}{\bm{\Theta}}
\newcommand{\bPsi}{\bm{\Psi}}
\newcommand{\bOmega}{\bm{\Omega}}
\newcommand{\bLambda}{\bm{\Lambda}}
\newcommand{\bXi}{\bm{\Xi}}
\newcommand{\bE}{\bm{E}}
\newcommand{\bW}{\bm{W}}
\newcommand{\bG}{\bm{G}}
\newcommand{\bH}{\bm{H}}
\newcommand{\bP}{\bm{P}}
\newcommand{\bI}{\bm{I}}
\newcommand{\bY}{\bm{Y}}
\newcommand{\Real}{\mathbb{R}}
\newcommand{\One}{\mathbbm{1}}
\begin{document}

\title{EiGLasso for Scalable Sparse Kronecker-Sum Inverse Covariance Estimation}

\author{\name Jun Ho Yoon \email junhoy@cs.cmu.edu
       \AND
       \name Seyoung Kim \email sssykim@cs.cmu.edu\\
       \addr Computational Biology Department\\
       School of Computer Science\\
       Carnegie Mellon University\\
       Pittsburgh, PA 15213, USA}

\editor{}

\maketitle

\begin{abstract}
In many real-world problems, complex dependencies are present both among samples and among features.
The Kronecker sum or the Cartesian product of two graphs, each modeling dependencies across features
and across samples, has been used as an inverse covariance matrix for a matrix-variate Gaussian
distribution, as an alternative to Kronecker-product inverse covariance matrix,
due to its more intuitive sparse structure. However, the existing methods
for sparse Kronecker-sum inverse covariance estimation are limited in that they do not scale 
to more than a few hundred features and samples and that the unidentifiable parameters pose 
challenges in estimation. In this paper, we introduce EiGLasso, a highly scalable method
for sparse Kronecker-sum inverse covariance estimation, based on Newton's method combined with
eigendecomposition of the two graphs for exploiting the structure of Kronecker sum. 
EiGLasso further reduces computation time by approximating the Hessian based on the eigendecomposition
of the sample and feature graphs. EiGLasso achieves quadratic convergence 
with the exact Hessian and linear convergence with the approximate Hessian. 
We describe a simple new approach to estimating the unidentifiable parameters that generalizes
the existing methods. On simulated and real-world data, we demonstrate that EiGLasso 
achieves two to three orders-of-magnitude speed-up compared to the existing methods.
\end{abstract}

\begin{keywords}
  Kronecker sum, sparse inverse covariance estimation, Newton's method, convex optimization, 
  $L_1$ regularization.
\end{keywords}

\section{Introduction}

In many real-world problems for statistical analysis, complex dependencies are found among samples
as well as among features.
For example, in tumor gene-expression data collected from cancer patients,
gene-expression levels are correlated both among genes in the same pathways and 
among patients with the same or similar cancer subtypes~\citep{breastcancersubtype}.
Other examples include multivariate time-series data such as temporally correlated stock prices 
for related
companies~\citep{stocktogether} or a sequence of images in video data~\citep{videopixel}.
Gaussian graphical models with $L_1$-regularization have been widely used to learn a 
sparse inverse covariance matrix that corresponds to a graph over 
features~\citep{glasso,quic,bigquic}.
However, they were limited in that samples were assumed to be independent. As an alternative,
a multivariate Gaussian distribution has been generalized to a matrix-variate Gaussian distribution,
where the sample and feature dependencies were modeled with two separate 
graphs.

Kronecker-product or Kronecker-sum operators have been used
to combine two matrices, each representing a graph over samples and a graph over features,
to form an inverse covariance matrix for matrix-variate Gaussian distribution.
The Kronecker product of two graphs leads to a hard-to-interpret dense 
graph and non-convex log-likelihood~\citep{lengandtang2012,YIN2012,Tsiligkaridis13,zhou2014}.
However, its bi-convexity provided a fast flip-flop optimization method.
In contrast, the Kronecker sum has the advantage of producing a sparse graph as the Cartesian
product of the two graphs and having a convex log-likelihood, but the existing optimization methods
do not scale to large datasets~\citep{biglasso, TeraLasso}. BiGLasso 
used GLasso 
as a subroutine to estimate one graph while fixing the other graph in each iteration~\citep{biglasso}.
TeraLasso significantly improved the scalability of BiGLasso with the gradient
descent method~\citep{TeraLasso}, but still did not scale to problems with more than a few hundred
samples and features.

Another main challenge with the Kronecker sum comes from
the unidentifiable parameters in the diagonals of the matrices for feature and sample graphs. 
BiGLasso did not estimate these unidentifiable parameters. 
TeraLasso employed a reparameterization to make the
parameters identifiable and projected the estimates to the reparameterized space in each iteration.
In regression, the Kronecker-sum inverse covariance has been used to model errors
in covariates, but the trace of one of the two graphs was assumed to be known~\citep{RudelsonZhou2017,Park2017,Zhang2020}.  
With the Kronecker product, the parameters are unidentifiable as well, but 
they can be made identifiable with a simple method after the optimization is complete~\citep{YIN2012}.

In this paper, we introduce eigen graphical Lasso (EiGLasso) that addresses these limiations of the
existing methods for estimating the Kronecker-sum inverse covariance matrix.
Our contribution is two fold. First,
we develop an efficient optimization method based on Newton's method
that empirically leads two to three orders-of-magnitude speed-up compared to the existing methods.
Second, we describe a simple new approach to identifying and estimating the unidentifiable parameters
that generalizes the strategy used in TeraLasso~\citep{TeraLasso}.

For an efficient optimization, we adopt the framework used in QUIC~\citep{quic,bigquic},
the state-of-the-art method for estimating sparse Gaussian graphical models,
and extend it to address additional challenges
that arise when combining two graphs with the Kronecker-sum operator.
Since the gradient and Hessian matrices are far larger with an inflated structure than in QUIC, 
to reduce the computation time,
we leverage the eigendecomposition of the parameters. 
This eigendecomposition leads to a strategy
for approximating the Hessian to further
improve the scalability. Extending the theoretical results on the convergence of QUIC, we show
EiGLasso achieves quadratic convergence with the exact Hessian and linear convergence with 
the approximate Hessian.
In our experiments, we show that 
EiGLasso with the approximate Hessian does not require significantly more iterations
than EiGLasso with the exact Hessian and achieves
two to three orders-of-magnitude speed-up compared to the state-of-the-art method, 
TeraLasso~\citep{TeraLasso}.

In addition, we introduce a new approach to determining the unidentifiable parameters
in Kronecker-sum inverse covariance estimation.
Our work is the first to point out that the unidentifiable parameters
are uniquely determined given the ratio of the traces
of the two parameter matrices for graphs. Building upon this observation, we show that all the quantities involved in 
Newton's method, including the gradient, Hessian, the descent directions, and the step size,
are uniquely determined regardless of the trace ratio. This allows us to perform the optimization
in the original space of the parameters without being concerned about the unidentifiability,
rather than in the reparameterized space as in TeraLasso.
Furthermore, we show that the parameters can be identified once with the given trace ratio after EiGLasso converges,
analogous to the simple approach used in the Kronecker-product inverse covariance 
estimation~\citep{YIN2012},
rather than in every iteration as in TeraLasso. 

Our preliminary work on EiGLasso appeared in \citet{yoon2020}. While
in our earlier work, we presented a flip-flop optimization method as in the Kronecker-product
inverse covariance estimation, in this work, we present EiGLasso with the exact Hessian,
along with EiGLasso with the approximate Hessian directly derived from the exact Hessian.
We investigate the properties of these Hessian matrices and analyze the convergence behavior
of EiGLasso. We perform a more extensive experimental evaluation of our method.

The rest of the paper is organized as follows. In Section \ref{sec:related}, we review
the previous work on statistical methods with the Kronecker-sum operator. 
We introduce our EiGLasso optimization method in Section \ref{sec:eiglasso}, study
the convergence behavior in Section \ref{sec:convg}, present experimental results
in Section \ref{sec:exp}, and conlude with future work in Section \ref{sec:conclusion}.

\section{Related Work}\label{sec:related}

A Gaussian distribution of a matrix random variable 
$\bY\in\Real^{q\times p}$ for $q$ samples and $p$ features
with a Kronecker-sum inverse covariance matrix~\citep{biglasso} is given as
\begin{equation}\label{eqn:ksmatrixnormal}
    \tvec(\bY) \sim \mathcal{N}\left(\tvec(\bm{M}), \bOmega^{-1}\right),
\end{equation}
where $\bm{M}\in\Real^{q\times p}$ is the mean and 
$\tvec(\cdot)$ is an operator that stacks the columns of a matrix into a vector. 
The $pq \times pq$ inverse covariance matrix $\bOmega$ in Eq. \eqref{eqn:ksmatrixnormal} 
is defined as the Kronecker-sum of two graphs, each given as a $p\times p$ matrix $\bThe$ 
and a $q \times q$ matrix $\bPsi$, modeling dependencies across features and across samples, respectively:
\begin{equation*}
    \bOmega = \bThe\oplus\bPsi = \bThe\otimes\bm{I}_q + \bm{I}_p\otimes\bPsi,
\end{equation*}
where $\otimes$ is the Kronecker-product operator and $\bI_a$ is an $a\times a$ identity matrix.
A non-zero value in the $(i,j)$th element $[\bThe]_{ij}$, $[\bPsi]_{ij}$, and $[\bOmega]_{ij}$
implies an edge between the $i$th and $j$th nodes in the corresponding graph.
The two graphs $\bThe$ and $\bPsi$ are constrained to form
a positive-definite Kronecker-sum space as follows:
\begin{equation}\label{def:ksspace}
    \mathbb{KS}^{p,q}=
	\{ (\bThe,\bPsi)| \bThe\in\mathbb{S}^p, \bPsi\in\mathbb{S}^q, \bOmega = \bThe\oplus\bPsi \in \mathbb{S}^{pq}_{++}\}.
\end{equation}
Then, $\bOmega$ models a graph over $pq$ nodes, where each node is associated with an observation
for the given sample and feature pair.
For the rest of this paper, we assume zero mean, i.e., $\tvec(\bm{M})=\bm{0}$ and focus on the 
inverse covariance structure.

Compared to the Kronecker product $\bThe\otimes \bPsi$, 
one main advantage of the Kronecker sum 
is that as the Cartesian product of the two graphs $\bThe$ and $\bPsi$, 
it leads to a more intuitively appealing sparse structure in the graph $\bOmega$ 
(Figure \ref{fig:collapseW}(a)). 
In $\bOmega$ with the Kronecker sum,
the sample graph $\bPsi$ is used only within the same feature
(diagonal blocks in $\bOmega$ in Figure \ref{fig:collapseW}(a)) 
and the feature graph $\bThe$ is used only within the same sample
(diagonals of off-diagonal blocks in $\bOmega$ in Figure \ref{fig:collapseW}(a)), 
whereas the Kronecker-product leads to a dense graph, 
where the sample graph $\bPsi$ and the feature graph $\bThe$ are used 
even across features and across sample samples.
Because of the sparse structure, 
the Kronecker-sum operator has been adopted in various other statistical methods to represent a 
sparse association between two sets of variables, such as
dependencies between genes and mutations in colorectal cancer \citep{Haupt2020} 
and between microRNAs and diseases \citep{Li2018}. 
It has been embedded in neural networks to model both row and 
column dependencies in a matrix~\citep{Zhang2018tensor,Gao2020} and 
in inputs for more general functions~\citep{Benzi2017}.


\begin{figure}[t!]
    \centering
    \begin{tabular}{cc}
        \begin{subfigure}{0.482\linewidth}
            \includegraphics[width=\linewidth]{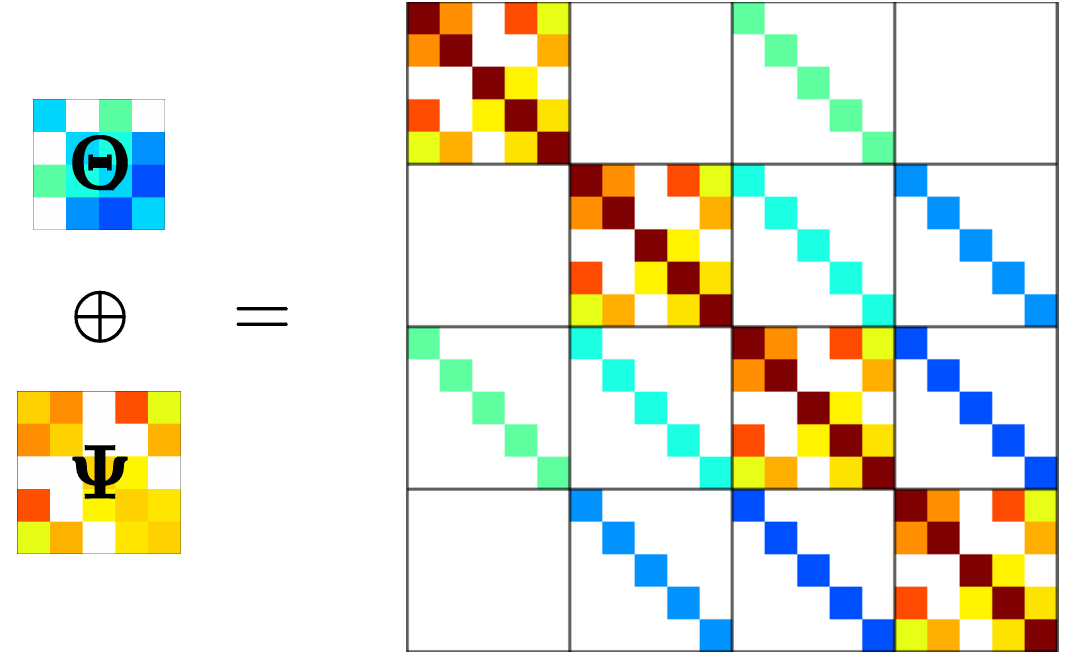}
            \caption*{$\qquad\qquad\qquad$(a)}
        \end{subfigure} &
        \begin{subfigure}{0.518\linewidth}
            \includegraphics[width=\linewidth]{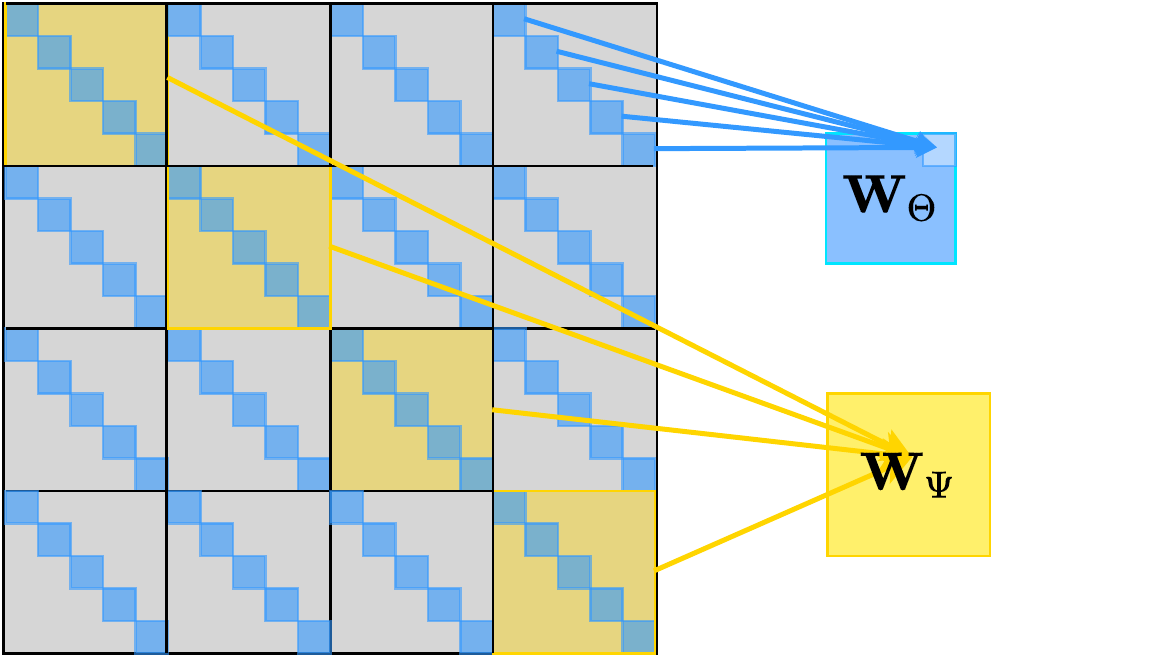}
            \caption*{(b)$\quad\qquad\qquad\qquad$}
        \end{subfigure}
    \end{tabular}
    \caption{Illustration of the Kronecker-sum inverse covariance matrix and gradient computation in EiGLasso.
    (a) $\bThe$ and $\bPsi$ (left) and their Kronecker sum $\bOmega=\bThe\oplus\bPsi$ (right).
    $\bThe$ appears in the diagonals of off-diagonal blocks of $\bOmega$, and $\bPsi$ appears 
    in the diagonal blocks of $\bOmega$. 
    (b) $\bW=\bOmega^{-1}$ (left) and $\bW_{\bThe}$ and $\bW_{\bPsi}$ in the gradient (right). 
    Computing $\bW_{\bThe}$ and $\bW_{\bPsi}$ amounts to
    collapsing $\bW$ based on the Kronecker-sum structure in Panel (a).
    The $(i,j)$th element of $\bW_{\bThe}$ is obtained by summing over the diagonals of the
    $(i,j)$th block (blue arrows), and
    $\bW_{\bPsi}$ is obtained by summing over the blocks on the diagonals of $\bW$ (yellow arrows).
    }
    \label{fig:collapseW}
\end{figure}

Given data $\{\bY^1,\,\ldots,\,\bY^n\}$, where $\bY^i \in \mathbb{R}^{q\times p}$ 
for $i=1,\,\ldots,\,n$ for $n$ observations, 
BiGLasso~\citep{biglasso} and TeraLasso~\citep{TeraLasso} obtained
a sparse estimate of $\bThe$ and $\bPsi$ 
by minimizing the $L_1$-regularized negative log-likelihood of data:
\begin{equation} \label{eq:obj}
    \argmin_{\bThe,\, \bPsi}  f(\bThe,\bPsi) \quad \textrm{subject to} \quad\bThe\oplus\bPsi\succ 0.
\end{equation}  
The objective $f(\bThe,\bPsi)$ is 
\begin{equation}
    f(\bThe,\bPsi)= g(\bThe,\bPsi) + h(\bThe,\bPsi), \nonumber
\end{equation}
with the smooth log-likelihood function 
$g(\bThe,\bPsi) = q\tr(\bm{S}\bThe) + p\tr(\bm{T}\bPsi) - \log|\bThe\oplus\bPsi|$ and
non-smooth penalty $h(\bThe,\bPsi) = q \gamma_{\bThe} \| \bThe \|_{1,\text{off}} + p \gamma_{\bPsi} \| \bPsi \|_{1,\text{off}}$,
given the sample covariances $\bm{S} = \frac{1}{nq}\sum_{i=1}^n\bY^{iT}\bY^i$ and 
$\bm{T} = \frac{1}{np}\sum_{i=1}^n\bY^i\bY^{iT}$, the regularization parameters 
$\gamma_{\bThe}$ and $\gamma_{\bPsi}$, and 
$\|\cdot\|_{1,\text{off}}$ 
for the $L_1$-regularization of the off-diagonal elements of the matrix.

One of the main challenges of solving Eq. \eqref{eq:obj} 
arises from the unidentifiable diagonals of $\bThe$ and $\bPsi$ given $\bOmega$.
Although \citet{TeraLasso} showed that the objective in 
Eq. \eqref{eq:obj} has a unique global optimum  with respect to $\bOmega=\bThe\oplus\bPsi$, 
this does not imply the uniqueness of the pair $(\bThe,\bPsi)$ given $\bOmega$,
since the set in Eq. \eqref{def:ksspace} forms equivalence classes 
\begin{equation}\label{eqn:noniddiag}
    S[\bOmega] = \{ (\bThe,\bPsi)\in \mathbb{KS}^{p,q}| 
	\left(\bThe-c\bm{I}_p\right)\oplus\left(\bPsi+c\bm{I}_q\right)=\bOmega,
		c\in \mathbb{R}\}.
\end{equation}
Thus, $\bThe$ and $\bPsi$ whose diagonals are modified by a constant $c$ and $-c$ 
lead to the same $\bOmega$. 
While BiGLasso did not estimate the diagonals of $\bThe$ and $\bPsi$, 
TeraLasso formed an identifiable re-parameterization
 $\bOmega = \bar{\bThe} \oplus \bar{\bPsi} + \tau \bm{I}_{pq}$ with
$\tau = \frac{\tr(\bThe \oplus \bPsi)}{pq}$, where
$\bar{\bThe}$ and $\bar{\bPsi}$ are forced to have zero traces as follows:
\begin{align}\label{eq:reparam}
\bar{\bThe}=\bThe -\frac{\tr(\bThe)}{p}\bI_p\qquad\text{and}\qquad\bar{\bPsi}=\bPsi -\frac{\tr(\bPsi)}{q}\bI_q,
\end{align}
Then, in each iteration of a gradient descent method,
TeraLasso projected the gradient to this re-parameterized space and distributed 
$\tau$ to $\bThe$ and $\bPsi$ as $\bar{\bThe} \gets \bar{\bThe} + \frac{1}{2}\tau\bI$ and 
$\bar{\bPsi} \gets \bar{\bPsi} + \frac{1}{2}\tau\bI$. 
For efficient computation and projection of the gradient, TeraLasso
employed an eigendecomposition of $\bThe$ and $\bPsi$:
$\bThe=\bm{Q}_{\bThe}\bm{\Lambda}_{\bThe}\bm{Q}_{\bThe}^T$ with
the eigenvector matrix $\bm{Q}_{\bThe}$ 
and the diagonal eigenvalue matrix $\bm{\Lambda}_{\bThe}$, and similarly, 
$\bPsi = \bm{Q}_{\bPsi} \bm{\Lambda}_{\bPsi} \bm{Q}_{\bPsi}^T$.
TeraLasso exploited the property that the eigendecomposition of the Kronecker sum of 
$\bThe$ and $\bPsi$ is 
\begin{equation}\label{eq:omegaeigdecomp}
    \bThe \oplus \bPsi = (\bm{Q}_{\bThe}\otimes\bm{Q}_{\bPsi})(\bm{\Lambda}_{\bThe}\oplus\bm{\Lambda}_{\bPsi})(\bm{Q}_{\bThe}\otimes\bm{Q}_{\bPsi})^T.
\end{equation}
Then, the inverse of $\bThe \oplus \bPsi$ can be obtained efficiently 
by inverting the diagonal eigenvalue matrix $\bLambda_{\bThe} \oplus \bLambda_{\bPsi}$:
\begin{equation}\label{eq:Weigdecomp}
    \bW = (\bThe \oplus \bPsi)^{-1} = (\bm{Q}_{\bThe}\otimes\bm{Q}_{\bPsi})(\bm{\Lambda}_{\bThe}\oplus\bm{\Lambda}_{\bPsi})^{-1}(\bm{Q}_{\bThe}\otimes\bm{Q}_{\bPsi})^T.
\end{equation}
While TeraLasso is significantly faster than BiGLasso, its scalability is limited to graphs with
a few hundred nodes. 

\section{EiGLasso}\label{sec:eiglasso}

We introduce EiGLasso, an efficient method for estimating a sparse Kronecker-sum inverse covariance matrix. 
We begin by describing a simple new scheme for identifying the unidentifiable parameters in Section \ref{sec:tr}. 
We introduce EiGLasso, Newton's method with the exact Hessian in Section \ref{sec:exactH} 
and its modification with approximate Hessian in Section \ref{sec:approxH}. 
In Section \ref{sec:identify}, we describe a simple strategy for identifying the 
diagonal parameters during EiGLasso estimation.
In Section \ref{sec:block}, we discuss additional strategies for improving the computational efficiency.

\subsection{Identifying Parameters with Trace Ratio}\label{sec:tr}

Theorem \ref{thm:uniquepair} below provides a simple approach to identifying the diagonal elements 
of $\bThe$ and $\bPsi$, given $\bOmega$ and the ratio of the traces of $\bThe$ and $\bPsi$.

\begin{theorem}\label{thm:uniquepair}
    Assume that the trace ratio of $\bPsi$ to $\bThe$ is given as $\rho = \frac{\tr(\bPsi)}{\tr(\bThe)}$.        
    Then, a Kronecker-sum matrix $\bOmega=\bThe\oplus\bPsi$ can be mapped to a unique pair of symmetric matrices $(\bThe,\bPsi)$ with the diagonals of $\bThe$ and $\bPsi$ identified as
    \begin{equation}\label{eq:diagmapping}
    \begin{aligned}
        \diag(\bThe) &= \frac{1}{q}\left[\diag\left( \sum_{i=1}^q(\bI_p\otimes\bm{e}_{q,i})^T\bOmega(\bI_p\otimes\bm{e}_{q,i}) \right)  - \frac{\rho}{q + \rho p}\tr(\bOmega)\bI\right],\\
        \diag(\bPsi) &= \frac{1}{p}\left[\diag\left( \sum_{j=1}^p(\bm{e}_{p,j}\otimes\bI_q)^T\bOmega(\bm{e}_{p,j}\otimes\bI_q) \right)  - \frac{1}{q + \rho p}\tr(\bOmega)\bI\right],
    \end{aligned}
    \end{equation}
    where $\bm{e}_{q,i}$ is a $q\times 1$ one-hot vector with a single $1$ at the $i$th entry and $0$'s elsewhere.
\end{theorem}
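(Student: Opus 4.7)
My plan is to reduce the claim to a direct calculation using the block structure of $\bOmega=\bThe\otimes\bI_q+\bI_p\otimes\bPsi$, followed by inverting a $2\times 2$ linear system for the two traces using the ratio $\rho$. The first step is to expand $(\bI_p\otimes\bm{e}_{q,i})^T\bOmega(\bI_p\otimes\bm{e}_{q,i})$ via the mixed-product property of the Kronecker product: applied termwise to the Kronecker-sum decomposition of $\bOmega$, this simplifies to $\bThe\,(\bm{e}_{q,i}^T\bm{e}_{q,i})+(\bm{e}_{q,i}^T\bPsi\bm{e}_{q,i})\bI_p=\bThe+[\bPsi]_{ii}\bI_p$. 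Summing over $i=1,\ldots,q$ then yields $q\bThe+\tr(\bPsi)\bI_p$, whose diagonal is $q\diag(\bThe)+\tr(\bPsi)\bI_p$. An analogous calculation with $(\bm{e}_{p,j}\otimes\bI_q)$ produces $p\diag(\bPsi)+\tr(\bThe)\bI_q$ for the companion identity.

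Next, I would combine the standard identity $\tr(\bThe\oplus\bPsi)=q\tr(\bThe)+p\tr(\bPsi)$ with $\tr(\bPsi)=\rho\,\tr(\bThe)$ to solve for $\tr(\bThe)=\tr(\bOmega)/(q+\rho p)$ and $\tr(\bPsi)=\rho\,\tr(\bOmega)/(q+\rho p)$; the denominator is automatically nonzero because the positive definiteness of $\bOmega$ gives $\tr(\bOmega)=\tr(\bThe)(q+\rho p)>0$. Substituting these two expressions into the diagonal identities from the first step and dividing by $q$ and $p$, respectively, recovers Eq.~\eqref{eq:diagmapping} exactly.

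For uniqueness, I would appeal to the description of the equivalence class in Eq.~\eqref{eqn:noniddiag}: any other representative has the form $(\bThe-c\bI_p,\bPsi+c\bI_q)$ for some $c\in\mathbb{R}$, and imposing $\tr(\bPsi+c\bI_q)=\rho\,\tr(\bThe-c\bI_p)$ reduces to $c(q+\rho p)=0$, forcing $c=0$. The calculation itself is essentially bookkeeping, so the main obstacle I anticipate is simply carrying out the block-vector contractions cleanly and recognizing that the single scalar equation contributed by the ratio $\rho$ is precisely what is required to eliminate the one-parameter shift ambiguity indexed by $c$.
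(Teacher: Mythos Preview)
Your proposal is correct and follows essentially the same route as the paper: both expand the block contractions of $\bOmega$ to obtain $q\diag(\bThe)+\tr(\bPsi)\bI_p$ (and its companion), use $\tr(\bOmega)=q\tr(\bThe)+p\tr(\bPsi)$ together with the ratio $\rho$ to solve for the individual traces, and then argue uniqueness by showing the shift $c$ must vanish. Your write-up is in fact slightly more explicit than the paper's, since you justify $q+\rho p\neq 0$ via $\tr(\bOmega)>0$, a point the paper leaves implicit.
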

\begin{proof}
    Given $\rho=\frac{\tr(\bPsi)}{\tr(\bThe)}$, $\bThe$ and $\bPsi$ are unique, since for any $c\in\mathbb{R}\setminus\{0\}$,
        $\frac{\tr(\bPsi)}{\tr(\bThe)} \neq \frac{\tr(\bPsi+c\bI_q)}{\tr(\bThe-c\bI_p)}=\frac{\tr(\bPsi) + cq}{\tr(\bThe) - cp}$. 
    From the definition of Kronecker sum, we notice
    \begin{align}\label{eq:omegaii}
        \sum_{i=(j-1)q + 1}^{jq} [\bOmega]_{ii} &= q[\bThe]_{jj} + \tr(\bPsi),
    \end{align}
and 
    \begin{equation}\label{eq:tromega}
           \tr(\bOmega) =  q\tr(\bThe) + p\tr(\bPsi).
    \end{equation}
    From the trace ratio and Eq. \eqref{eq:tromega}, we have $\tr(\bPsi) = \frac{\rho}{q + \rho p}\tr(\bOmega)$. 
    Plugging this into Eq. \eqref{eq:omegaii}, we identify $[\bThe]_{jj}$ as 
    \begin{align*}
        [\bThe]_{jj} &= \frac{1}{q}\left(\sum_{i=(j-1)q + 1}^{jq} [\bOmega]_{ii} - \frac{\rho}{q + \rho p}\tr(\bOmega)\right).
    \end{align*}
    The case for $\bPsi$ can be shown in a similar way.
\end{proof}

Theorem \ref{thm:uniquepair} suggests that it is possible to identify the diagonals 
of $\bThe$ and $\bPsi$ by solving Eq. \eqref{eq:obj} with the linear constraint 
$\frac{\tr(\bPsi)}{\tr(\bThe)}=\rho$:
\begin{align}
    \argmin_{\bThe,\bPsi}  f(\bThe,\bPsi)  
    \quad \textrm{subject to} \quad\bThe\oplus\bPsi\succ0, \;
\bm{R}^T
    \begin{bmatrix}
        \tvec(\bThe)\\
        \tvec(\bPsi)
    \end{bmatrix} = 0,    \label{eq:objrho}
\end{align} 
where the second constraint with 
$\bm{R} = \begin{bmatrix} 
-\rho\tvec(\bI_p)\\
\tvec(\bI_q)
\end{bmatrix}$
is equivalent to $\frac{\tr(\bPsi)}{\tr(\bThe)}=\rho$.
To handle this additional constraint explicitly, the substitution method or Newton's method 
for equality-constrained optimization problem could be used \citep{cvxbook}. 
Instead, in Sections \ref{sec:exactH} and \ref{sec:identify}, 
we show that in EiGLasso, because of the special problem structure, 
it is sufficient to solve Eq. \eqref{eq:obj} with Newton's method, ignoring the equality constraint, 
and to adjust the diagonals of $\bThe$ and $\bPsi$ once after convergence to satisfy the constraint.

\subsection{EiGLasso with Exact Hessian} \label{sec:exactH}

We introduce EiGLasso for an efficient estimation of sparse $\bThe$ and $\bPsi$ that form a Kronecker-sum inverse covariance matrix. 
We adopt the framework of QUIC~\citep{quic}, Newton's method for estimating
sparse Gaussian graphical models. In each iteration, QUIC found a Newton direction 
by minimizing the $L_1$-regularized second-order approximation
of the objective, and updated the parameters given this Newton direction and the step size
found by backtracking line search \citep{bertsekas1995,Tseng2009}. 
Using the same strategy, EiGLasso finds Newton directions, $D_{\bThe}$ for $\bThe$ and $D_{\bPsi}$ for $\bPsi$, 
by minimizing the following second-order approximation of the smooth part $g(\bThe,\bPsi)$ of 
the objective in Eq. \eqref{eq:objrho} with the $L_1$-regularization:
\begin{align}
    (D_{\bThe},D_{\bPsi}) = \argmin_{\Delta_{\bThe},\Delta_{\bPsi}}\; \hat{g}(\Delta_{\bThe},\Delta_{\bPsi}) + h(\bThe + \Delta_{\bThe}, \bPsi + \Delta_{\bPsi}) \label{eq:newton}\\
    \text{subject to } \bm{R}^T
    \begin{bmatrix}
        \tvec(\Delta_{\bThe})\\
        \tvec(\Delta_{\bPsi})
    \end{bmatrix} = 0,
    \nonumber
\end{align}
where
\begin{equation*}\label{eq:g2compact}
    \hat{g}(\Delta_{\bThe}, \Delta_{\bPsi}) = \tvec(\bG)^T
    \begin{bmatrix}
        \tvec(\Delta_{\bThe})\\
        \tvec(\Delta_{\bPsi})
    \end{bmatrix} + \frac{1}{2}
    \begin{bmatrix}
        \tvec(\Delta_{\bThe})\\
        \tvec(\Delta_{\bPsi})
    \end{bmatrix}^T
    \bH
     \begin{bmatrix}
        \tvec(\Delta_{\bThe})\\
        \tvec(\Delta_{\bPsi})
    \end{bmatrix}
\end{equation*}
with the gradient $\bG$ and Hessian $\bH$ 
\begin{gather}
    \tvec(\bG) =
        \begin{bmatrix}
            \tvec\left(\bG_{\bThe} \right)\\ 
            \tvec\left(\bG_{\bPsi} \right)
        \end{bmatrix}
        =
        \begin{bmatrix}
            \tvec\left(\nabla_{\bThe} g(\bThe,\bPsi)\right)\\ 
            \tvec\left(\nabla_{\bPsi} g(\bThe,\bPsi)\right)
        \end{bmatrix},
    \label{eq:compactg}
    \\
    \bH =
        \begin{bmatrix}
            \bH_{\bThe}  & \bH_{\bThe\bPsi}\\
            \bH_{\bThe\bPsi}^T  & \bH_{\bPsi}
        \end{bmatrix}
        =
        \begin{bmatrix}
            \nabla^2_{\bThe} g(\bThe,\bPsi) & \nabla_{\bThe}\nabla_{\bPsi} g(\bThe,\bPsi)\\
            \nabla_{\bThe}\nabla_{\bPsi} g(\bThe,\bPsi)^T & \nabla^2_{\bPsi} g(\bThe,\bPsi)
        \end{bmatrix}.
    \label{eq:compacth}
\end{gather}
As we will show in Section \ref{sec:identify}, the objective above can be optimized without being
concerned about the equality constraint. Thus, 
the coordinate descent optimization can be used to solve Eq. \eqref{eq:newton} as in QUIC. 
Given the descent directions $D_{\bThe}$ and $D_{\bPsi}$, 
we update the parameters as $\bThe \gets \bThe + \alpha D_{\bThe}$ and $\bPsi \gets \bPsi + \alpha D_{\bPsi}$, 
with the step size $\alpha$ found by the line-search method in Algorithm \ref{alg:linesearch}.

Two key challenges arise in a direct application of QUIC to Kronecker-sum inverse covariance estimation.
First, as we detail in the next section, the gradient $\bG$ and Hessian $\bH$ 
involves the computation of the $pq\times pq$ matrix $\bW=(\bThe \oplus \bPsi)^{-1}$ significantly larger
than a $p\times p$ matrix required for a single graph in QUIC. 
Second, the constraint on the trace ratio in Eq. \eqref{eq:newton} needs to be considered 
to handle the unidentifiability of the diagonals of $\bThe$ and $\bPsi$ 
during the optimization process. 
In the rest of Section \ref{sec:eiglasso}, we describe how we address these challenges by leveraging 
the eigen structure of the model, and provide the details of the EiGLasso optimization outlined 
in Algorithm \ref{alg:eiglasso}.

\begin{algorithm}[t!] 
\caption{Line Search}
\label{alg:linesearch}
    \DontPrintSemicolon
    \SetKwInOut{Input}{input}\SetKwInOut{Output}{output}
    \Input{$0<\sigma<0.5$, $0<\beta<1$}
    \Output{step size $\alpha$}
    
    Initialize $\alpha = 1$\;
    
    \For{$i=0,1,\dots$}{
        Check the following conditions:\;
        \begin{enumerate}
            \item $(\bThe + \alpha D_{\bThe})\oplus(\bPsi + \alpha D_{\bPsi})\succ 0$,\;
            \item $f(\bThe + \alpha D_{\bThe},\bPsi + \alpha D_{\bPsi})\le f(\bThe,\bPsi) + \alpha\sigma\delta$, where 
    $\delta = \tvec(D)^T\tvec(\bG) + h(\bThe+D_{\bThe},\bPsi+D_{\bPsi}) - h(\bThe,\bPsi)$.
\;
        \end{enumerate}
        If satisfied, break. If not, $\alpha \leftarrow \alpha\beta$.\;
    }
\end{algorithm}

\begin{algorithm}[t] 
\caption{EiGLasso}
\label{alg:eiglasso}
    \DontPrintSemicolon
    \SetKwInOut{Input}{input}\SetKwInOut{Output}{output}
    \Input{{\bf Data} Sample covariances $\bm{S}=\frac{1}{nq}\sum_{i=1}^n\bY^{iT}\bY^i$ and $\bm{T}=\frac{1}{np}\sum_{i=1}^n\bY^{i}\bY^{iT}$\\
    {\bf Hyperparameters for} regularization $\gamma_{\bThe}, \gamma_{\bPsi}$, Hessian approximation $K$, \\\quad line search $\sigma$, $0<\beta<1$, and trace ratio $\rho=\tr(\bPsi)/\tr(\bThe)$}
    
    \Output{Parameters $\bThe, \bPsi$}
    
    Initialize $\bThe \leftarrow \bI_p, \bPsi \leftarrow \bI_q$. 
    
    \For{$t=0,1,\dots$}{
        
        Eigendecompose $\bThe$ and $\bPsi$.\;
        
        Compute $\bG$ and $\bH$.\;
        
        Determine active sets $\mathcal{A}_{\bThe}$ and $\mathcal{A}_{\bPsi}$.\; 
        
        Compute $(D_{\bThe},D_{\bPsi})$ via coordinate descent over the active sets.\;
        
        Use the Armijo rule to compute a step-size $\alpha$ as in Algorithm \ref{alg:linesearch}.
        
	Update $\bThe \leftarrow \bThe + \alpha D_{\bThe}$, $\bPsi \leftarrow \bPsi + \alpha D_{\bPsi}$.
        
        Check convergence.\;
   }
   Adjust the diagonal elements of $\bThe$ and $\bPsi$ according to the trace ratio $\rho$.\;
\end{algorithm}

\subsubsection{Efficient Computation of Gradient and Hessian via Eigendecomposition}

In Lemma \ref{lem:gh} below, we provide the form of the gradient $\bG$ in Eq. \eqref{eq:compactg} and 
Hessian $\bH$ in Eq. \eqref{eq:compacth}. 
We show that $\bG$ can be represented in a significantly more compact form 
than what has been previously presented in \citet{biglasso} and \citet{TeraLasso}. 
We exploit this compact representation to further reduce the computation time  
via eigendecomposition of $\bThe$ and $\bPsi$ in Theorem \ref{thm:gheigcomp}.

\begin{lemma}\label{lem:gh}
The gradient $\bG$ in Eq. \eqref{eq:compactg} is given in terms of $\bW=(\bThe\oplus\bPsi)^{-1}$
as 
\begin{align}\label{eq:partialmatrix0}
            \bG_{\bThe} = q\bm{S} - \bW_{\bThe}, 
            \quad
            \bG_{\bPsi} = p\bm{T} - \bW_{\bPsi}, 
\end{align}
where
\begin{align}\label{eq:partialmatrixW}
            \bW_{\bThe} =  \sum_{i=1}^q(\bI_p\otimes\bm{e}_{q,i})^T\bW(\bI_p\otimes\bm{e}_{q,i}), \quad
            \bW_{\bPsi} = \sum_{i=1}^p(\bm{e}_{p,i}\otimes\bI_q)^T\bW(\bm{e}_{p,i}\otimes\bI_q).
\end{align}
The Hessian $\bH$ in Eq. \eqref{eq:compacth} is given as
\begin{equation}\label{eq:HcollapseWW}
    \bH = \begin{bmatrix}
            \bH_{\bThe}  & \bH_{\bThe\bPsi}\\
            \bH_{\bThe\bPsi}^T  & \bH_{\bPsi}
        \end{bmatrix}
    = \begin{bmatrix}
    \bP_{\bThe}^T(\bW\otimes\bW)\bP_{\bThe} & \bP_{\bThe}^T(\bW\otimes\bW)\bP_{\bPsi}  \\
    \bP_{\bPsi}^T(\bW\otimes\bW)\bP_{\bThe} & \bP_{\bPsi}^T(\bW\otimes\bW)\bP_{\bPsi}
    \end{bmatrix}
    = \bP^T(\bW\otimes\bW)\bP,
\end{equation}
where $\bP=\left[\bP_{\bThe},\bP_{\bPsi}\right]$ is a $(p^2q^2)\times (p^2+q^2)$ matrix with
$\bP_{\bThe}=\sum_{i=1}^q \bI_p\otimes\bm{e}_{q,i}\otimes\bI_p\otimes\bm{e}_{q,i}$ and $ 
    \bP_{\bPsi} = \sum_{j=1}^p \bm{e}_{p,j}\otimes\bI_q\otimes\bm{e}_{p,j}\otimes\bI_q$.
\end{lemma}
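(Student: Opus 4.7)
The plan is to derive the gradient first by standard matrix calculus, then the Hessian by combining the well-known Hessian of $-\log|\bOmega|$ with the chain rule through the linear map $(\bThe,\bPsi)\mapsto \bOmega$.

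For the gradient, since the trace terms are linear, we only need $\nabla_{\bThe}\log|\bThe\oplus\bPsi|$ and $\nabla_{\bPsi}\log|\bThe\oplus\bPsi|$. Using the standard identity $d\log|\bOmega|=\tr(\bW\,d\bOmega)$ together with $\partial\bOmega/\partial[\bThe]_{ij}=\bm{e}_{p,i}\bm{e}_{p,j}^T\otimes\bI_q$, I would show that
\[
[\nabla_{\bThe}\log|\bOmega|]_{ij}=\tr\!\left(\bW\,(\bm{e}_{p,i}\bm{e}_{p,j}^T\otimes\bI_q)\right)=\sum_{m=1}^{q}[\bW]_{(i-1)q+m,\,(j-1)q+m},
\]
so that the $(i,j)$ entry is the sum of the diagonal of the $(i,j)$ block of $\bW$. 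To match the stated compact form, I would then recognize that the right multiplication by $\bI_p\otimes\bm{e}_{q,i}$ extracts rows/columns at stride $q$, so $\sum_{i=1}^q(\bI_p\otimes\bm{e}_{q,i})^T\bW(\bI_p\otimes\bm{e}_{q,i})$ reproduces exactly this block-diagonal sum. The expression for $\bW_{\bPsi}$ follows symmetrically by swapping the roles of the two factors in the Kronecker sum.

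For the Hessian, I would first show that $\bOmega$ depends linearly on $(\bThe,\bPsi)$ via
\[
\tvec(\bOmega)=\bP_{\bThe}\tvec(\bThe)+\bP_{\bPsi}\tvec(\bPsi),
\]
so by the chain rule it suffices to combine $\nabla^2_{\tvec(\bOmega)}(-\log|\bOmega|)=\bW\otimes\bW$ (which I would derive from $d\bW=-\bW\,d\bOmega\,\bW$ and $\tvec(\bW\,d\bOmega\,\bW)=(\bW\otimes\bW)\,\tvec(d\bOmega)$) with this linear map to obtain $\bH=\bP^T(\bW\otimes\bW)\bP$ in the stated $2\times 2$ block form.

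The main obstacle, and the step I would spend the most care on, is verifying the explicit expressions for $\bP_{\bThe}$ and $\bP_{\bPsi}$. For this I would use the resolution of identity $\bI_q=\sum_{i=1}^q\bm{e}_{q,i}\bm{e}_{q,i}^T$ to write
\[
\bThe\otimes\bI_q=\sum_{i=1}^{q}(\bI_p\otimes\bm{e}_{q,i})\,\bThe\,(\bI_p\otimes\bm{e}_{q,i})^T,
\]
where I would verify the inner identity from $(\bI_p\otimes\bm{e}_{q,i})\bThe=\bThe\otimes\bm{e}_{q,i}$ using $(A\otimes B)(C\otimes D)=(AC)\otimes(BD)$. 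Vectorizing both sides and applying $\tvec(XYZ^T)=(Z\otimes X)\tvec(Y)$, followed by associativity of $\otimes$, yields $\bP_{\bThe}=\sum_{i=1}^q \bI_p\otimes\bm{e}_{q,i}\otimes\bI_p\otimes\bm{e}_{q,i}$; the argument for $\bP_{\bPsi}$ is identical after swapping $(p,q)$ and the Kronecker-factor order. Plugging these into the chain-rule expression gives the claimed block Hessian.
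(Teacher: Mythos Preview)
Your argument is correct. For the gradient, your derivation is essentially the same as the paper's: both start from $\tr\big(\bW(\bE_{p,ij}\otimes\bI_q)\big)$ and expand $\bI_q=\sum_l\bm{e}_{q,l}\bm{e}_{q,l}^T$ to reach the collapse formula.

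For the Hessian, however, you take a genuinely different route. The paper works entrywise: it writes $[\bH_{\bThe}]_{jp+i,\,lp+k}=\tr\big(\bW(\bE_{p,ij}\otimes\bI_q)\bW(\bE_{p,kl}\otimes\bI_q)\big)$, expands each $\bI_q$ as a sum over $\bm{e}_{q,r}\bm{e}_{q,r}^T$, and then invokes the identity $\tr(\bm{A}\bm{B}\bm{A}\bm{C})=\tvec(\bm{B})^T(\bm{A}\otimes\bm{A})\tvec(\bm{C})$ to assemble $\bP_{\bThe}^T(\bW\otimes\bW)\bP_{\bThe}$ from its scalar entries. You instead recognize that $\tvec(\bOmega)=\bP_{\bThe}\tvec(\bThe)+\bP_{\bPsi}\tvec(\bPsi)$ is linear, so the full block Hessian $\bH=\bP^T(\bW\otimes\bW)\bP$ drops out in one step from the chain rule once you know $\nabla^2_{\tvec(\bOmega)}(-\log|\bOmega|)=\bW\otimes\bW$. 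Your approach is shorter and makes the structure $\bH=\bP^T(\bW\otimes\bW)\bP$ conceptually transparent (it is just the Jacobian of a linear map conjugating the inner Hessian), at the cost of importing the vectorized Hessian of the log-determinant as a prerequisite; the paper's approach is more self-contained but requires more index manipulation, and it treats $\bH_{\bThe}$, $\bH_{\bPsi}$, $\bH_{\bThe\bPsi}$ separately rather than obtaining all four blocks at once. Your verification of the explicit form of $\bP_{\bThe}$ via $\bThe\otimes\bI_q=\sum_i(\bI_p\otimes\bm{e}_{q,i})\bThe(\bI_p\otimes\bm{e}_{q,i})^T$ followed by $\tvec(XYZ^T)=(Z\otimes X)\tvec(Y)$ is correct and is exactly the step that ties your chain-rule argument to the specific $\bP_{\bThe}$ in the statement.
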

\begin{proof}
    Let $\bE_{p, ij}=\bm{e}_{p,i}\bm{e}_{p,j}^T$ be a $p\times p$ one-hot matrix with $1$ at 
    the $(i,j)$th element and zero elsewhere. Then, for the gradient $\bG$, we have
    \begin{align*}
         [\bG_{\bThe}]_{ij}  &= q[\bm{S}]_{ij} - \tr\left(\bW(\bE_{p, ij}\otimes\bI_q)\right) \\
         &= q[\bm{S}]_{ij} - \tr\big(\bW(\bE_{p, ij}\otimes\sum_l\bm{e}_{q,l}\bm{e}_{q,l}^T)\big) \\
         &= q[\bm{S}]_{ij} - \tr\big(\sum_l\bW(\bI_p\otimes\bm{e}_{q,l})(\bE_{p,ij}\otimes1)(\bI_p\otimes\bm{e}_{q,l})^T\big) \\
         &= q[\bm{S}]_{ij} - \tr\big(\sum_l(\bI_p\otimes\bm{e}_{q,l})^T\bW(\bI_p\otimes\bm{e}_{q,l})\bE_{p,ij}\big).
    \end{align*}         
    By using $\tr(\bm{A}\bE_{p,ij}) = \bm{A}_{ij}$, we collect the elements $[\bG_{\bThe}]_{ij}$ to obtain Eqs. \eqref{eq:partialmatrix0} and \eqref{eq:partialmatrixW}. The case for $\bG_{\bPsi}$ can be shown similarly. Now, for the Hessian, we have
    \begin{align*}
        [\bH_{\bThe}&]_{jp+i,lp+k}
        = \tr\left(\bW(\bE_{p,ij}\otimes\bI_q)\bW(\bE_{p,kl}\otimes\bI_q)\right)\\
        &=\tr\left(\left[\sum_r  \bW(\bI_p\otimes\bm{e}_{q,r})\bE_{p,ij}(\bI_p\otimes\bm{e}_{q,r})^T \right] \left[\sum_s\bW(\bI_p\otimes\bm{e}_{q,s})\bE_{p,kl}(\bI_p\otimes\bm{e}_{q,s})^T\right]  \right)\\
        &= \tr\left( \sum_{r,s} (\bI_p\otimes\bm{e}_{q,s})^T\bW(\bI_p\otimes\bm{e}_{q,r})\bE_{p,ij}(\bI_p\otimes\bm{e}_{q,r})^T\bW(\bI_p\otimes\bm{e}_{q,s})\bE_{p,kl} \right),\\
        \intertext{by $\tr(\bm{A}\bm{B}\bm{A}\bm{C}) = \tvec(\bm{B})^T(\bm{A}\otimes\bm{A})\tvec(\bm{C})$ for symmetric matrices $\bm{A}$, $\bm{B}$, and $\bm{C}$, and $\tvec(\bE_{p,ij}) = \bm{e}_{p^2,jp+i}$,}
        &= \bm{e}_{p^2,jp+i}^T \left( \sum_{r,s} (\bI_p\otimes\bm{e}_{q,r})^T\bW(\bI_p\otimes\bm{e}_{q,s}) \otimes (\bI_p\otimes\bm{e}_{q,r})^T\bW(\bI_p\otimes\bm{e}_{q,s}) \right) \bm{e}_{p^2,lp+k}\\
        &= \bm{e}_{p^2,jp+i}^T \left( \sum_{r,s} (\bI_p\otimes\bm{e}_{q,r}\otimes\bI_p\otimes\bm{e}_{q,r})^T(\bW\otimes\bW)(\bI_p\otimes\bm{e}_{q,s}\otimes\bI_p\otimes\bm{e}_{q,s}) \right) \bm{e}_{p^2,lp+k}.
    \end{align*}
We collect the elements into a matrix 
\begin{align}
 \bH_{\bThe} =  \bigg(\sum_{r=1}^q\bI_p\otimes\bm{e}_{q,r}\otimes\bI_p\otimes\bm{e}_{q,r}\bigg)^T(\bW\otimes\bW)\bigg(\sum_{s=1}^q \bI_p\otimes\bm{e}_{q,s}\otimes\bI_p\otimes\bm{e}_{q,s}\bigg). \nonumber
\end{align}    
The cases for $\bH_{\bPsi}$ and $\bH_{\bThe\bPsi}$ can be shown in a similar way.
\end{proof}

According to Lemma \ref{lem:gh}, the gradient $\bG$ is obtained by collapsing
$\bW$ as in Eq. \eqref{eq:partialmatrixW} and 
the Hessian $\bH$ is obtained by collapsing $\bW \otimes \bW$ as in Eq. \eqref{eq:HcollapseWW}.
This collapse is illustrated in Figure \ref{fig:collapseW}(b) for the gradient. 
While the Kronecker sum inflates $\bThe$ and $\bPsi$ into $\bOmega$ (Figure \ref{fig:collapseW}(a)), 
this inflated structure gets deflated, 
when computing $\bW_{\bThe}$ and $\bW_{\bPsi}$ in the gradients from $\bW=\bOmega^{-1}$ 
(Figure \ref{fig:collapseW}(b)). 
This deflation can be viewed as applying the mask $\One_{p} \oplus \One_{q}$,
where $\One_{a}$ is a $a\times a$ matrix of one's, to $\bW$ 
to obtain $\bW^{\text{masked}}$ such that $[\bW^{\text{masked}}]_{ij} =[\bW]_{ij}$  
if $[\One_{p} \oplus \One_{q}]_{ij} \neq 0$, otherwise $[\bW^{\text{masked}}]_{ij} = 0 $,
and replacing $\bW$ in Eq. \eqref{eq:partialmatrixW} with $\bW^{\text{masked}}$. 
In other words, only the non-zero elements of $\bW^{\text{masked}}$ 
contribute towards $\bH_{\bThe}$ (blue in Figure \ref{fig:collapseW}(b))
and $\bH_{\bPsi}$ (yellow in Figure \ref{fig:collapseW}(b)). 
The collapse of $\bW \otimes \bW$ in Eq. \eqref{eq:HcollapseWW} to obtain the Hessian
can be viewed as the same type of deflation applied twice to $\bW \otimes \bW$ (details in Appendix A).

Lemma \ref{lem:gh} reveals the challenge in computing $\bG$ and $\bH$ 
in a direct application of QUIC to the Kronecker-sum model: $\bG$
involves computing the large $pq\times pq$ matrix $\bW$ and $\bH$ involves
computing the even larger $p^2q^2\times p^2q^2$ matrix $\bW \otimes \bW$.
In Theorem \ref{thm:gheigcomp} below, we show that $\bG$ and $\bH$ can be obtained
via eigendecomposition of $\bThe$ and $\bPsi$ without explicitly constructing $\bW$ and $\bW\otimes \bW$. 


\begin{theorem}\label{thm:gheigcomp}
Given the eigendecomposition  $\bThe=\bm{Q}_{\bThe}\bLambda_{\bThe}\bm{Q}_{\bThe}^T$  
and  $\bPsi=\bm{Q}_{\bPsi}\bLambda_{\bPsi}\bm{Q}_{\bPsi}^T$ 
with the $k$th smallest eigenvalue
$\lambda_{\bThe,k}$ and $\lambda_{\bPsi,k}$ in the $(k,k)$th element of  $\bLambda_{\bThe}$ and $\bLambda_{\bPsi}$,
the gradient in Eq. \eqref{eq:partialmatrix0} is given as
\begin{equation}\label{eq:partialmatrixeiggrad}
\begin{aligned}
     \bG_{\bThe} = q\bm{S} - \bm{Q}_{\bThe}\left(\sum_{k=1}^q 
     \bXi_{\bThe,k}
     \right)\bm{Q}_{\bThe}^T, \quad\quad
     \bG_{\bPsi} = p\bm{T} - \bm{Q}_{\bPsi}\left(\sum_{l=1}^p 
     \bXi_{\bPsi,l}
     \right)\bm{Q}_{\bPsi}^T,
\end{aligned}
\end{equation}
where
\begin{align}
    \bXi_{\bThe,k} = \left(\bm{\Lambda}_{\bThe} + \lambda_{\bPsi,k}\bm{I}_p\right)^{-1}, \quad\quad \bXi_{\bPsi,l} = \left(\bm{\Lambda}_{\bPsi} + \lambda_{\bThe,l}\bm{I}_q\right)^{-1}. \nonumber 
\end{align}
For the Hessian, we have
\begin{equation}
\!\bH = \begin{bmatrix}
            \bH_{\bThe} & \!\!\! \bH_{\bThe\bPsi}\\
    \bH_{\bThe\bPsi}^T & \!\!\! \bH_{\bPsi}
        \end{bmatrix}
    = \begin{bmatrix}
    \bm{Q}_{\bThe}\otimes\bm{Q}_{\bThe}&\!\!\!\!\bm{0}\\
    \bm{0}& \!\!\!\!\bm{Q}_{\bPsi}\otimes\bm{Q}_{\bPsi}
    \end{bmatrix}
    \begin{bmatrix}
    \bLambda_{\bH_{\bThe}} & \!\!\!\!\! \bm{\Upsilon} \\
    \bm{\Upsilon}^T & \!\!\!\!\! \bLambda_{\bH_{\bPsi}}
    \end{bmatrix}
    \begin{bmatrix}
    \bm{Q}_{\bThe}\otimes\bm{Q}_{\bThe}&\!\!\!\!\bm{0}\\
    \bm{0}&\!\!\!\!\bm{Q}_{\bPsi}\otimes\bm{Q}_{\bPsi}
    \end{bmatrix}^T\!\!\!,
\label{eq:partialmatrixeigH}
\end{equation}
where 
\begin{align}
    \bLambda_{\bH_{\bThe}} &= \sum_{k=1}^q \bXi_{\bThe,k} \otimes \bXi_{\bThe,k}, \quad
     \bLambda_{\bH_{\bPsi}} = \sum_{l=1}^p \bXi_{\bPsi,l} \otimes \bXi_{\bPsi,l}, \label{eq:Hcomp} \\
     \bm{\Upsilon} &= \tilde{\bm{I}}_{p^2,p}
        \begin{bmatrix}
        \lambda^2_{\bW,11} & \cdots & \lambda^2_{\bW,1q} \\
        \vdots & \ddots & \vdots \\
        \lambda^2_{\bW,p1} & \cdots & \lambda^2_{\bW,pq}
        \end{bmatrix}
        \tilde{\bm{I}}_{q,q^2}. \nonumber
\end{align}
$\tilde{\bm{I}}_{p^2,p}$ and $\tilde{\bm{I}}_{q,q^2}$ above are stretched 
identity matrices, $\tilde{\bm{I}}_{p^2,p} =
        \begin{bmatrix}
        \bE_{p,11}\;
        \bE_{p,22}\;
        \cdots\;
        \bE_{p,pp}
        \end{bmatrix}^T$ and $\tilde{\bm{I}}_{q,q^2} =
        \begin{bmatrix}
        \bE_{q,11}\;
        \bE_{q,22}\;
        \cdots\;
        \bE_{q,qq}
        \end{bmatrix}$, 
where $\bE_{p, ij}$ is a $p\times p$ one-hot matrix with $1$ at the $(i,j)$th element and zero elsewhere, 
and $\lambda_{\bW,lk} = (\lambda_{\bThe,l} + \lambda_{\bPsi,k})^{-1}$.
\end{theorem}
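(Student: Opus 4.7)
The plan is to prove the five claimed formulas (gradients $\bG_{\bThe}$ and $\bG_{\bPsi}$, Hessian diagonal blocks $\bH_{\bThe}$ and $\bH_{\bPsi}$, and the off-diagonal block $\bH_{\bThe\bPsi}$) by substituting the eigendecomposition
\[
\bW = (\bm{Q}_{\bThe}\otimes\bm{Q}_{\bPsi})\bLambda_{\bW}(\bm{Q}_{\bThe}\otimes\bm{Q}_{\bPsi})^T,
\]
with $\bLambda_{\bW}$ diagonal with entries $\lambda_{\bW,lk}=(\lambda_{\bThe,l}+\lambda_{\bPsi,k})^{-1}$, into the expressions from Lemma \ref{lem:gh}, and then simplifying using the mixed-product rule $(\bm{A}\otimes\bm{B})(\bm{C}\otimes\bm{D})=(\bm{A}\bm{C})\otimes(\bm{B}\bm{D})$ together with cyclic invariance of the trace. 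The key structural observation is that, because $\bLambda_{\bW}$ is fully diagonal, slicing it as a $p\times p$ block matrix of $q\times q$ blocks exposes the $l$th diagonal block as $\bXi_{\bPsi,l}$, while slicing the other way exposes the $k$th diagonal block as $\bXi_{\bThe,k}$. Every partial summation over one of these indices therefore reduces cleanly to the single-index matrices defined in the theorem statement.

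For $\bG_{\bThe}$, I would substitute the eigendecomposition into $\bW_{\bThe}=\sum_i(\bI_p\otimes\bm{e}_{q,i})^T\bW(\bI_p\otimes\bm{e}_{q,i})$ from Lemma \ref{lem:gh}. The mixed-product rule gives $(\bI_p\otimes\bm{e}_{q,i})^T(\bm{Q}_{\bThe}\otimes\bm{Q}_{\bPsi}) = \bm{Q}_{\bThe}\otimes(\bm{e}_{q,i}^T\bm{Q}_{\bPsi})$, so $\bm{Q}_{\bThe}(\,\cdot\,)\bm{Q}_{\bThe}^T$ factors out and the middle becomes a $p\times p$ matrix. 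Summing $i$ against $\bm{e}_{q,i}^T\bm{Q}_{\bPsi}$ on both sides of the diagonal $\bLambda_{\bW}$ contracts the $q$-dimensional direction using orthonormality of the rows of $\bm{Q}_{\bPsi}$, leaving $\sum_k\bXi_{\bThe,k}$ by the block-diagonal slicing above. The formula for $\bG_{\bPsi}$ follows by symmetry.

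For the Hessian blocks, I would compute entries via the trace formula from the proof of Lemma \ref{lem:gh}, namely $\tr(\bW\bm{M}_1\bW\bm{M}_2)$ with $\bm{M}_1,\bm{M}_2\in\{\bE_{p,ij}\otimes\bI_q,\bI_p\otimes\bE_{q,kl}\}$. Using cyclic invariance and the eigendecomposition, each entry becomes $\tr(\bLambda_{\bW}\bm{M}_1'\bLambda_{\bW}\bm{M}_2')$, where $\bm{M}_i'$ are the $\bm{Q}$-rotated one-hot factors. For $\bH_{\bThe}$, both $\bm{M}_i'$ retain the $(\,\cdot\,)\otimes\bI_q$ structure, so with diagonal $\bLambda_{\bW}$ the trace collapses to a double sum $\sum_{a,b,r}\lambda_{\bW,ar}\lambda_{\bW,br}$ weighted by four factors of $\bm{Q}_{\bThe}$. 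Matching this to $(\bm{Q}_{\bThe}\otimes\bm{Q}_{\bThe})\bLambda_{\bH_{\bThe}}(\bm{Q}_{\bThe}\otimes\bm{Q}_{\bThe})^T$ only requires noting that $\bXi_{\bThe,k}\otimes\bXi_{\bThe,k}$ is diagonal with entries $\lambda_{\bW,ak}\lambda_{\bW,bk}$, so $\bLambda_{\bH_{\bThe}}$ is diagonal with entry $\sum_k\lambda_{\bW,ak}\lambda_{\bW,bk}$, which equals the inner $r$-sum. The $\bH_{\bPsi}$ block is symmetric.

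The most delicate step is the cross block $\bH_{\bThe\bPsi}$: with the asymmetric pairing $\bm{M}_1'=\bm{A}\otimes\bI_q$ and $\bm{M}_2'=\bI_p\otimes\bm{B}$, the trace collapses to a \emph{single} diagonal-indexed sum $\sum_{a,r}\lambda_{\bW,ar}^2(\bm{Q}_{\bThe})_{ia}(\bm{Q}_{\bThe})_{ja}(\bm{Q}_{\bPsi})_{kr}(\bm{Q}_{\bPsi})_{lr}$. The main obstacle is unpacking the proposed $\bm{\Upsilon}$ to confirm it realizes this pattern, since its definition as a sandwich of stretched identities is opaque. Expanding the definitions of $\tilde{\bm{I}}_{p^2,p}$ and $\tilde{\bm{I}}_{q,q^2}$ yields $[\tilde{\bm{I}}_{p^2,p}]_{(a-1)p+b,c}=\delta_{ac}\delta_{bc}$ and $[\tilde{\bm{I}}_{q,q^2}]_{d,(r-1)q+s}=\delta_{dr}\delta_{sr}$, which shows that $\bm{\Upsilon}$ is nonzero only at position $((a-1)p+a,(r-1)q+r)$ with value $\lambda_{\bW,ar}^2$. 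Conjugating this sparse matrix by $\bm{Q}_{\bThe}\otimes\bm{Q}_{\bThe}$ on the left and $\bm{Q}_{\bPsi}\otimes\bm{Q}_{\bPsi}$ on the right then exactly reproduces the collapsed trace sum. The remaining work is routine index bookkeeping once the column-major $\tvec$ convention is fixed so that $(\bm{Q}_{\bThe}\otimes\bm{Q}_{\bThe})_{(j-1)p+i,(a-1)p+b}=(\bm{Q}_{\bThe})_{ja}(\bm{Q}_{\bThe})_{ib}$.
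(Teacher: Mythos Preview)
Your proposal is correct and follows essentially the same approach as the paper: both substitute the eigendecomposition of $\bW$ into the expressions from Lemma~\ref{lem:gh} and collapse using orthonormality of $\bm{Q}_{\bThe},\bm{Q}_{\bPsi}$, with the only stylistic difference being that the paper expands $\bW=\sum_{l,k}\lambda_{\bW,lk}(\bm{q}_{\bThe,l}\otimes\bm{q}_{\bPsi,k})(\bm{q}_{\bThe,l}\otimes\bm{q}_{\bPsi,k})^T$ as an explicit rank-one sum while you keep the matrix factorization and use the mixed-product rule and trace cyclicity. Your unpacking of $\bm{\Upsilon}$ via the entrywise formula for the stretched identities is exactly the verification needed and matches the paper's derivation of $\bH_{\bThe\bPsi}$.
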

\begin{proof}
We prove for $\bG_{\bThe}$, $\bH_{\bThe}$, and $\bH_{\bThe\bPsi}$, 
because the case for $\bG_{\bPsi}$ and $\bH_{\bPsi}$ can be proved similarly. 
Let $\bm{q}_{\bThe,i}$ and $\bm{q}_{\bPsi,i}$ be the $i$th eigenvectors of $\bThe$ and $\bPsi$, given as the $i$th columns of $\bm{Q}_{\bThe}$ and $\bm{Q}_{\bPsi}$. Then, we can re-write Eq. \eqref{eq:Weigdecomp} as
\begin{equation}\label{eq:Weigdecompsum}
    \bW = \sum_{l=1}^p \sum_{k=1}^q \lambda_{\bW,lk}(\bm{q}_{\bThe,l}\otimes\bm{q}_{\bPsi,k})(\bm{q}_{\bThe,l}\otimes\bm{q}_{\bPsi,k})^T.
\end{equation}

For $\bG_{\bThe}$, we substitute $\bW$ in $\bW_{\bThe}$ in Eq. \eqref{eq:partialmatrixW} with Eq. \eqref{eq:Weigdecompsum}:
\begin{align*}
    \bG_{\bThe}
    &= q\bm{S} - \sum_{i=1}^q(\bm{I}_p\otimes\bm{e}_{q,i})^T\left(\sum_{l=1}^p \sum_{k=1}^q \lambda_{\bW,lk}(\bm{q}_{\bThe,l}\otimes\bm{q}_{\bPsi,k})(\bm{q}_{\bThe,l}\otimes\bm{q}_{\bPsi,k})^T\right)(\bm{I}_p\otimes\bm{e}_{q,i})\\
    &= q\bm{S} - \sum_{i=1}^q\sum_{l=1}^p \sum_{k=1}^q\lambda_{\bW,lk}([\bm{q}_{\bPsi,k}]_i\bm{q}_{\bThe,l})([\bm{q}_{\bPsi,k}]_i\bm{q}_{\bThe,l})^T \quad\text{since $(\bI\otimes\bm{e}_{q,i})^T(\bm{a}\otimes\bm{b})=\bm{b}_i\bm{a}$}\\
    &= q\bm{S} - \sum_{l=1}^p \sum_{k=1}^q \lambda_{\bW,lk}\bm{q}_{\bThe,l}\bm{q}_{\bThe,l}^T
    \;\quad\quad\quad\quad\quad\quad\quad\quad\quad\quad\text{since $\bm{Q}_{\bPsi}$ is orthonormal}\\
    &= q\bm{S} - \sum_{k=1}^q \bm{Q}_{\bThe}(\bLambda_{\bThe} + \lambda_{\bPsi,k}\bI_p)^{-1}\bm{Q}_{\bThe}^T  .
\end{align*}
This is equivalent to $\bG_{\bThe}$ in Eq. \eqref{eq:partialmatrixeiggrad}, 
since the summation can be performed over $\bXi_{\bThe,k}$, 
instead of over $\bm{Q}_{\bThe}\bXi_{\bThe,k}\bm{Q}_{\bThe}^T$. 

To show $\bH_{\bThe}$, we again substitute $\bW$ in $\bH_{\bThe}$ of Eq. \eqref{eq:HcollapseWW}
with Eq. \eqref{eq:Weigdecompsum}. 
\begin{align*}
    \bH_{\bThe} &=  \sum_{i=1}^q \sum_{j=1}^q \bigg(\sum_{l=1}^p \sum_{k=1}^q\lambda_{\bW,lk}([\bm{q}_{\bPsi,k}]_i\bm{q}_{\bThe,l})([\bm{q}_{\bPsi,k}]_j\bm{q}_{\bThe,l})^T
    \bigg. \\
	& \quad\quad\quad\quad\quad\quad\quad\quad\quad\quad\quad\quad\quad\quad\quad\quad \bigg.\otimes\sum_{r=1}^p \sum_{s=1}^q\lambda_{\bW,rs}([\bm{q}_{\bPsi,s}]_i\bm{q}_{\bThe,r})([\bm{q}_{\bPsi,s}]_j\bm{q}_{\bThe,r})^T \bigg)\\
    &=\sum_{k=1}^q\bigg(\sum_{l=1}^p \lambda_{\bW,lk}\bm{q}_{\bThe,l}\bm{q}_{\bThe,l}^T\otimes\sum_{r=1}^p\lambda_{\bW,rk}\bm{q}_{\bThe,r}\bm{q}_{\bThe,r}^T\bigg) \\
    &= \sum_{k=1}^q \bigg(\bm{Q}_{\bThe}(\bLambda_{\bThe} + \lambda_{\bPsi,k}\bI_p)^{-1}\bm{Q}_{\bThe}^T\bigg) \otimes \bigg(\bm{Q}_{\bThe}(\bLambda_{\bThe} + \lambda_{\bPsi,k}\bI_p)^{-1}\bm{Q}_{\bThe}^T\bigg)\\
    &= (\bm{Q}_{\bThe}\otimes\bm{Q}_{\bThe}) \bigg(\sum_{k=1}^q (\bLambda_{\bThe} + \lambda_{\bPsi,k}\bI_p)^{-1}\otimes (\bLambda_{\bThe} + \lambda_{\bPsi,k}\bI_p)^{-1}\bigg)(\bm{Q}_{\bThe}\otimes\bm{Q}_{\bThe})^T.
\end{align*}
For $\bH_{\bThe\bPsi}$, we substitute $\bW$ in $\bH_{\bThe\bPsi}$ of Eq. \eqref{eq:HcollapseWW}
with Eq. \eqref{eq:Weigdecompsum}.
\begin{align*}
        \bH_{\bThe\bPsi} &= \sum_{i=1}^q \sum_{j=1}^p \bigg(\sum_{l=1}^p \sum_{k=1}^q\lambda_{\bW,lk}([\bm{q}_{\bPsi,k}]_i\bm{q}_{\bThe,l})([\bm{q}_{\bThe,l}]_j\bm{q}_{\bPsi,k})^T\bigg. \\
	    &\bigg. \quad\quad\quad\quad\quad\quad\quad\quad\quad\quad\quad\quad\quad\quad \otimes\sum_{r=1}^p \sum_{s=1}^q\lambda_{\bW,rs}([\bm{q}_{\bPsi,s}]_i\bm{q}_{\bThe,r})([\bm{q}_{\bThe,r}]_j\bm{q}_{\bPsi,s})^T \bigg)\\
        &=\sum_{l=1}^p \sum_{k=1}^q \lambda_{\bW,lk}^2(\bm{q}_{\bThe,l}\bm{q}_{\bPsi,k}^T \otimes \bm{q}_{\bThe,l}\bm{q}_{\bPsi,k}^T) \\
        &= \sum_{l=1}^p (\bm{q}_{\bThe,l}\otimes\bm{q}_{\bThe,l})\sum_{k=1}^q\lambda_{\bW,lk}^2(\bm{q}_{\bPsi,k}\otimes\bm{q}_{\bPsi,k})^T\\
        &= \begin{bmatrix}
        \bm{q}^T_{\bThe,1}\otimes\bm{q}^T_{\bThe,1}\\
        \bm{q}^T_{\bThe,2}\otimes\bm{q}^T_{\bThe,2}\\
        \vdots\\
        \bm{q}^T_{\bThe,p}\otimes\bm{q}^T_{\bThe,p}
        \end{bmatrix}^T
        \begin{bmatrix}
        \lambda^2_{\bW,11} & \cdots & \lambda^2_{\bW,1q} \\
        \vdots & \ddots & \vdots \\
        \lambda^2_{\bW,p1} & \cdots & \lambda^2_{\bW,pq}
        \end{bmatrix}
        \begin{bmatrix}
        \bm{q}^T_{\bPsi,1}\otimes\bm{q}^T_{\bPsi,1}\\
        \bm{q}^T_{\bPsi,2}\otimes\bm{q}^T_{\bPsi,2}\\
        \vdots\\
        \bm{q}^T_{\bPsi,q}\otimes\bm{q}^T_{\bPsi,q}
        \end{bmatrix}\\
        &=(\bm{Q}_{\bThe}\otimes\bm{Q}_{\bThe})\tilde{\bm{I}}_{p^2,p}
        \begin{bmatrix}
        \lambda^2_{\bW,11} & \cdots & \lambda^2_{\bW,1q} \\
        \vdots & \ddots & \vdots \\
        \lambda^2_{\bW,p1} & \cdots & \lambda^2_{\bW,pq}
        \end{bmatrix}
        \tilde{\bm{I}}_{q,q^2}(\bm{Q}_{\bPsi}\otimes\bm{Q}_{\bPsi})^T.
    \end{align*}
\end{proof}

Theorem \ref{thm:gheigcomp} provides a significantly more efficient method for computing $\bG$ and $\bH$, 
compared to Lemma \ref{lem:gh}. Since $\bG$ and $\bH$ can be written entirely in terms of 
the eigenvectors and eigenvalues of $\bThe$ and $\bPsi$, the operations that involve
$\bW=(\bThe \oplus \bPsi)^{-1}$ 
is replaced with the cheaper operation of eigendecomposition with time $\mathcal{O}(p^3+q^3)$ and space $\mathcal{O}(p^2 + q^2)$. 
TeraLasso also used the eigendecomposition of $\bThe$ and $\bPsi$ in each iteration of the optimization. However, TeraLasso has done this 
for an efficient projection of the gradient to the re-parameterized space, whereas 
EiGLasso performs the eigendecomposition for an efficient computation of $\bG$ and $\bH$ in the original parameter space. 

It follows from Theorem \ref{thm:gheigcomp} that $(\bThe,\bPsi)$'s in the same equivalence class 
in Eq. \eqref{eqn:noniddiag} differ only in their eigenvalues, but not in eigenvectors. 
The equivalence class in Eq. \eqref{eqn:noniddiag} can be written equivalently as
 \begin{align}\label{eqn:noniddiagEig}
  S[\bOmega] = \{ (\bThe,\bPsi)\in \mathbb{KS}^{p,q}| 
	\left(\bLambda_{\bThe}-c\bm{I}_p\right)\oplus\left(\bLambda_{\bPsi}+c\bm{I}_q\right)=\bLambda_{\bOmega},
		c\in \mathbb{R}\}, 
 \end{align}
since $\bThe + c\bI_p = \bm{Q}_{\bThe}\bLambda_{\bThe}\bm{Q}_{\bThe}^T +c\bm{Q}_{\bThe}\bm{Q}_{\bThe}^T 
= \bm{Q}_{\bThe}(\bLambda_{\bThe}+c\bI_p)\bm{Q}_{\bThe}^T$ and similarly for $\bPsi$.
We exploit this feature 
to handle the unidentifiable parameters during estimation in Section \ref{sec:identify} and 
to analyze the convergence in Section \ref{sec:convg}.

In the theorem below, we prove several properties of $\bH$ that we will use to prove
the line-search properties, convergence guarantees, and convergence rates 
in Section \ref{sec:convg}.

\begin{theorem}\label{thm:Hpd}
The Hessian $\bH$ in EiGLasso has the following properties:
\begin{itemize}
    \item It is positive semi-definite with the null space 
$\text{null}(\bH)=\{[\tvec(\bm{X}_p)^T \tvec(\bm{X}_q)^T]^T| \bm{X}_p\oplus\bm{X}_q = 0,
\bm{X}_p \in \mathbb{R}^{p\times p}, \bm{X}_p \in \mathbb{R}^{q\times q}\}$.
    
    \item On the Kronecker-sum space in Eq. \eqref{def:ksspace},
$\bH$ is positive definite.
The minimum eigenvalue $\lambda_{\bH,\min_0}$ outside of the nullspace and the maximum eigenvalue 
$\lambda_{\bH,\max}$ are given in terms of the eigenvalues of $\bThe$ and $\bPsi$:
\begin{equation}
\lambda_{\bH,\min_0}=\min\{p,q\}(\lambda_{\bThe,p}+\lambda_{\bPsi,q})^{-2}, \quad
\lambda_{\bH,\max}=(p+q)(\lambda_{\bThe,1}+\lambda_{\bPsi,1})^{-2}. \nonumber 
\end{equation}    
\end{itemize}
\end{theorem}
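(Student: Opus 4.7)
The plan is to work from the compact factorization $\bH = \bP^T(\bW\otimes\bW)\bP$ established in Lemma \ref{lem:gh}, together with the identity $\bP[\tvec(\Delta_{\bThe})^T\;\tvec(\Delta_{\bPsi})^T]^T = \tvec(\Delta_{\bThe}\oplus\Delta_{\bPsi})$. The latter follows from a short direct calculation: applying the Kronecker vec identity $(\bm{M}\otimes\bm{M})\tvec(\bm{A}) = \tvec(\bm{M}\bm{A}\bm{M}^T)$ with $\bm{M}=\bI_p\otimes\bm{e}_{q,i}$ and summing over $i$ yields $\bP_{\bThe}\tvec(\Delta_{\bThe}) = \tvec(\Delta_{\bThe}\otimes\bI_q)$, and symmetrically $\bP_{\bPsi}\tvec(\Delta_{\bPsi}) = \tvec(\bI_p\otimes\Delta_{\bPsi})$. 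Writing $v=[\tvec(\Delta_{\bThe})^T\;\tvec(\Delta_{\bPsi})^T]^T$, applying $\tvec(\bm{A})^T(\bW\otimes\bW)\tvec(\bm{A}) = \tr(\bm{A}\bW\bm{A}\bW)$ for symmetric $\bm{A}$, and factoring $\bW=\bW^{1/2}\bW^{1/2}$ then gives
\begin{equation*}
v^T\bH v = \bigl\|\bW^{1/2}(\Delta_{\bThe}\oplus\Delta_{\bPsi})\bW^{1/2}\bigr\|_F^2 \geq 0,
\end{equation*}
so $\bH\succeq 0$ and $v\in\text{null}(\bH)$ iff $\Delta_{\bThe}\oplus\Delta_{\bPsi}=0$. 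Block-wise inspection of $\Delta_{\bThe}\otimes\bI_q+\bI_p\otimes\Delta_{\bPsi}=0$ further forces $\Delta_{\bThe}=c\bI_p$ and $\Delta_{\bPsi}=-c\bI_q$ for a single $c\in\mathbb{R}$, so the null space is one-dimensional---precisely the trace-ratio degree of freedom of the Kronecker-sum parametrization---and on its orthogonal complement $\bH$ is strictly positive definite.

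For the extreme eigenvalues I would expand the squared Frobenius norm of the Kronecker sum as
\begin{equation*}
\|\Delta_{\bThe}\oplus\Delta_{\bPsi}\|_F^2 = q\|\Delta_{\bThe}\|_F^2 + 2\tr(\Delta_{\bThe})\tr(\Delta_{\bPsi}) + p\|\Delta_{\bPsi}\|_F^2
\end{equation*}
and combine with the Rayleigh estimates $\lambda_{\min}(\bW)^2\|\bm{A}\|_F^2 \leq \|\bW^{1/2}\bm{A}\bW^{1/2}\|_F^2 \leq \lambda_{\max}(\bW)^2\|\bm{A}\|_F^2$. For the upper bound, Cauchy--Schwarz gives $\tr(\Delta_{\bThe})^2\leq p\|\Delta_{\bThe}\|_F^2$ and $\tr(\Delta_{\bPsi})^2\leq q\|\Delta_{\bPsi}\|_F^2$, while $2|ab|\leq a^2+b^2$ bounds the cross term by $p\|\Delta_{\bThe}\|_F^2+q\|\Delta_{\bPsi}\|_F^2$; combining these yields $\|\Delta_{\bThe}\oplus\Delta_{\bPsi}\|_F^2 \leq (p+q)\|v\|^2$ and so $\lambda_{\bH,\max}\leq (p+q)\lambda_{\max}(\bW)^2$.

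For the lower bound, I would note that orthogonality to the explicit null direction $[\tvec(\bI_p)^T,-\tvec(\bI_q)^T]^T$ is equivalent to $\tr(\Delta_{\bThe})=\tr(\Delta_{\bPsi})$; the cross term then becomes $2\tr(\Delta_{\bThe})^2\geq 0$ and can simply be discarded, giving $\|\Delta_{\bThe}\oplus\Delta_{\bPsi}\|_F^2 \geq q\|\Delta_{\bThe}\|_F^2+p\|\Delta_{\bPsi}\|_F^2 \geq \min\{p,q\}\|v\|^2$ and thus $\lambda_{\bH,\min_0}\geq \min\{p,q\}\lambda_{\min}(\bW)^2$. From $\lambda_{\bW,lk}=(\lambda_{\bThe,l}+\lambda_{\bPsi,k})^{-1}$ in Theorem \ref{thm:gheigcomp} and the convention that eigenvalues are indexed in increasing order, I then identify $\lambda_{\max}(\bW)=(\lambda_{\bThe,1}+\lambda_{\bPsi,1})^{-1}$ and $\lambda_{\min}(\bW)=(\lambda_{\bThe,p}+\lambda_{\bPsi,q})^{-1}$, matching the stated formulas.

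The main obstacle is the lower bound: without the null-space restriction the cross term $2\tr(\Delta_{\bThe})\tr(\Delta_{\bPsi})$ can be negative, and naive Cauchy--Schwarz is too weak to absorb it. The trick is recognising that the single null direction $[\tvec(\bI_p)^T,-\tvec(\bI_q)^T]^T$ is exactly the one along which $\Delta_{\bThe}\oplus\Delta_{\bPsi}$ can vanish, so orthogonality to it forces the cross term to be non-negative and the lower bound reduces to the trivial estimate $q\|\Delta_{\bThe}\|_F^2+p\|\Delta_{\bPsi}\|_F^2 \geq \min\{p,q\}\|v\|^2$.
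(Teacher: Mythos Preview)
Your proof is correct. The treatment of positive semi-definiteness and the null space is essentially identical to the paper's: both use $\bP v=\tvec(\Delta_{\bThe}\oplus\Delta_{\bPsi})$ together with the positive definiteness of $\bW\otimes\bW$ to identify $\text{null}(\bH)=\text{null}(\bP)$.

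Where you diverge is in the eigenvalue bounds. The paper computes the full spectrum of $\bP^T\bP$ directly: writing out
\[
\bP^T\bP=\begin{bmatrix} q\bI_{p^2} & \tvec(\bI_p)\tvec(\bI_q)^T\\ \tvec(\bI_q)\tvec(\bI_p)^T & p\bI_{q^2}\end{bmatrix},
\]
it evaluates $\det(\bP^T\bP-\lambda\bI)$ via a Schur-complement reduction and the matrix determinant lemma, obtaining the eigenvalues $0,\,p,\,q,\,p+q$ with multiplicities $1,\,q^2-1,\,p^2-1,\,1$. The sandwich $\lambda_{\bW,\min}^2\,\bm{u}^T\bP^T\bP\bm{u}\le \bm{u}^T\bH\bm{u}\le \lambda_{\bW,\max}^2\,\bm{u}^T\bP^T\bP\bm{u}$ then yields the stated bounds. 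You instead bound the Rayleigh quotient of $\bP^T\bP$ by expanding $\|\bP v\|^2=\|\Delta_{\bThe}\oplus\Delta_{\bPsi}\|_F^2=q\|\Delta_{\bThe}\|_F^2+2\tr(\Delta_{\bThe})\tr(\Delta_{\bPsi})+p\|\Delta_{\bPsi}\|_F^2$ and handling the cross term with Cauchy--Schwarz/AM--GM for the upper bound and with the orthogonality constraint $\tr(\Delta_{\bThe})=\tr(\Delta_{\bPsi})$ for the lower bound. Your route is more elementary and avoids the determinant computation; the paper's route gives the entire spectrum of $\bP^T\bP$ (including the multiplicities), which is extra information but not needed for the theorem as stated. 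Both arguments ultimately produce the same inequalities and combine them with $\lambda_{\min}(\bW)=(\lambda_{\bThe,p}+\lambda_{\bPsi,q})^{-1}$ and $\lambda_{\max}(\bW)=(\lambda_{\bThe,1}+\lambda_{\bPsi,1})^{-1}$ in the same way.
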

\begin{proof}
To prove the first property, from Lemma \ref{lem:gh}, we notice  that since $\bW$ is positive definite, 
the null space of $\bH$ is given by $\bm{X}_p$ and $\bm{X}_q$ that satisfy
\begin{align*}
    \bP\begin{bmatrix}
        \tvec(\bm{X}_p)\\
        \tvec(\bm{X}_q)
    \end{bmatrix}
    &=0.
\end{align*} 
Since the left-hand side of the above can be written as
\begin{align*}
  \bP\begin{bmatrix}
        \tvec(\bm{X}_p)\\
        \tvec(\bm{X}_q)
    \end{bmatrix}
    &=\sum_{i=1}^q \tvec\left((\bI_p\otimes\bm{e}_{q,i})\bm{X}_p(\bI_p\otimes\bm{e}_{q,i})^T \right) + \sum_{j=1}^p \tvec\left((\bm{e}_{p,j}\otimes\bI_q)\bm{X}_q(\bm{e}_{p,j}\otimes\bI_q)^T \right) \\
    &=\tvec\left( \bm{X}_p\otimes\bI_q\right) +
    \tvec\left( \bI_p\otimes\bm{X}_q\right) \\
    &= \tvec\left(\bm{X}_p\oplus\bm{X}_q\right),
\end{align*}
the null space of $\bH$ is 
given by $\bm{X}_p$ and $\bm{X}_q$ that satisfy $\bm{X}_p\oplus\bm{X}_q = 0$.

To prove the second property, we notice that the null space of $\bH$ is outside of
the constraint region $\bThe\oplus\bPsi\succ0$;  
thus, $\bH$ is positive definite on the Kronecker-sum space in Eq. \eqref{def:ksspace}.
To find the eigenvalues of $\bH$, we first find the eigenvalues of 
\begin{align*}
    \bP^T\bP =
    \begin{bmatrix}
        \bP_{\bThe}^T\bP_{\bThe} & \bP_{\bThe}^T\bP_{\bPsi}\\
        \bP_{\bPsi}^T\bP_{\bThe} & \bP_{\bPsi}^T\bP_{\bPsi}
    \end{bmatrix}
    =
    \begin{bmatrix}
        q\bI_{p^2} & \tvec(\bI_p)\tvec(\bI_q)^T\\
        \tvec(\bI_q)\tvec(\bI_p)^T & p\bI_{q^2}
    \end{bmatrix},
\end{align*}
by finding the solutions of the characteristic equation $\det(\bP^T\bP - \lambda\bI)=0$:
\begin{align*}
    \det(&\bP^T\bP - \lambda\bI_{p^2+q^2})  
    = \det(q\bI_{p^2}-\lambda\bI_{p^2})\det\left(p\bI_{q^2} 
    - \lambda\bI_{q^2} - \frac{p}{q-\lambda}\tvec(\bI_q)\tvec(\bI_q)^T \right)\\
    &=  (q-\lambda)^{p^2} 
    \det\left((p-\lambda)\bI_{q^2} - \frac{p}{q-\lambda}\tvec(\bI_q)\tvec(\bI_q)^T \right)\\
    &=  (q-\lambda)^{p^2-1} (p-\lambda)^{q^2-1}
    \big((p-\lambda)(q-\lambda)-pq\big) \quad\quad \text{from matrix determinant lemma}\\
    &= (p-\lambda)^{q^2-1}(q-\lambda)^{p^2-1}\lambda\big(\lambda-(p+q)\big) = 0.
\end{align*}
Thus, the eigenvalues of $\bP^T\bP$ are $0$, $p$, $q$, and $(p+q)$, each with the algebraic multiplicity
$1$, $q^2-1$, $p^2-1$, and 1. 
From Eq. \eqref{eq:HcollapseWW}, for any unit vector $\bm{u}\in\Real^{p^2+q^2}\setminus\text{null}(\bH)$,
\begin{align*}
    \lambda_{\bW,\min}^2(\bm{u}^T\bP^T\bP\bm{u})
    \le
        \bm{u}^T
    \bP^T(\bW\otimes\bW)\bP \bm{u}
    \le \lambda_{\bW,\max}^2(\bm{u}^T\bP^T\bP\bm{u}),
\end{align*}
where $\lambda_{\bW,\min}= (\lambda_{\bThe,p}+\lambda_{\bPsi,q})^{-2}$ and 
$\lambda_{\bW,\max}= (\lambda_{\bThe,1}+\lambda_{\bPsi,1})^{-2}$ are the minimum and maximum eigenvalues of $\bW$.
Combining this with the minimum eigenvalue $\min\{p,q\}$ 
and the maximum eigenvalue $(p+q)$ of $\bP^T\bP$ outside of the null space of $\bH$ results in
\begin{equation*}
    \min\{p,q\}\lambda_{\bW,\min}^2\le\lambda_{\bW,\min}^2(\bm{u}^T\bP^T\bP\bm{u})\quad\text{and}
    \quad \lambda_{\bW,\max}^2(\bm{u}^T\bP^T\bP\bm{u})\le (p+q)\lambda_{\bW,\max}^2,
\end{equation*}
which completes the proof. 
\end{proof}


\subsection{EiGLasso with Approximate Hessian}\label{sec:approxH}

While EiGLasso pre-computes and stores $\bG$ after eigendecomposing  $\bThe$ and $\bPsi$
at the beginning of the coordinate descent to solve Eq. \eqref{eq:newton}, 
explicitly computing and storing the $(p^2+q^2)\times(p^2+q^2)$ 
matrix $\bH$ is still expensive for large $p$ and $q$ even with the eigendecomposition. 
To further reduce the computation time and memory, we approximate $\bH$ 
based on its form in Theorem \ref{thm:gheigcomp} as follows.
We first drop $\bH_{\bThe\bPsi}$ while keeping only $\bH_{\bThe}$ and $\bH_{\bPsi}$.
Since pre-computing and storing the $p^2\times p^2$ matrix $\bH_{\bThe}$ and 
the $q^2\times q^2$ matrix $\bH_{\bPsi}$ is still expensive,
we approximate $\bH_{\bThe}$ and $\bH_{\bPsi}$ by approximating their eigenvalues $\bLambda_{\bH_{\bThe}}$ 
and $\bLambda_{\bH_{\bPsi}}$ in Eq. \eqref{eq:partialmatrixeigH}, while keeping the same eigenvectors
$\bm{Q}_{\bThe}\otimes\bm{Q}_{\bThe}$ and $\bm{Q}_{\bPsi}\otimes\bm{Q}_{\bPsi}$. 
The resulting approximate Hessian $\hat{\bH}$ is 
\begin{equation}\label{eq:approxH}
    \hat{\bH} = \begin{bmatrix}
            \hat{\bH}_{\bThe} & \!\!\!\bm{0} \\
    \bm{0} & \!\!\! \hat{\bH}_{\bPsi}
        \end{bmatrix}\\
    = \begin{bmatrix}
    \bm{Q}_{\bThe}\otimes\bm{Q}_{\bThe}&\!\!\!\bm{0}\\
    \bm{0}&\!\!\!\bm{Q}_{\bPsi}\otimes\bm{Q}_{\bPsi}
    \end{bmatrix}
    \begin{bmatrix}
    \hat{\bLambda}_{\bH_{\bThe}} & \!\!\bm{0} \\
    \bm{0} & \!\!\hat{\bLambda}_{\bH_{\bPsi}}
    \end{bmatrix}
    \begin{bmatrix}
    \bm{Q}_{\bThe}\otimes\bm{Q}_{\bThe}& \!\!\!\bm{0}\\
    \bm{0}&\!\!\!\bm{Q}_{\bPsi}\otimes\bm{Q}_{\bPsi}
    \end{bmatrix}^T\!\!\!,
\end{equation}
where
\begin{equation}\label{eqn:approxHL}
\begin{alignedat}{2}
    \hat{\bLambda}_{\bH_{\bThe}} &= \sum_{k=1}^{K} \bXi_{\bThe,k}\otimes\bXi_{\bThe,k} 
        + (q-K) \bXi_{\bThe,K} \otimes \bXi_{\bThe,K},\\
    \hat{\bLambda}_{\bH_{\bPsi}} &= \sum_{l=1}^{K} \bXi_{\bPsi,l}\otimes\bXi_{\bPsi,l} 
        + (p-K) \bXi_{\bPsi,K} \otimes \bXi_{\bPsi,K}.
\end{alignedat}
\end{equation}
In $\hat{\bLambda}_{\bH_{\bThe}}$, we keep only the components $\bXi_{\bThe,k}\otimes \bXi_{\bThe,k}$'s 
of $\bLambda_{\bH_{\bThe}}$ in Eq. \eqref{eq:Hcomp} 
that contain the $K$ smallest eigenvalues $\lambda_{\bPsi,1}, \ldots, \lambda_{\bPsi,K}$. 
These smallest eigenvalues have the largest contribution toward $\hat{\bH}_{\bThe}$, 
since the eigenvalues of $\bPsi$ contribute to the eigenvalues of $\bH_{\bThe}$ through their inverses. 
We replace the remaining  $(q-K)$ components with $\bXi_{\bThe,K}\otimes \bXi_{\bThe,K}$, 
because dropping the $(q-K)$ eigenvalue components in ${\bLambda}_{\bH_{\bThe}}$ would amount to 
assuming that $\bPsi$ has $(q-K)$ eigenvalues of infinite magnitude. 
This approximation leads to the following form 
\begin{align*}
    	\hat{\bm{H}}_{\bThe} &= \sum_{k=1}^K \bm{V}_{\bThe, k}\otimes\bm{V}_{\bThe, k} 
		+ (q-K) \bm{V}_{\bThe, K+1} \otimes \bm{V}_{\bThe, K+1}, \\
    	\hat{\bm{H}}_{\bPsi} &= \sum_{k=1}^K \bm{V}_{\bPsi, k}\otimes\bm{V}_{\bPsi, k} 
		+ (q-K) \bm{V}_{\bPsi, K+1} \otimes \bm{V}_{\bPsi, K+1},
\end{align*}
where $\bm{V}_{\bThe,k} = \bm{Q}_{\bThe}\bXi_{\bThe,k}\bm{Q}_{\bThe}^T$
and $\bm{V}_{\bPsi,k} = \bm{Q}_{\bPsi}\bXi_{\bPsi,k}\bm{Q}_{\bPsi}^T$.
Then, during the coordinate descent optimization, we pre-compute and store $\bm{V}_{\bThe,k}$'s 
and $\bm{V}_{\bPsi,k}$'s for $k=1,\ldots,K$, 
and compute the Hessian entries from $\bm{V}_k$'s as needed in each coordinate descent update (details in Appendix B).
We assume the same  $K$ for both $\hat{\bH}_{\bThe}$ and $\hat{\bH}_{\bPsi}$, though this can be relaxed.
In our experiments in Section \ref{sec:exp} we demonstrate that often $K=1$ suffices.

We provide a geometric interpretation of our approximation of $\bH_{\bThe}$ with $\hat{\bH}_{\bThe}$.
Assuming Eq. \eqref{eq:newton} without the $L_1$-regularization,
we have the  descent direction $-\bH_{\bThe}^{-1}\tvec(\bG_{\bThe})$ with the exact Hessian
and $-\hat{\bH}^{-1}_{\bThe}\tvec(\bG_{\bThe})$ with the approximate Hessian. 
Since $\bH_{\bThe}$ and $\hat{\bH}_{\bThe}$ share the same eigenvectors, 
in the coordinate system defined  by these eigenvectors as bases, 
the descent directions become $-\bH_{\bThe}^{-1/2} \tvec(\bG_{\bThe}) =
-{\bLambda_{\bH_{\bThe}}^{-\frac{1}{2}}} (\bm{Q}_{\bThe}\otimes\bm{Q}_{\bThe})^T
    \tvec(\bG_{\bThe})$ with the exact Hessian and
$-\hat{\bH}_{\bThe}^{-\frac{1}{2}}\tvec(\bG_{\bThe}) 
    =  -\hat{\bLambda}_{\bH_{\bThe}}^{-\frac{1}{2}}
    (\bm{Q}_{\bThe}\otimes\bm{Q}_{\bThe})^T \tvec(\bG_{\bThe})$ with the approximate Hessian
    \citep{cvxbook}. The latter is an element-wise scaled matrix of the former.
Furthermore, since $[\bLambda_{\bH_{\bThe}}^{-\frac{1}{2}}]_{ii} <= [\hat{\bLambda}_{\bH_{\bThe}}^{-\frac{1}{2}}]_{ii}$, 
in this new coordinate system,
each element of the descent direction with $\hat{\bH}_{\bThe}$ is a convex combination of $0$ and 
the corresponding element of the descent direction with $\bH_{\bThe}$.


The following theorem provides the properties of $\hat{\bH}$, which will be used to analyze
the convergence of EiGLasso in Section \ref{sec:convg}.

\begin{theorem}\label{thm:Happroxpd}
The approximate Hessian $\hat{\bH}$ is positive definite. Furthermore, its minimum and maximum eigenvalues are 
\begin{equation} 
\lambda_{\hat{\bH},\min}=\min\big\{\lambda_{\hat{\bH}_{\bThe},\min}, \lambda_{\hat{\bH}_{\bPsi},\min}\big\}, \quad
\lambda_{\hat{\bH},\max}=\max\big\{\lambda_{\hat{\bH}_{\bThe},\max},\lambda_{\hat{\bH}_{\bPsi},\max}\big\},
\nonumber
\end{equation}
where
\begin{align*}
    \lambda_{\hat{\bH}_{\bThe},\min} &= \sum_{i=1}^K (\lambda_{\bThe, p} + \lambda_{\bPsi,i})^{-2} +  (q-K)(\lambda_{\bThe, p} + \lambda_{\bPsi,K})^{-2}\\
    \lambda_{\hat{\bH}_{\bThe},\max} &= \sum_{i=1}^K (\lambda_{\bThe, 1} + \lambda_{\bPsi,i})^{-2} +  (q-K)(\lambda_{\bThe, 1} + \lambda_{\bPsi,K})^{-2}\\
    \lambda_{\hat{\bH}_{\bPsi},\min} &= \sum_{j=1}^{K} (\lambda_{\bThe, j} + \lambda_{\bPsi,q})^{-2} +  (p-K)(\lambda_{\bThe, K} + \lambda_{\bPsi,q})^{-2}\\
    \lambda_{\hat{\bH}_{\bPsi},\max} &= \sum_{j=1}^{K} (\lambda_{\bThe, j} + \lambda_{\bPsi,1})^{-2} +  (p-K)(\lambda_{\bThe, K} + \lambda_{\bPsi,1})^{-2}.
\end{align*}
\end{theorem}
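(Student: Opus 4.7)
The plan is to exploit the block diagonal structure of $\hat{\bH}$ in Eq. \eqref{eq:approxH} together with the fact that the outer factors $\bm{Q}_{\bThe}\otimes\bm{Q}_{\bThe}$ and $\bm{Q}_{\bPsi}\otimes\bm{Q}_{\bPsi}$ are orthogonal. Since these two factors are Kronecker products of orthogonal matrices, they are themselves orthogonal, and so the expression in Eq. \eqref{eq:approxH} is already an eigendecomposition of $\hat{\bH}$, whose eigenvalues are simply the diagonal entries of the middle block-diagonal matrix $\mathrm{diag}(\hat{\bLambda}_{\bH_{\bThe}},\hat{\bLambda}_{\bH_{\bPsi}})$. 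Consequently, the spectrum of $\hat{\bH}$ is the union of the spectra of $\hat{\bH}_{\bThe}$ and $\hat{\bH}_{\bPsi}$, so $\lambda_{\hat{\bH},\min}$ and $\lambda_{\hat{\bH},\max}$ are the claimed min and max over the two blocks.

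Next, I would compute the diagonal of $\hat{\bLambda}_{\bH_{\bThe}}$ explicitly. By the definition in Eq.~\eqref{eqn:approxHL} together with $\bXi_{\bThe,k}=(\bLambda_{\bThe}+\lambda_{\bPsi,k}\bI_p)^{-1}$, each $\bXi_{\bThe,k}$ is diagonal with $(j,j)$-entry $(\lambda_{\bThe,j}+\lambda_{\bPsi,k})^{-1}$, so the Kronecker product $\bXi_{\bThe,k}\otimes\bXi_{\bThe,k}$ is diagonal, with the entry indexed by $(j_1,j_2)$ equal to $(\lambda_{\bThe,j_1}+\lambda_{\bPsi,k})^{-1}(\lambda_{\bThe,j_2}+\lambda_{\bPsi,k})^{-1}$. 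Summing over $k=1,\dots,K$ and adding the tail term, the $(j_1,j_2)$-diagonal entry of $\hat{\bLambda}_{\bH_{\bThe}}$ is
\begin{equation*}
\sum_{k=1}^K (\lambda_{\bThe,j_1}+\lambda_{\bPsi,k})^{-1}(\lambda_{\bThe,j_2}+\lambda_{\bPsi,k})^{-1} + (q-K)(\lambda_{\bThe,j_1}+\lambda_{\bPsi,K})^{-1}(\lambda_{\bThe,j_2}+\lambda_{\bPsi,K})^{-1}.
\end{equation*}

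Since each summand is a product of two positive decreasing functions of $\lambda_{\bThe,j_1}$ and $\lambda_{\bThe,j_2}$, the entry is monotonically decreasing in both $\lambda_{\bThe,j_1}$ and $\lambda_{\bThe,j_2}$. Using the convention that $\lambda_{\bThe,j}$ is the $j$th smallest eigenvalue (so $\lambda_{\bThe,1}$ is smallest and $\lambda_{\bThe,p}$ is largest), the minimum is attained at $j_1=j_2=p$ and the maximum at $j_1=j_2=1$, giving exactly the stated $\lambda_{\hat{\bH}_{\bThe},\min}$ and $\lambda_{\hat{\bH}_{\bThe},\max}$. The argument for $\hat{\bH}_{\bPsi}$ is identical with the roles of $\bThe$ and $\bPsi$ swapped.

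Finally, positive definiteness is immediate: every diagonal entry of $\hat{\bLambda}_{\bH_{\bThe}}$ and $\hat{\bLambda}_{\bH_{\bPsi}}$ is a positive sum of products of strictly positive quantities (because $(\bThe,\bPsi)\in\mathbb{KS}^{p,q}$ forces $\lambda_{\bThe,j}+\lambda_{\bPsi,k}>0$ for all $j,k$), so $\lambda_{\hat{\bH},\min}>0$. I do not anticipate a main obstacle beyond bookkeeping; the only subtle point is justifying that the minimum over the two indices $(j_1,j_2)$ is jointly achieved at $(p,p)$, which follows from the separable monotonicity of each summand rather than any rearrangement argument.
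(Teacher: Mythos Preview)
Your proposal is correct and follows essentially the same approach as the paper: use the block-diagonal structure of $\hat{\bH}$ so that its spectrum is the union of the spectra of $\hat{\bH}_{\bThe}$ and $\hat{\bH}_{\bPsi}$, then read off the eigenvalues from the diagonal matrices $\hat{\bLambda}_{\bH_{\bThe}}$ and $\hat{\bLambda}_{\bH_{\bPsi}}$ and note their positivity via $\lambda_{\bThe,j}+\lambda_{\bPsi,k}>0$. Your write-up is in fact more complete than the paper's, which leaves the explicit min/max identification implicit; your monotonicity argument in $(j_1,j_2)$ fills that gap cleanly.
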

\begin{proof}
Since a block-diagonal matrix has the same set of eigenvalues as those of its diagonal blocks, 
the eigenvalues of $\hat{\bH}$ are the same as a union of those of $\hat{\bH}_{\bThe}$ 
and $\hat{\bH}_{\bPsi}$ in Eq. \eqref{eqn:approxHL}.
The eigenvalues of $\hat{\bH}_{\bThe}$ and $\hat{\bH}_{\bPsi}$ are positive, because they
are also the eigenvalues of $\bW$ 
in Eq. \eqref{eq:Weigdecomp}, the inverses of which are the eigenvalues of the positive definite 
matrix $\bOmega$ in Eq. \eqref{eq:omegaeigdecomp}. Thus, $\hat{\bH}$ is positive definite.
\end{proof}

\subsection{Estimation with the Unidentifiable Parameters} \label{sec:identify}

We describe a simple approach to estimating the diagonals of $\bThe$ and $\bPsi$ with EiGLasso.
We show that to solve the optimization problem with the constraint on a fixed trace ratio 
in Eq. \eqref{eq:objrho}, 
it is sufficient to solve the unconstrained problem in Eq. \eqref{eq:obj} and 
to adjust the diagonals once to enforce the given trace ratio after EiGLasso converges. 

In Lemma \ref{lem:invariantGH} below, we show that given the current estimate of $\bThe$ and $\bPsi$, 
the quadratic approximation in Eq. \eqref{eq:newton} for determining descent directions 
is uniquely defined. 


\begin{lemma}\label{lem:invariantGH}
Within the same equivalence class that contains the current estimate of $(\bThe, \bPsi)$,
$\bG$, $\bH$, and $\hat{\bH}$ are uniquely defined. Thus,
the second-order approximation in Eq. \eqref{eq:newton} is uniquely defined.
\end{lemma}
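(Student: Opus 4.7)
The plan is to exploit the characterization of the equivalence class in Eq. \eqref{eqn:noniddiagEig} and the closed-form expressions derived in Theorem \ref{thm:gheigcomp}. Within a single equivalence class $S[\bOmega]$, moving from $(\bThe, \bPsi)$ to $(\bThe - c\bI_p, \bPsi + c\bI_q)$ leaves the eigenvectors $\bm{Q}_{\bThe}$ and $\bm{Q}_{\bPsi}$ unchanged and only shifts the eigenvalues as $\lambda_{\bThe, l} \mapsto \lambda_{\bThe, l} - c$ and $\lambda_{\bPsi, k} \mapsto \lambda_{\bPsi, k} + c$. The key observation is then that every quantity that enters $\bG$, $\bH$, and $\hat{\bH}$ depends on the eigenvalues only through the pairwise sums $\lambda_{\bThe, l} + \lambda_{\bPsi, k}$, which are plainly invariant under the shift.

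Concretely, I would first check that $\lambda_{\bW, lk} = (\lambda_{\bThe, l} + \lambda_{\bPsi, k})^{-1}$, $\bXi_{\bThe, k} = (\bLambda_{\bThe} + \lambda_{\bPsi, k}\bI_p)^{-1}$, and $\bXi_{\bPsi, l} = (\bLambda_{\bPsi} + \lambda_{\bThe, l}\bI_q)^{-1}$ are all invariant under the $(\pm c)$ shift of eigenvalues. Combining this with the invariance of $\bm{Q}_{\bThe}$ and $\bm{Q}_{\bPsi}$, Eq. \eqref{eq:partialmatrixeiggrad} immediately gives that $\bG_{\bThe}$ and $\bG_{\bPsi}$ are invariant, and Eq. \eqref{eq:partialmatrixeigH} together with the formulas for $\bLambda_{\bH_{\bThe}}$, $\bLambda_{\bH_{\bPsi}}$, and $\bm{\Upsilon}$ gives that every block of $\bH$ is invariant. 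The same argument applied to Eq. \eqref{eqn:approxHL} yields invariance of $\hat{\bLambda}_{\bH_{\bThe}}$ and $\hat{\bLambda}_{\bH_{\bPsi}}$, and therefore of $\hat{\bH}$.

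For the second-order approximation itself, the quadratic part $\hat{g}(\Delta_{\bThe}, \Delta_{\bPsi})$ depends on $(\bThe, \bPsi)$ only through $\bG$ and $\bH$ (or $\hat{\bH}$), which we have just shown are invariant. The penalty $h(\bThe + \Delta_{\bThe}, \bPsi + \Delta_{\bPsi}) = q\gamma_{\bThe} \|\bThe + \Delta_{\bThe}\|_{1,\text{off}} + p\gamma_{\bPsi}\|\bPsi + \Delta_{\bPsi}\|_{1,\text{off}}$ depends only on off-diagonal entries, and the equivalence-class shift modifies only the diagonals of $\bThe$ and $\bPsi$; hence $h$ is invariant as well. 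Taken together, the entire objective in Eq. \eqref{eq:newton} is determined by the equivalence class, not by the particular representative.

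I do not anticipate a serious obstacle here: the work is essentially bookkeeping to verify that each of $\lambda_{\bW, lk}$, $\bXi_{\bThe, k}$, and $\bXi_{\bPsi, l}$ depends only on the invariant sums $\lambda_{\bThe, l} + \lambda_{\bPsi, k}$. The only thing worth being careful about is to state the argument for $\hat{\bH}$ in a way that makes clear the approximation indexed by $K$ also only uses these invariant pieces (the truncation selects the $K$ smallest eigenvalues of $\bPsi$ or $\bThe$, and while the individual eigenvalues shift, the \emph{ordering} and thus which indices are retained do not change, and the retained components $\bXi_{\bThe, k} \otimes \bXi_{\bThe, k}$ are themselves invariant).
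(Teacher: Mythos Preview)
Your proposal is correct and follows essentially the same route as the paper: reduce the equivalence-class ambiguity to a shift of eigenvalues via Eq.~\eqref{eqn:noniddiagEig}, then observe that all building blocks in Theorem~\ref{thm:gheigcomp} and Eq.~\eqref{eqn:approxHL} depend on eigenvalues only through the invariant sums $\lambda_{\bThe,l}+\lambda_{\bPsi,k}$. If anything, you are slightly more thorough than the paper---you explicitly address the invariance of the penalty $h$ (which touches only off-diagonals) and the fact that the eigenvalue ordering, and hence the index set retained in $\hat{\bH}$, is unaffected by the shift.
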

\begin{proof}
According to Eq. \eqref{eqn:noniddiagEig}, the unidentifiability in the diagonal elements of $\bThe$ and $\bPsi$ reduces to the unidentifiability in the eigenvalues of $\bThe$ and $\bPsi$. In the gradient in Eq. \eqref{eq:partialmatrixeiggrad} and Hessians in Eqs. \eqref{eq:partialmatrixeigH} and \eqref{eq:approxH},
the eigenvalues of $\bThe$ and $\bPsi$ always appear as a pair, where the shift $c$ cancels out as follows:
\begin{align*}
\big((\lambda_{\bThe,l}+c) + (\lambda_{\bPsi,k}-c)\big)^{-1} = (\lambda_{\bThe,l} + \lambda_{\bPsi,k})^{-1},
\forall (l,k)\in\{1,\ldots,p\}\times\{1,\ldots,q\}.
\end{align*}
Thus, $\bG$, $\bH$, and $\hat{\bH}$ are unaffected by the shift $c$ in the diagonals of $\bThe$ and $\bPsi$.
\end{proof}

Lemma \ref{lem:invariantGH} allows us to solve the problem in Eq. \eqref{eq:newton} to obtain
the descent directions $(D_{\bThe},D_{\bPsi})$, ignoring the constraint on the trace ratio. 
However, if $(D_{\bThe},D_{\bPsi})$ is a solution to the problem in Eq. \eqref{eq:newton}, then $(D_{\bThe}-c\bI,D_{\bPsi}+c\bI)$ 
for $c\in \mathbb{R}$ is also a solution, forming an equivalence class 
\begin{align} \label{eq:equivD}
S[D_{\bThe} \oplus D_{\bPsi}]= 
    \{ (D_{\bThe},D_{\bPsi})| 
	\left(D_{\bThe}-c\bm{I}_p\right)\oplus\left(D_{\bPsi}+c\bm{I}_q\right)= D_{\bThe} \oplus D_{\bPsi}, 
		c\in \mathbb{R}\}.
\end{align}

\begin{lemma}\label{lem:newtonnonid}
Within the equivalence class in Eq. \eqref{eq:equivD}, the line-search method 
in Algorithm \ref{alg:linesearch} identifies a unique step size $\alpha$.
\end{lemma}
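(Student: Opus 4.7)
The plan is to show that both the positive-definiteness check and the sufficient-decrease check in Algorithm \ref{alg:linesearch} are invariant under the shift $(D_{\bThe},D_{\bPsi}) \mapsto (D_{\bThe}-c\bI_p,\,D_{\bPsi}+c\bI_q)$ that parametrizes the equivalence class in Eq. \eqref{eq:equivD}. Once both conditions have identical truth values for every representative, the backtracking sequence $\alpha = 1,\beta,\beta^2,\ldots$ accepts or rejects the same $\alpha$ regardless of which representative is tested, which yields uniqueness.

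Invariance of the first condition is immediate: by definition of the equivalence class, $(\bThe+\alpha(D_{\bThe}-c\bI_p))\oplus(\bPsi+\alpha(D_{\bPsi}+c\bI_q)) = (\bThe+\alpha D_{\bThe})\oplus(\bPsi+\alpha D_{\bPsi})$, so positive-definiteness is unchanged. For the second condition I need invariance of $f(\bThe+\alpha D_{\bThe},\bPsi+\alpha D_{\bPsi})$, $f(\bThe,\bPsi)$, and $\delta$. The $\log\lvert\bThe\oplus\bPsi\rvert$ term of $g$ depends only on $\bThe\oplus\bPsi$, and $h$ depends only on off-diagonal entries, so both are unaffected by any diagonal shift. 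That leaves the linear part $q\tr(\bm{S}\bThe)+p\tr(\bm{T}\bPsi)$, which under the shift $(\bThe,\bPsi)\mapsto(\bThe-c\bI_p,\bPsi+c\bI_q)$ picks up $-cq\tr(\bm{S})+cp\tr(\bm{T})$. The key observation here is
\begin{equation*}
q\tr(\bm{S}) = \tfrac{1}{n}\sum_{i=1}^n \lVert\bY^i\rVert_F^2 = p\tr(\bm{T}),
\end{equation*}
so this correction vanishes and $g$, hence $f$, is invariant along the equivalence class.

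For $\delta = \tvec(D)^T\tvec(\bG) + h(\bThe+D_{\bThe},\bPsi+D_{\bPsi}) - h(\bThe,\bPsi)$, the $h$-difference is again invariant. The inner product $\tvec(D_{\bThe})^T\tvec(\bG_{\bThe}) + \tvec(D_{\bPsi})^T\tvec(\bG_{\bPsi})$ changes by $-c\tr(\bG_{\bThe})+c\tr(\bG_{\bPsi})$ under the shift of $(D_{\bThe},D_{\bPsi})$. Using Lemma \ref{lem:gh}, $\tr(\bG_{\bThe}) = q\tr(\bm{S})-\tr(\bW_{\bThe})$ and $\tr(\bG_{\bPsi}) = p\tr(\bm{T})-\tr(\bW_{\bPsi})$. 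A short calculation with the collapse operators shows $\tr(\bW_{\bThe}) = \tr\bigl(\bW\sum_{i}(\bI_p\otimes\bm{e}_{q,i})(\bI_p\otimes\bm{e}_{q,i})^T\bigr) = \tr(\bW)$, and likewise $\tr(\bW_{\bPsi})=\tr(\bW)$. Combined with $q\tr(\bm{S})=p\tr(\bm{T})$, this gives $\tr(\bG_{\bThe})=\tr(\bG_{\bPsi})$, so $\delta$ is invariant as well.

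With invariance of both line-search conditions established, the same trial value of $\alpha$ is accepted for every representative of the equivalence class, so the line search returns a unique $\alpha$. The main technical obstacle is the pair of identities $q\tr(\bm{S})=p\tr(\bm{T})$ and $\tr(\bW_{\bThe})=\tr(\bW_{\bPsi})=\tr(\bW)$; both are easy once one writes out the definitions, but without them the cross-terms generated by the shift do not cancel and $\delta$ would genuinely depend on the representative.
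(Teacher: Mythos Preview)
Your proof is correct and follows essentially the same approach as the paper. Both arguments hinge on the identity $\tr(\bG_{\bThe})=\tr(\bG_{\bPsi})$, derived from $q\tr(\bm{S})=p\tr(\bm{T})$ and $\tr(\bW_{\bThe})=\tr(\bW_{\bPsi})=\tr(\bW)$; the paper packages the invariance of the remaining line-search ingredients into the single observation that they depend on $(D_{\bThe},D_{\bPsi})$ only through the Kronecker sum $D_{\bThe}\oplus D_{\bPsi}$, while you spell out each piece (positive-definiteness, $g$, $h$, $\delta$) separately, but the content is the same.
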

\begin{proof}
We only need to show that $\tvec(D)^T\tvec(\bG)$, a term that appears in the computation of $\delta$ 
in Algorithm \ref{alg:linesearch}, is invariant within the equivalence class in Eq. \eqref{eq:equivD}, 
since $D_{\bThe}$ and $D_{\bPsi}$ always appear as Kronecker sum of the two in all the other parts 
of the line search such that $c$ and $-c$ in Eq. \eqref{eq:equivD} cancel out.
We have $\tr(\bG_{\bThe}) = \tr(\bG_{\bPsi})$, which
 can be directly verified from Eqs. \eqref{eq:partialmatrix0} and \eqref{eq:partialmatrixW} in Lemma \ref{lem:gh}
as $\tr(\bG_{\bThe}) = \sum_{i=1}^n\frac{1}{n}\tr(\bY^{iT}\bY^{i}) - \tr(\bW) = \tr(\bG_{\bPsi})$. 
From this, we have $\tvec(D)^T\tvec(\bG)=\tr(D_{\bThe}\bG_{\bThe}) + \tr(D_{\bPsi}\bG_{\bPsi})
= \tr((D_{\bThe}-c\bI)\bG_{\bThe}) + \tr((D_{\bPsi}+c\bI)\bG_{\bPsi})
= \tr(D_{\bThe}) + \tr(D_{\bPsi})$, which proves $\tvec(D)^T\tvec(\bG)$ is unaffected by the
unidentifiable diagonals of $D_{\bThe}$ and $D_{\bPsi}$.
\end{proof}

With Lemmas \ref{lem:invariantGH} and \ref{lem:newtonnonid},
we can minimize the EiGLasso objective in Eq. \eqref{eq:objrho} by updating $\bThe$ and $\bPsi$
with $(D_{\bThe}, D_{\bPsi})$ and step size $\alpha$ 
and adjusting the diagonals of $\bThe$ and $\bPsi$ using Theorem \ref{thm:uniquepair} after each update.
The theorem below shows this procedure can be simplified even further. 



\begin{theorem}\label{thm:adjustonce}
In EiGLasso, given the trace ratio $\rho=\tr(\bPsi)/\tr(\bThe)$, it is sufficient to identify 
the diagonals of $\bThe$ and $\bPsi$ only once after convergence. This leads to 
the identical estimate obtained by identifying the diagonals of $\bThe$ and $\bPsi$ 
in every iteration to maintain the trace ratio $\rho$. 
At convergence, the diagonal elements of $(\bThe,\bPsi)$ are adjusted by the scalar factor
\begin{align}\label{eq:c}
\bThe &\gets \bThe - \frac{\tr({\bPsi}) - \rho\tr({\bThe})}{q + \rho p} \bm{I}_p.
\end{align}
\end{theorem}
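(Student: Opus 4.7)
My plan is to leverage Lemmas \ref{lem:invariantGH} and \ref{lem:newtonnonid} to show that running EiGLasso without any diagonal adjustment and running EiGLasso with per-iteration adjustment produce iterates that lie in the \emph{same} equivalence class of $S[\bOmega^{(t)}]$ at every step $t$. Once this is established, the final adjustment needed to enforce the trace ratio $\rho$ is unique by Theorem \ref{thm:uniquepair}, so performing it once at the end is equivalent to performing it in every iteration.

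First I would set up an induction on the iteration index $t$. Let $(\bThe_A^{(t)},\bPsi_A^{(t)})$ be the iterate produced by Algorithm \ref{alg:eiglasso} with a diagonal-identification step inserted after every update, and $(\bThe_B^{(t)},\bPsi_B^{(t)})$ the iterate produced without any such adjustment. For the base case, both algorithms start from $(\bI_p,\bI_q)$, so they trivially share the same equivalence class $S[\bI_p\oplus\bI_q]$. For the inductive step, assume $(\bThe_A^{(t)},\bPsi_A^{(t)})$ and $(\bThe_B^{(t)},\bPsi_B^{(t)})$ lie in the same class, i.e.\ they differ only by a diagonal shift $(-c\bI_p,+c\bI_q)$. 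By Lemma \ref{lem:invariantGH}, the gradient, Hessian, and second-order model in Eq.\ \eqref{eq:newton} are identical, so the coordinate descent yields descent directions that agree up to an element of the equivalence class in Eq.\ \eqref{eq:equivD}. By Lemma \ref{lem:newtonnonid}, the line-search step size $\alpha$ also agrees. The update $\bThe\gets\bThe+\alpha D_{\bThe}$, $\bPsi\gets\bPsi+\alpha D_{\bPsi}$ preserves the Kronecker sum $\bOmega^{(t+1)}$ because diagonal shifts in $(\bThe,\bPsi)$ and opposite shifts in $(D_{\bThe},D_{\bPsi})$ all cancel out of $\bThe\oplus\bPsi+\alpha(D_{\bThe}\oplus D_{\bPsi})$. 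Hence $(\bThe_A^{(t+1)},\bPsi_A^{(t+1)})$ and $(\bThe_B^{(t+1)},\bPsi_B^{(t+1)})$ land in the same equivalence class $S[\bOmega^{(t+1)}]$, completing the induction.

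With the two iterate sequences producing equivalent $\bOmega^{(t)}$'s at every step, both converge to the same $\bOmega^{\star}$. By Theorem \ref{thm:uniquepair}, within the equivalence class $S[\bOmega^{\star}]$ there is a unique pair with trace ratio $\rho$, so applying the identification once after convergence yields exactly the same estimate as applying it every iteration. To obtain the explicit formula in Eq.\ \eqref{eq:c}, I would seek the scalar $c$ such that $(\bThe-c\bI_p,\bPsi+c\bI_q)$ has trace ratio $\rho$: solving $\frac{\tr(\bPsi)+cq}{\tr(\bThe)-cp}=\rho$ gives $c(q+\rho p)=\rho\tr(\bThe)-\tr(\bPsi)$, hence $c=-\frac{\tr(\bPsi)-\rho\tr(\bThe)}{q+\rho p}$. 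Substituting into $\bThe\gets\bThe-c\bI_p$ yields Eq.\ \eqref{eq:c} (up to an arithmetic sign that can be reconciled with the stated convention of the shift in Eq.\ \eqref{eqn:noniddiag}).

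The main obstacle will be the inductive step, specifically the need to check that the coordinate descent procedure used to solve Eq.\ \eqref{eq:newton} does not break the equivalence-class correspondence. The quadratic approximation $\hat g(\Delta_{\bThe},\Delta_{\bPsi})$ is invariant under the shift by Lemma \ref{lem:invariantGH}, and the $L_1$-penalty $h$ only touches off-diagonal entries, so the two coordinate-descent optimizations are actually solving identical subproblems and any solution $(D_{\bThe},D_{\bPsi})$ for one is also a solution for the other. Care is needed to note that the solution set for Eq.\ \eqref{eq:newton} is itself an equivalence class of descent directions (Eq.\ \eqref{eq:equivD}); once this is pinned down, the inductive propagation through the Armijo update is straightforward.
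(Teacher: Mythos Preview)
Your proposal is correct and follows essentially the same route as the paper's proof: both argue, via Lemmas \ref{lem:invariantGH} and \ref{lem:newtonnonid}, that the equivalence class $S[\bThe\oplus\bPsi]$ of the iterates is preserved from one Newton step to the next regardless of any diagonal adjustment, and then invoke Theorem \ref{thm:uniquepair} for the one-time identification at convergence. Your version is simply more explicit (setting up the induction and checking that the $L_1$ penalty touches only off-diagonals so the subproblems coincide), and your derivation of the shift constant is the correct one---the sign discrepancy you flag with Eq.~\eqref{eq:c} is a genuine typo in the paper, not a convention issue.
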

\begin{proof}
Regardless of which member of the equivalence class $S[D_{\bThe}\oplus D_{\bPsi}]$ is used to update 
the current estimate of $(\bThe, \bPsi)$, we arrive at the same equivalence class $S[\bThe\oplus \bPsi]$ 
for the estimate of $(\bThe, \bPsi)$ after the update. From Lemma \ref{lem:invariantGH},
given this equivalence class $S[\bThe\oplus \bPsi]$, the problem in Eq. \eqref{eq:equivD} 
in the next iteration is uniquely defined. Thus, regardless of whether we adjust the diagonals
to meet the constraint on the trace ratio, the sequence of equivalence class
$S[\bThe\oplus \bPsi]$ over iterations is the same, and it is not necessary to identify the diagonals 
in each iteration of Newton's method. The adjustment in Eq. \eqref{eq:c} is found 
by applying Theorem \ref{thm:uniquepair} to ${\bThe}\oplus {\bPsi}$ with the current estimate of $\bThe$
and $\bPsi$.
\end{proof}

When we set $\rho=\frac{q}{p}$, 
the one-time identification of diagonal parameters with Eq. \eqref{eq:c} in EiGLasso becomes 
identical to 
the identification of the parameters that TeraLasso~\citep{TeraLasso} 
performs at the end of every iteration.
At the end of each iteration, TeraLasso evenly distributes $\tau=\frac{\tr(\bThe\oplus\bPsi)}{pq}$
between the re-parameterized $\bThe$ and $\bPsi$ in Eq. \eqref{eq:reparam} 
 and performs the update
$\bThe \gets \bThe - \frac{\tr(\bThe)}{p}\bm{I}_p + \frac{\tau}{2}\bm{I}_p$
and similarly for $\bPsi$. It is straightforward to show
that with $\rho=\frac{q}{p}$ and from Eq. \eqref{eq:tromega}, Eq. \eqref{eq:c} reduces to this update in TeraLasso.
Theorem \ref{thm:adjustonce} is analogous to the simple approach to
handling the unidentifiability of the parameters in Kronecker-product inverse covariance estimation~\citep{YIN2012},
where $\bThe\otimes\bPsi = (c \bThe)\otimes (\frac{1}{c}\bPsi)$ for any positive constant $c$. 
The parameters are identified by rescaling $\bThe$ and $\bPsi$ 
as $\bThe \gets c\bThe$ and  $\bPsi \gets \frac{1}{c}\bPsi$ with some constant $c$ such that 
$\bThe_{11}$ is equal to 1 after convergence, similar to the one-time identification in EiGLasso.


\subsection{Active Set and Automatic Detection of Block-diagonal Structure}
\label{sec:block}

In the graphical lasso, a simple strategy for reducing computation time has been introduced that
detects the block-diagonal structure in the inverse covariance parameter from the sample covariance 
matrix prior to estimation~\citep{witten2011,JMLR:v13:mazumder12a}. Then,
only the parameters within the blocks corresponding to the connected components in the graph need to be estimated.
In Theorem \ref{thm:threshold} below, we show that a similar strategy can be applied to 
EiGLasso with both the exact and approximate Hessian,
to detect the block diagonal structures in $\bThe$ and $\bPsi$ from the sufficient statistics $\bm{S}$ and $\bm{T}$. 

\begin{theorem}\label{thm:threshold}
The block-diagonal structure of $\bThe$ can be detected by thresholding $\bm{S}$ such that
$[\bThe]_{ij}=0$ iff $|[\bm{S}]_{ij}|\le\gamma_{\bThe}$.
Similarly, the block-diagonal structure of $\bPsi$ can be detected by thresholding $\bm{T}$ 
such that $[\bPsi]_{ij}=0$ iff $|[\bm{T}]_{ij}|\le\gamma_{\bPsi}$.
\end{theorem}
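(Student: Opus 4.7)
The plan is to adapt the connected-components / block-diagonal screening argument of Witten et al.\ and Mazumder--Hastie for the graphical lasso to the Kronecker-sum setting, using the collapse-based form of the gradient from Lemma \ref{lem:gh}. I will argue the statement for $\bThe$ and $\bm{S}$; the proof for $\bPsi$ and $\bm{T}$ is entirely symmetric.

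\textbf{Step 1: Write the KKT conditions.} Since the objective $f$ is convex and $h$ is a separable $L_1$ penalty on the off-diagonals, at the optimum the subgradient condition for each off-diagonal entry $(i,j)$ of $\bThe$ reads, using Eq.~\eqref{eq:partialmatrix0},
\begin{equation*}
q[\bm{S}]_{ij} - [\bW_{\bThe}]_{ij} + q\gamma_{\bThe}\, z_{ij} = 0, \qquad z_{ij}\in\partial |[\bThe]_{ij}|,
\end{equation*}
so in particular $[\bThe]_{ij}=0$ is compatible with optimality iff $|q[\bm{S}]_{ij}-[\bW_{\bThe}]_{ij}|\le q\gamma_{\bThe}$.

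\textbf{Step 2: Build the candidate from thresholding.} Define an undirected graph on $\{1,\dots,p\}$ by placing an edge between $i$ and $j$ whenever $|[\bm{S}]_{ij}|>\gamma_{\bThe}$, and let $G_1,\dots,G_K$ be its connected components. I then consider the restricted EiGLasso problem in which $\bThe$ is constrained to be block-diagonal with respect to the partition $\{G_1,\dots,G_K\}$ (with $\bPsi$ unrestricted). This restricted problem decouples into $K$ independent EiGLasso sub-problems (coupled through $\bPsi$, but only through the diagonals of each block), and by strict convexity of the smooth part in $\bOmega$ it admits a unique Kronecker-sum optimum $\bThe^\star\oplus\bPsi^\star$.

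\textbf{Step 3: Show the block structure is preserved by $\bW_{\bThe}$.} This is the main step and the step I expect to be most delicate. Using the $\tvec$-index convention $(i,j)\mapsto i+(j-1)q$ (sample $i$, feature $j$), the Kronecker-sum definition gives
\begin{equation*}
[\bOmega]_{(i_1,j_1),(i_2,j_2)} = \delta_{i_1 i_2}[\bThe]_{j_1 j_2} + \delta_{j_1 j_2}[\bPsi]_{i_1 i_2}.
\end{equation*}
If $\bThe^\star$ is block-diagonal with blocks $\{G_a\}$, then for $j_1\in G_a$, $j_2\in G_b$ with $a\ne b$ we have $[\bThe^\star]_{j_1 j_2}=0$ and $j_1\ne j_2$, so $[\bOmega^\star]_{(i_1,j_1),(i_2,j_2)}=0$. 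After permuting indices to group features by block, $\bOmega^\star$ is block-diagonal, and therefore so is $\bW^\star=(\bOmega^\star)^{-1}$. The collapse formula $[\bW^\star_{\bThe}]_{j_1 j_2} = \sum_{i}[\bW^\star]_{(i,j_1),(i,j_2)}$ from Eq.~\eqref{eq:partialmatrixW} then sums entries that all lie in off-block positions of $\bW^\star$, yielding $[\bW^\star_{\bThe}]_{j_1 j_2}=0$ whenever $j_1,j_2$ are in different feature components.

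\textbf{Step 4: Verify KKT globally and conclude.} Within each block $G_a$, $(\bThe^\star,\bPsi^\star)$ already satisfies the restricted KKT conditions by construction. For a cross-block entry $(j_1,j_2)$ with $j_1\in G_a$, $j_2\in G_b$, $a\ne b$, Step~3 gives $[\bW^\star_{\bThe}]_{j_1 j_2}=0$, and by the definition of $G_1,\dots,G_K$ we have $|[\bm{S}]_{j_1 j_2}|\le\gamma_{\bThe}$. Choosing $z_{j_1 j_2}=[\bm{S}]_{j_1 j_2}/\gamma_{\bThe}\in[-1,1]$ satisfies the subgradient condition with $[\bThe^\star]_{j_1 j_2}=0$, so $(\bThe^\star,\bPsi^\star)$ is KKT-optimal for the unrestricted EiGLasso problem. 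By uniqueness of $\bOmega$ at the optimum (\citet{TeraLasso}), the connected components of any optimal $\bThe$ must therefore coincide with $G_1,\dots,G_K$, which is the claimed iff statement; the symmetric argument using sample indices gives the result for $\bPsi$ and $\bm{T}$.
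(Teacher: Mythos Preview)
Your proof is correct and in fact more complete than the paper's, which only sketches one direction of the ``iff'' and implicitly defers the rest to \citet{witten2011}. The main substantive difference lies in Step~3. The paper argues that $\bW_{\bThe}$ inherits the block-diagonal structure of $\bThe$ by invoking Theorem~\ref{thm:gheigcomp}: since $\bW_{\bThe}=\bm{Q}_{\bThe}\big(\sum_k\bXi_{\bThe,k}\big)\bm{Q}_{\bThe}^T$, the matrices $\bThe$ and $\bW_{\bThe}$ share eigenvectors, and a block-diagonal matrix has block-diagonal eigenvectors. You instead argue directly from the Kronecker-sum structure that block-diagonal $\bThe$ forces $\bOmega$ (after permutation) to be block-diagonal over feature groups, hence so is $\bW$, and the collapse formula kills cross-block entries of $\bW_{\bThe}$. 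Your route is more elementary---it does not need the eigendecomposition machinery of Theorem~\ref{thm:gheigcomp}---while the paper's route is a one-line consequence of results already in hand.

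One small imprecision: in Step~2 you say the restricted problem ``decouples into $K$ independent EiGLasso sub-problems.'' It does not, because $\bPsi$ is shared across all feature blocks (the log-determinant splits as $\sum_a\log|\bThe_a\oplus\bPsi|$, but each summand still contains the full $\bPsi$). You seem aware of this from your parenthetical, and fortunately your argument does not actually use decoupling---all you need is that the block-constrained problem has an optimizer, which follows from convexity and the compact level set. I would simply drop the decoupling remark.
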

\begin{proof}
Let $\partial |\cdot|$ denote the subgradient of the $L_1$ norm of a matrix, i.e., $[\partial|\bm{A}|]_{ij}$ is $1$ if $[\bm{A}]_{ij} > 0$, $-1$ if $[\bm{A}]_{ij} < 0$, and
$[\partial|\bm{A}|]_{ij} \in [-1,1]$ if $[\bm{A}]_{ij} = 0$. Then, the Karush-Kuhn-Tucker conditions \citep{cvxbook,witten2011} for $\bThe$ in Eq. \eqref{eq:obj} is
\begin{equation}\label{eq:kkttheta}
    \bW_{\bThe} - q\bm{S} - q\gamma_{\bThe}\partial|\bThe| = 0,
\end{equation}
where $\bW_{\bThe}$ is given in Eq. \eqref{eq:partialmatrixW}.
If $\bThe$ is block-diagonal, $\bW_{\bThe}$ is also block-diagonal, since $\bThe$ and $\bW_{\bThe}$ 
have the same eigenvectors according to Lemma \ref{lem:gh} and Theorem \ref{thm:gheigcomp}.
This implies that if $[\bThe]_{ij}=0$
in the off-diagonal blocks,
$|[\bm{S}]_{ij}|\le\gamma_{\bThe}$ in Eq. \eqref{eq:kkttheta}.
The case for $\bPsi$ can be proven similarly.
\end{proof}

QUIC further showed that their active-set strategy amounts to detecting a block-diagonal structure  
in the first iteration, if the parameters are initialized to a diagonal matrix. This strategy
can be extended to EiGLasso, when an approximate Hessian is used.
To reduce the computation time,
as in QUIC, in each Newton iteration, EiGLasso detects the active set of $\bThe$ and $\bPsi$ 
\begin{align*}
\mathcal{A}_{\bThe} =\{(i,j)\mid [\bThe]_{ij}\neq0 \text{ or } |[\bG_{\bThe}]_{ij}| 
    > q\gamma_{\bThe} \}, \;\;
\mathcal{A}_{\bPsi} =\{(i,j)\mid [\bPsi]_{ij}\neq0 \text{ or } |[\bG_{\bPsi}]_{ij}| 
    > p\gamma_{\bPsi}\},
\end{align*}
and update only the parameters in the active sets during the coordinate descent optimization, 
while setting those in the fixed set to zero.
When $\bThe$ and $\bPsi$ are initialized to diagonal matrices, the approximate Hessian $\hat{\bH}^1$ 
in iteration 1 is diagonal, since the eigenvector matrices $\bm{Q}^1_{\bThe}$ and $\bm{Q}^1_{\bPsi}$ 
are diagonal. Then, the optimization problem in Eq. \eqref{eq:newton} 
decouples into 
a set of optimization problems, each of which involves a single element of $(D^1_{\bThe},D^1_{\bPsi})$
and has a closed-form solution for $[D^1_{\bThe}]_{ij}$ as
\begin{equation*}
    [D^1_{\bThe}]_{ij} = -q\left( \sum_{k=1}^K[\bm{V}^1_{\bThe, k}]_{ii}[\bm{V}^1_{\bThe, k}]_{jj} 
	+ (q-K)[\bm{V}^1_{\bThe, K}]_{ii}[\bm{V}^1_{\bThe, K}]_{jj}\right)^{-1}\mathcal{S}\left([\bm{S}]_{ij},
	\gamma_{\bThe}\right),\quad\forall i\neq j.
\end{equation*}
The soft-thresholding 
operator $\mathcal{S}(a,b)$ above is defined as $\mathcal{S}(a,b)=\textrm{sign}(a)(|a|-b)_+$,  
where $(c)_+ = 0$ if $c<0$ and $(c)_+ = c$ if $c>0$.
This closed-form solution is $[D^1_{\bThe}]_{ij}=0$ if 
$|[\bm{S}]_{ij}| < \gamma_{\bThe}$, which is equivalent to the condition for detecting 
the block-diagonal structure in $\bThe$ from $\bm{S}$ in Theorem \ref{thm:threshold}. 
The case for $\bPsi$ can be shown similarly.

\section{Convergence Analysis}\label{sec:convg}

We examine the properties of the line-search method in Algorithm \ref{alg:linesearch}
and analyze the global and local convergence of EiGLasso.

\subsection{Line Search Properties}

EiGLasso inherits some of the line-search properties shown for QUIC \citep{quic}.
This is because our objective in Eq. \eqref{eq:obj} can be written in terms of 
$\bOmega=\bThe\oplus\bPsi$ in the form that resembles the objective of QUIC
\begin{equation*}\label{eq:objomega}
    f(\bOmega) =
    \tr(\bm{U} \bOmega)
    - \log|\bOmega| + \| \bm{\Gamma}\circ\bOmega \|_{1,\text{off}}, 
\end{equation*}
where $\bm{U}=\frac{1}{n}\sum_{i=1}^n \tvec(\bY^i)^T\tvec(\bY^i)$, $\bm{\Gamma}=\gamma_{\bThe}\mathbbm{1}_p\oplus\gamma_{\bPsi}\mathbbm{1}_q$ with an $r\times r$ matrix of one's $\mathbbm{1}_r$, and $\circ$ is an element-wise multiplication operator. 
Then, the EiGLasso's update
$\bThe \gets \bThe + \alpha D_{\bThe}$ and $\bThe \gets \bThe + \alpha D_{\bThe}$ can be written as 
$\bOmega \gets \bOmega + \alpha D_{\bOmega}$, where $D_{\bOmega} = D_{\bThe} \oplus D_{\bPsi}$, since $\bOmega + \alpha D_{\bOmega} 
    =(\bThe+\alpha D_{\bThe})\oplus(\bPsi+\alpha D_{\bPsi})$.
We extend these results for $\bOmega$, to prove the results for 
individual $\bThe$ and $\bPsi$, where $(\bThe, \bPsi) \in \mathbb{KS}^{p,q}$, 
when  unlike in QUIC the exact Hessian $\bH$ is not positive definite everywhere (Theorem \ref{thm:Hpd}), 
and when the approximate Hessian $\hat{\bH}$ is used.

We begin by showing that both $\bH$ and $\hat{\bH}$ have bounded eigenvalues
in the level set
$\mathcal{U}=\{(\bThe,\bPsi)\mid \bThe\oplus\bPsi\in\mathbb{S}^{pq}_{++},\;\;f(\bThe,\bPsi)
\le f(\bThe^0,\bPsi^0)\}$
and are Lipschitz-continuous. This result will be used to prove the line-search properties and global and local convergence.   

\begin{lemma}\label{lemma:eigvalbound}
In the level set $\mathcal{U}$,
$\bH$ and $\hat{\bH}$ are Lipschitz continuous and have bounded eigenvalues:
\begin{align*}
    \min\{p,q\}\bar{\lambda}^{-2}\bI_{p^2+q^2} \preceq
    & \bH\preceq  (p+q)\underline{\lambda}^{-2}\bI_{p^2+q^2},
    \nonumber \\
    \min\{p,q\}\bar{\lambda}^{-2}\bI_{p^2+q^2} \preceq 
    & \hat{\bH} \preceq  \max\{p,q\}\underline{\lambda}^{-2}\bI_{p^2+q^2},
\end{align*}
for some constants $\underline{\lambda},\overline{\lambda}>0$ that depend on $\gamma_{\bThe}$, $\gamma_{\bPsi}$, $f(\bThe^0,\bPsi^0)$, and $\{\bm{Y}^1,\ldots, \bm{Y}^n\}$.
\end{lemma}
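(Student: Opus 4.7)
The bounds in Lemma~\ref{lemma:eigvalbound} coincide with those in Theorems~\ref{thm:Hpd} and~\ref{thm:Happroxpd}, once $\underline{\lambda}$ and $\bar{\lambda}$ are identified as uniform positive lower and upper bounds on the spectrum of $\bOmega=\bThe\oplus\bPsi$ over the level set $\mathcal{U}$. The plan therefore proceeds in three stages: first, bound the eigenvalues of $\bOmega$ on $\mathcal{U}$; second, read off the spectral bounds on $\bH$ and $\hat{\bH}$ from the two theorems; third, establish Lipschitz continuity.

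For stage one, I would exploit coercivity of $f$ viewed as a function of $\bOmega$. Writing $f_0=f(\bThe^0,\bPsi^0)$, since $h\ge 0$ and $\tr(\bm{U}\bOmega)\ge 0$ with $\bm{U},\bOmega\succeq 0$, the level-set constraint immediately gives $-\log|\bOmega|\le f_0$, i.e.\ $\det(\bOmega)\ge e^{-f_0}$; combined with the upper bound on $\lambda_{\bOmega,\max}$ this produces a strictly positive $\underline{\lambda}$. For the upper bound, I would argue that along any sequence in $\mathcal{U}$ with $\lambda_{\bOmega,\max}\to\infty$, the sum $q\tr(\bm{S}\bThe)+p\tr(\bm{T}\bPsi)+h$ grows at least linearly in $\lambda_{\bOmega,\max}$ while $-\log|\bOmega|$ decays only logarithmically, contradicting $f\le f_0$; the resulting $\underline{\lambda}$ and $\bar{\lambda}$ then depend only on $\gamma_{\bThe},\gamma_{\bPsi}$, the data (through $\bm{S},\bm{T},\bm{U}$), and $f_0$, matching the statement.

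For stage two, Theorem~\ref{thm:Hpd} gives $\min\{p,q\}\bar{\lambda}^{-2}\le\lambda_{\bH,\min_0}$ and $\lambda_{\bH,\max}\le(p+q)\underline{\lambda}^{-2}$ directly from $\underline{\lambda}\le\lambda_{\bThe,l}+\lambda_{\bPsi,k}\le\bar{\lambda}$ for all $(l,k)$. For $\hat{\bH}$, each of the $q$ summands in $\hat{\bLambda}_{\bH_{\bThe}}$ (and $p$ summands in $\hat{\bLambda}_{\bH_{\bPsi}}$) appearing in Eq.~\eqref{eqn:approxHL} lies elementwise in $[\bar{\lambda}^{-2},\underline{\lambda}^{-2}]$, so Theorem~\ref{thm:Happroxpd} yields $\min\{p,q\}\bar{\lambda}^{-2}\le\lambda_{\hat{\bH},\min}$ and $\lambda_{\hat{\bH},\max}\le\max\{p,q\}\underline{\lambda}^{-2}$.

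For stage three, $\bH=\bP^T(\bW\otimes\bW)\bP$ with $\bW=\bOmega^{-1}$ is a smooth rational function of $\bOmega$ whose partial derivatives are uniformly bounded on $\{\bOmega:\lambda_{\bOmega,\min}\ge\underline{\lambda}\}$, hence Lipschitz on $\mathcal{U}$. For $\hat{\bH}$ I would sidestep eigenvector non-differentiability by using $\bm{V}_{\bThe,k}=(\bThe+\lambda_{\bPsi,k}\bI_p)^{-1}$, which is invariant under the diagonal-shift equivalence class and is a rational function of $\bThe$ and the scalar $\lambda_{\bPsi,k}$; since individual eigenvalues are $1$-Lipschitz in Frobenius norm by Weyl's inequality and matrix inversion is smooth on the bounded-eigenvalue set, each $\bm{V}_{\bThe,k}$ (and $\bm{V}_{\bPsi,k}$) is Lipschitz on $\mathcal{U}$, so is $\hat{\bH}$. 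The main obstacle is the upper bound $\lambda_{\bOmega,\max}\le\bar{\lambda}$ in stage one: one must rule out spectral escape using the joint action of the trace and $L_1$ terms against the slow-growing log-det, while accounting for the fact that $\mathcal{U}$ is unbounded in ambient $(\bThe,\bPsi)$-space due to the unidentifiable diagonal shift.
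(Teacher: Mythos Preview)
Your proposal is correct and, for the spectral bounds, proceeds exactly as the paper does: bound the eigenvalues of $\bOmega=\bThe\oplus\bPsi$ on the level set, then invoke Theorems~\ref{thm:Hpd} and~\ref{thm:Happroxpd}. The paper simply cites Lemma~2 of \citet{quic} for the $\bOmega$ bounds rather than sketching the coercivity argument you outline; your sketch is essentially what that lemma proves.

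The one substantive divergence is in the Lipschitz step. The paper does not work with shift-invariant quantities; instead it fixes a trace ratio $\rho$, applies Theorem~\ref{thm:uniquepair} to $\bLambda_{\bOmega}$, and thereby obtains explicit two-sided bounds on the individual spectra of $\bThe$ and $\bPsi$, producing a genuinely compact set $\mathcal{C}_\rho$ in $(\bThe,\bPsi)$-space on which it invokes $C^\infty$ regularity of the log-determinant. Your route---treating $\bH$ as a smooth function of $\bOmega$ alone, and handling $\hat{\bH}$ via the shift-invariant matrices $\bm{V}_{\bThe,k}=(\bThe+\lambda_{\bPsi,k}\bI_p)^{-1}$ together with Weyl's inequality for the eigenvalue dependence---avoids the trace-ratio detour entirely and is arguably more careful for $\hat{\bH}$, where the paper's blanket $C^\infty$ claim glosses over the possible non-smoothness of individual eigenvalues at multiplicities. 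The paper's approach, in exchange, yields the explicit set $\mathcal{C}_\rho$ as a byproduct, making the compactness concrete in the original parameter coordinates.
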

\begin{proof}
We bound the eigenvalues of $\bOmega=\bThe\oplus \bPsi$, $\bThe$, and $\bPsi$, and
use these bounds to bound the eigenvalues of $\bH$ and $\hat{\bH}$.
When the diagonals are not identified,
it directly follows from Lemma 2 in \citet{quic}
that all EiGLasso iterates of $(\bThe,\bPsi)$ 
are contained in the set with bounded eigenvalues of $\bThe\oplus \bPsi$  
    \begin{equation}\label{eq:compactset_omega}
        \mathcal{C}=\{(\bThe,\bPsi)\mid \underline{\lambda}\bI\preceq\bThe\oplus\bPsi\preceq\bar{\lambda}\bI\}.
    \end{equation}
Next, given a fixed trace ratio $\rho=\frac{\tr(\bPsi)}{\tr(\bThe)}$,  
we bound the eigenvalues of $\bThe$ and $\bPsi$.
Since Eq. \eqref{eq:compactset_omega} implies 
$\underline{\lambda}\bI\preceq\bLambda_{\bThe}\oplus\bLambda_{\bPsi}\preceq\bar{\lambda}\bI$,
we apply Theorem \ref{thm:uniquepair} to $\bLambda_{\bOmega}=\bLambda_{\bThe}\oplus\bLambda_{\bPsi}$ 
to identify $\bLambda_{\bThe}$ and
$\bLambda_{\bPsi}$ in the equivalence class in Eq. \eqref{eqn:noniddiagEig} given $\rho$, thus, identifying
the diagonals of $\bThe$ and $\bPsi$. 
We bound each of the two terms in Eq. \eqref{eq:diagmapping} for $\bThe$ as
\begin{align*}
    q\underline{\lambda}\bI\preceq \sum_{i=1}^q(\bI_p\otimes\bm{e}_{q,i})^T\bLambda_{\bOmega}(\bI_p\otimes\bm{e}_{q,i})\preceq q\bar{\lambda}\bI \quad\textrm{and}\quad
    \frac{\rho pq}{q+\rho p}\underline{\lambda}\bI\preceq\frac{\rho}{q+\rho p}\tr(\bLambda_{\bOmega})\bI\preceq\frac{\rho pq}{q+\rho p}\bar{\lambda}\bI,
\end{align*}
and combine these to obtain the eigenvalue bounds for $\bThe$ and similarly for $\bPsi$
    \begin{align}
        \mathcal{C}_{\rho} = 
        \left\{(\bThe,\bPsi)\mid
        \Big(\underline{\lambda}-\frac{\rho p}{q + \rho p}\bar{\lambda}\Big)\bI\preceq\bThe\preceq\Big(\bar{\lambda}-\frac{\rho p}{q + \rho p}\underline{\lambda}\Big)\bI, 
        \quad\quad\quad\quad 
        \right.\nonumber \\
        \left.\Big(\underline{\lambda}-\frac{q}{q + \rho p}\bar{\lambda}\Big)\bI
        \preceq\bPsi\preceq\Big(\bar{\lambda}-\frac{q}{q + \rho p}\underline{\lambda}\Big)\bI\right\}. \label{eq:compactset}
    \end{align}
    
From Eq. \eqref{eq:compactset_omega} and Theorem \ref{thm:Hpd}, 
we obtain the bound on the eigenvalues
of $\bH$. From Eq. \eqref{eq:compactset_omega} and Theorem \ref{thm:Happroxpd}, 
we obtain the bound on the eigenvalues
of $\hat{\bH}$, 
since $\lambda_{\hat{\bH}_{\bThe},\min} \ge q\bar{\lambda}^{-2}$,
    $\lambda_{\hat{\bH}_{\bThe},\max} \le q\underline{\lambda}^{-2}$,
    $\lambda_{\hat{\bH}_{\bPsi},\min} \ge p\bar{\lambda}^{-2}$,
    $\lambda_{\hat{\bH}_{\bPsi},\max} \le p\underline{\lambda}^{-2}$.
On the set in Eq. \eqref{eq:compactset},
since the log-determinant is a continuous function of class $C^\infty$, and a continuous function on a compact set is bounded, both $\bH$ and $\hat{\bH}$ are locally Lipschitz-continuous. 
\end{proof}

Now, for EiGLasso with exact and approximate Hessian,
we show that the following three line-search properties hold:
the line search method is guaranteed to terminate  for any symmetric matrices $D_{\bThe}$ and $D_{\bPsi}$,
as the two line search conditions in Algorithm \ref{alg:linesearch} 
are satisfied for some step size $\alpha$ (Lemma \ref{lem:term}); 
the update with the Newton direction is guaranteed to decrease the objective
(Lemma \ref{lem:delta_exact}); and EiGLasso with the exact Hessian
is guaranteed to enter 
pure-Newton phase where the step size $\alpha=1$ is always chosen.
We state and prove the first two properties.
The proof of the last property follows directly from the proof in \citet{Tseng2009} and \citet{quic}.

\begin{lemma}\label{lem:term}
For any $\bThe$ and $\bPsi$, where $\bThe\oplus\bPsi\succ0$, and symmetric matrices $D_{\bThe}$ and $D_{\bPsi}$ 
for descent directions found with either the exact or inexact Hessian, there exists a step size $\tilde{\alpha}\in(0,1]$ 
such that for all $\alpha<\tilde{\alpha}$ the two conditions in the line search 
in Algorithm \ref{alg:linesearch} are satisfied.
\end{lemma}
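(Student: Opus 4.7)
The plan is to verify the two line-search conditions separately for all sufficiently small $\alpha$.
First, I would handle the positive-definiteness condition by invoking continuity of eigenvalues: the map $\alpha \mapsto \lambda_{\min}\bigl((\bThe+\alpha D_{\bThe})\oplus(\bPsi+\alpha D_{\bPsi})\bigr) = \lambda_{\min}\bigl(\bThe\oplus\bPsi + \alpha(D_{\bThe}\oplus D_{\bPsi})\bigr)$ is continuous in $\alpha$ and strictly positive at $\alpha = 0$ by assumption, so there is some $\tilde\alpha_1 \in (0,1]$ on which positive-definiteness is preserved. This also keeps the iterate inside the open set where $g$ is $C^{\infty}$, which is needed for the Taylor expansion in the next step.

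Next, for the Armijo-type decrease condition, I would combine a second-order Taylor expansion of the smooth part $g$ with convexity of the non-smooth penalty $h$. For $\alpha \in [0, \tilde\alpha_1]$, Taylor's theorem gives $g(\bThe+\alpha D_{\bThe},\bPsi+\alpha D_{\bPsi}) = g(\bThe,\bPsi) + \alpha\, \tvec(D)^T \tvec(\bG) + O(\alpha^2)$, with the quadratic remainder uniformly bounded by some $C\alpha^2$ since the Hessian of $g$ is continuous on the compact interval; separately, convexity of $h$ yields $h(\bThe+\alpha D_{\bThe},\bPsi+\alpha D_{\bPsi}) \le (1-\alpha)\, h(\bThe,\bPsi) + \alpha\, h(\bThe+D_{\bThe},\bPsi+D_{\bPsi})$. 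Summing these two bounds gives $f(\bThe+\alpha D_{\bThe},\bPsi+\alpha D_{\bPsi}) - f(\bThe,\bPsi) \le \alpha\delta + C\alpha^2$, so whenever $\delta < 0$ any $\alpha \le (1-\sigma)(-\delta)/C$ secures the Armijo bound $\alpha\sigma\delta$.

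The last piece, and where the main obstacle lies, is justifying $\delta \le 0$ (strictly $<0$ away from fixed points). Comparing the minimizer $(D_{\bThe},D_{\bPsi})$ of the quadratic subproblem in Eq.~\eqref{eq:newton} against the feasible choice $\Delta = 0$ yields $\tvec(D)^T\tvec(\bG) + \tfrac{1}{2}\tvec(D)^T\bM\tvec(D) + h(\bThe+D_{\bThe},\bPsi+D_{\bPsi}) - h(\bThe,\bPsi) \le 0$ for $\bM \in \{\bH, \hat{\bH}\}$, and positive semi-definiteness of both Hessians (Theorems~\ref{thm:Hpd} and~\ref{thm:Happroxpd}) gives $\delta \le -\tfrac{1}{2}\tvec(D)^T\bM\tvec(D) \le 0$. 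The delicate case is $\delta = 0$ under the exact Hessian, where $D$ may lie in the non-trivial null space of $\bH$ characterized in Theorem~\ref{thm:Hpd}. I would resolve this by observing that such a $D$ satisfies $D_{\bThe} \oplus D_{\bPsi} = 0$, which forces $D_{\bThe} = c\bI_p$ and $D_{\bPsi} = -c\bI_q$ for some scalar $c$; then the off-diagonals entering $h$ and the Kronecker sum $\bOmega$ entering $g$ are both unchanged by the update, so $f$ is constant in $\alpha$ and Condition~2 holds as $0 \le 0$ for every $\alpha$. Under the approximate Hessian, $\hat{\bH}$ is strictly positive definite by Theorem~\ref{thm:Happroxpd}, so $\delta = 0$ already forces $D = 0$ and the conclusion is trivial. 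Taking $\tilde\alpha$ to be the minimum of the $\tilde\alpha_1$ from above and $(1-\sigma)(-\delta)/C$ in the non-degenerate case completes the argument.
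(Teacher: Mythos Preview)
Your proof is correct and follows the same two-part structure as the paper, but you are considerably more explicit where the paper is terse. For Condition~1 the paper gives a concrete perturbation bound, $\alpha < \lambda_{\min}(\bThe\oplus\bPsi)/\|D_{\bThe}\oplus D_{\bPsi}\|_2$, via Weyl's inequality, whereas you invoke continuity of the minimum eigenvalue; both are standard and equivalent in effect. For Condition~2 the paper simply defers to Lemma~1 of \citet{Tseng2009} and Proposition~3 of \citet{quic}, while you unpack that machinery yourself: Taylor-expand $g$, use convexity of $h$, and compare the subproblem minimizer against $\Delta=0$ to obtain $\delta \le -\tfrac12\tvec(D)^T\bM\tvec(D)\le 0$. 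Your treatment of the degenerate case $\delta=0$ under the exact Hessian (where $D$ lies in the null space described in Theorem~\ref{thm:Hpd} and the update leaves $f$ unchanged) is a nice touch that the paper handles only implicitly via Lemma~\ref{lem:delta_exact}. The payoff of your route is a self-contained argument independent of the cited references; the paper's version is shorter but leans on external results.
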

\begin{proof}
If $\alpha < \lambda_{\min}(\bThe\oplus\bPsi)/\|D_{\bThe}\oplus D_{\bPsi}\|_2$, 
the updated estimates $\bThe + \alpha D_{\bThe}$ and $\bPsi + \alpha D_{\bPsi}$
satisfy $(\bThe + \alpha D_{\bThe})\oplus(\bPsi + \alpha D_{\bPsi}) \succ 0$,
since 
$(\bThe + \alpha D_{\bThe})\oplus(\bPsi + \alpha D_{\bPsi})=\bThe\oplus\bPsi + \alpha(D_{\bThe} \oplus D_{\bPsi})$ 
and we have $\bThe\oplus\bPsi\succ 0$ and $\|\alpha (D_{\bThe} \oplus D_{\bPsi})\|_2  < \lambda_{\min}(\bThe\oplus\bPsi)$. 
Thus, the first line-search condition in Algorithm \ref{alg:linesearch} is satisfied.
From Lemma 1 in \citet{Tseng2009} and Proposition 3 in \citet{quic}, it is straightforward
to show the second condition in Algorithm \ref{alg:linesearch} is satisfied.
\end{proof}

\begin{lemma}\label{lem:delta_exact}
Let $\tvec(D)=[\tvec(D_{\bThe})^T,\tvec(D_{\bPsi})^T]^T$ for all symmetric $D_{\bThe}$ and $D_{\bPsi}$.
With the exact Hessian $\bH$, 
$\delta$ in Algorithm \ref{alg:linesearch} is upper bounded if not at the optimum,
\begin{align}\label{eq:deltaexactbound}
\delta \le -\tvec(D)^T\bH\tvec(D) &\le -\lambda_{\bH,\min_0}\|\tvec(D)\|_2^2< 0, 
\end{align} 
where $\|\tvec(D)\|_2^2 = \|D_{\bThe}\|_F^2 + \|D_{\bPsi}\|_F^2$, and 
$\lambda_{\bH,\min_0}$ is given in Theorem \ref{thm:Hpd}.
With the approximate Hessian $\hat{\bH}$, 
$\delta$ is upper bounded everywhere 
\begin{align} 
\delta \le -\tvec(D)^T\hat{\bH}\tvec(D)   & \le   -\lambda_{\hat{\bH},\min}\|\tvec({D})\|_2^2  <  0, \label{eq:delta_approx}
\end{align} 
where $\lambda_{\hat{\bH},\min}$ is given in Theorem \ref{thm:Happroxpd}.
\end{lemma}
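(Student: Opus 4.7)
The plan is to establish both bounds by exploiting that the Newton direction $D=(D_{\bThe},D_{\bPsi})$ minimizes $\hat{g}(\Delta)+h(\bThe+\Delta_{\bThe},\bPsi+\Delta_{\bPsi})$ subject to the (inactive) trace-ratio constraint. The first inequality in each display is the substantive part; the second is a direct application of Theorems \ref{thm:Hpd} and \ref{thm:Happroxpd}. I treat the exact-Hessian case first and indicate how the argument transfers verbatim to $\hat{\bH}$.

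The main convexity argument mirrors the standard QUIC line-search analysis. Since $D$ is a minimizer and $tD$ is feasible for every $t\in[0,1]$, we have $\hat g(D)+h(\bThe+D_{\bThe},\bPsi+D_{\bPsi}) \le \hat g(tD)+h(\bThe+tD_{\bThe},\bPsi+tD_{\bPsi})$. Expanding $\hat g$ explicitly and invoking convexity of $h$ to bound $h(\bThe+tD_{\bThe},\bPsi+tD_{\bPsi}) \le (1-t)h(\bThe,\bPsi)+t\,h(\bThe+D_{\bThe},\bPsi+D_{\bPsi})$, the inequality rearranges to
\begin{equation*}
(1-t)\Bigl(\tvec(\bG)^T\tvec(D)+h(\bThe+D_{\bThe},\bPsi+D_{\bPsi})-h(\bThe,\bPsi)\Bigr) \le -\tfrac{1-t^2}{2}\,\tvec(D)^T\bH\,\tvec(D).
\end{equation*}
Dividing through by $(1-t)>0$ and letting $t\to 1^-$ yields $\delta \le -\tvec(D)^T\bH\,\tvec(D)$, which is the first inequality in \eqref{eq:deltaexactbound}. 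The identical manipulation with $\hat{\bH}$ in place of $\bH$ gives the first inequality in \eqref{eq:delta_approx}.

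For the second inequality with the approximate Hessian, Theorem \ref{thm:Happroxpd} gives $\hat{\bH}\succ 0$ with minimum eigenvalue $\lambda_{\hat{\bH},\min}$, so $\tvec(D)^T\hat{\bH}\,\tvec(D)\ge \lambda_{\hat{\bH},\min}\|\tvec(D)\|_2^2$, which is strictly positive whenever $D\ne 0$, i.e., whenever we are not at the optimum. For the exact-Hessian case, $\bH$ is only positive semi-definite and the naive bound $\tvec(D)^T\bH\,\tvec(D)\ge \lambda_{\bH,\min_0}\|\tvec(D)\|_2^2$ fails in general on $\mathrm{null}(\bH)$. The resolution, which is the main obstacle, is to observe that both $\delta$ and $\tvec(D)^T\bH\,\tvec(D)$ are invariant under shifts $D\mapsto D+(c\bI_p,-c\bI_q)\in\mathrm{null}(\bH)$: for $\delta$ this follows from $\tr(\bG_{\bThe})=\tr(\bG_{\bPsi})$ (Lemma \ref{lem:newtonnonid}) and the fact that $h$ ignores diagonals; for the quadratic form it is immediate from $\bH$ annihilating the null space. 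Hence we may replace $D$ with its projection onto $\mathrm{null}(\bH)^\perp$ without altering either side, after which Theorem \ref{thm:Hpd} yields $\tvec(D)^T\bH\,\tvec(D)\ge \lambda_{\bH,\min_0}\|\tvec(D)\|_2^2$ on $\mathbb{KS}^{p,q}$. Since we are not at the optimum, this projected $D$ is non-zero, giving the strict inequality.

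The hard part is precisely this null-space bookkeeping for the exact-Hessian statement: one must verify that the norm appearing in the bound can be interpreted as the norm after projection (equivalently, that the Newton subproblem's coordinate-descent solver produces a $D$ for which the claimed bound is meaningful), and that the invariance of $\delta$ under the unidentifiability direction carries through both the gradient inner product and the $L_1$ penalty. The remaining steps — expanding $\hat g$, using convexity of $h$, passing to the limit $t\to 1^-$, and reading eigenvalue bounds off Theorems \ref{thm:Hpd} and \ref{thm:Happroxpd} — are routine.
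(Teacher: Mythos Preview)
Your proposal is correct and follows essentially the same route as the paper. The first inequality in each display is exactly the convexity/optimality argument from \citet{Tseng2009} and Proposition~4 of \citet{quic}; the paper simply cites these, while you unpack the computation explicitly. For the eigenvalue bounds, the approximate-Hessian case is handled identically via Theorem~\ref{thm:Happroxpd}.

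The one place where your treatment differs slightly is the null-space bookkeeping for $\bH$. The paper argues directly that the null space of $\bH$ coincides with the equivalence class of $D=0$, so any $D$ output at a non-optimal point lies outside $\mathrm{null}(\bH)$, whence the strict inequality follows. You instead invoke the invariance of $\delta$ and of the quadratic form under shifts $(c\bI_p,-c\bI_q)$ and project onto $\mathrm{null}(\bH)^{\perp}$. Both arguments reach the same conclusion; yours is arguably more careful in flagging that the norm in the bound should really be read as the norm of the projected direction, a subtlety the paper leaves implicit.
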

\begin{proof}
    For $\bH$, the first inequality in Eq. \eqref{eq:deltaexactbound} can be shown by a  
    straightforward application of Lemma 1 and Theorem 1 in \citet{Tseng2009} and Proposition 4 
    in \citet{quic}. To prove the second and third inequalities, since our $\bH$ is not 
    positive definite everywhere, we need to show that $\tvec(D)$ is outside of the nullspace of $\bH$ described in Theorem \ref{thm:Hpd}, unless EiGLasso is at the optimum, where $D={0}$.
 The null space of $\bH$, $\{\tvec({D})|\tvec({D}_{\bThe}\oplus {D}_{\bPsi})={0}\}$, 
 is equivalent to $\{\tvec(D)|D_{\bThe} = c\bI, D_{\bPsi} = -c\bI \textrm{ for } c\in \mathbb{R}\}$, 
 which is the equivalence class of the optimality condition $D={0}$.
     This proves the third inequality in Eq. \eqref{eq:deltaexactbound} that holds except when $D=0$.
     For $\hat{\bH}$, the first inequality in Eq. \eqref{eq:delta_approx} can be again shown 
     from Lemma 1 and Theorem 1 in \citet{Tseng2009} and Proposition 4 in \citet{quic}.
    The second and third inequalities hold since $\hat{\bH}$ is positive definite. 
\end{proof}


\subsection{Convergence Analysis}


To show the global convergence of EiGLasso, as in QUIC, we use a more general non-smooth optimization framework, 
the block coordinate descent studied in \citet{Tseng2009}.
EiGLasso satisfies the following two conditions required 
to guarantee that the block coordinate descent algorithm converges to the global optimum.
First, the objective function of EiGLasso has a bounded positive definite 
exact or approximate Hessian in the level set $\mathcal{U}$: 
$a\bI \preceq \bH, \hat{\bH} \preceq b\bI$  for some positive constants $a,b\in\mathbb{R}^+$, 
according to Theorems \ref{thm:Hpd} and \ref{thm:Happroxpd}, and Lemma \ref{lemma:eigvalbound}. 
Second, EiGLasso with the exact or approximate Hessian chooses a subset of variables to be updated in each iteration 
according to the \textit{Gauss-Seidel} rule: with $T=2$, at iteration $t$, 
EiGLasso updates one of the two subsets of variables,
$\mathcal{J}^{2t}=\mathcal{A}^{t}_{\bThe}\sqcup\mathcal{A}^{t}_{\bPsi}$ and
$\mathcal{J}^{2t+1}=\mathcal{J}\setminus\mathcal{A}^{t}_{\bThe}\sqcup\mathcal{A}^{t}_{\bPsi}$,    
where $\mathcal{J}=\mathcal{J}^{2t} \cup \mathcal{J}^{2t+1}$ is the entire set of variables 
and $\mathcal{A}^{t}_{\bThe}$ and $\mathcal{A}^{t}_{\bPsi}$ are active sets in Algorithm \ref{alg:eiglasso}. 
Therefore, EiGLasso is guaranteed to converge to the global optimum according to \citet{Tseng2009}.


Now we analyze the local convergence of EiGLasso.
We adopt a similar strategy used in QUIC \citep{quic}: 
convergence analysis on a smooth function is applied near the global optimum,  
where the $L_1$-regularized non-smooth objective becomes locally smooth. 

\begin{theorem}\label{thm:convgrateexact}
Near the optimum, where step size $\alpha=1$ is chosen,
EiGlasso with the exact Hessian $\bH$
converges to the optimum  quadratically.
EiGLasso with the approximate Hessian $\hat{\bH}$
converges to the optimum linearly.
\end{theorem}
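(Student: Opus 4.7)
The plan is to reduce the convergence analysis to the classical setting for Newton's method on smooth strongly convex functions, using the fact that near the optimum both the active set and the sign pattern of the iterates stabilize. First, I would argue (following the standard QUIC-style reasoning) that in a sufficiently small neighborhood of the optimum $(\bThe^{*},\bPsi^{*})$, the support and signs of the off-diagonal entries of $\bThe^{t}$ and $\bPsi^{t}$ agree with those of $(\bThe^{*},\bPsi^{*})$. On this neighborhood the penalty $h$ acts as an affine function, so the EiGLasso Newton step in Eq.~\eqref{eq:newton} coincides with an unconstrained Newton step on a smooth function whose Hessian is $\bH$ (respectively $\hat{\bH}$). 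Combined with the already-invoked fact that the iterates enter a pure-Newton phase with $\alpha=1$, this lets me invoke standard smooth-case convergence results.

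For the exact Hessian, I would Taylor-expand $\nabla g$ around $(\bThe^{*},\bPsi^{*})$. Writing $\Delta^{t}=\tvec(\bThe^{t}-\bThe^{*},\bPsi^{t}-\bPsi^{*})$, the Newton update with $\alpha=1$ yields $\Delta^{t+1} = \bH_{t}^{-1}\bigl(\bH_{t}\Delta^{t} - \nabla g(\bThe^{t},\bPsi^{t})+\nabla g(\bThe^{*},\bPsi^{*})\bigr)$ on the active set, and the Lipschitz continuity of $\bH$ from Lemma~\ref{lemma:eigvalbound} bounds the residual by $C\|\Delta^{t}\|^{2}$. Combined with the uniform lower bound on the eigenvalues of $\bH$ outside its null space (Theorem~\ref{thm:Hpd} and Lemma~\ref{lemma:eigvalbound}), this gives $\|\Delta^{t+1}\|\le \tilde{C}\|\Delta^{t}\|^{2}$, i.e.\ quadratic convergence.

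For the approximate Hessian, I would use that $\hat{\bH}$ is positive definite with bounded condition number (Theorem~\ref{thm:Happroxpd} and Lemma~\ref{lemma:eigvalbound}), so that the step $-\hat{\bH}^{-1}\tvec(\bG)$ satisfies an angle condition with the gradient. Since $\hat{\bH}$ differs from $\bH$ by a bounded perturbation (non-vanishing in general), the Newton residual carries an $O(\|\Delta^{t}\|)$ term rather than $O(\|\Delta^{t}\|^{2})$, yielding $\|\Delta^{t+1}\|\le r\|\Delta^{t}\|$ for some $r\in(0,1)$ in a neighborhood of the optimum -- this is the standard linear rate for modified-Newton methods with a fixed preconditioner approximating the true Hessian, as established in \citet{Tseng2009} and \citet{quic}.

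The main obstacle is the null space of $\bH$: because $\bH$ vanishes on the diagonal-shift direction $(c\bI_{p},-c\bI_{q})$, the classical strong-convexity argument does not apply directly in the full $(\bThe,\bPsi)$ space. My fix is to phrase the analysis on the equivalence classes of Eq.~\eqref{eqn:noniddiag}, equivalently on $\bOmega=\bThe\oplus\bPsi$. By Lemma~\ref{lem:invariantGH} the Newton system is invariant within each class, and by Lemma~\ref{lem:newtonnonid} so is the line search; thus the iterates descend a well-defined trajectory in the quotient space, on which $\bH$ (restricted to directions orthogonal to its null space) is uniformly positive definite. Once the analysis is cast in this quotient space, the quadratic and linear convergence arguments above go through verbatim.
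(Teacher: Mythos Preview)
Your proposal is correct and follows essentially the same strategy as the paper: reduce to smooth analysis near the optimum via sign/support stabilization and the pure-Newton phase, then invoke Lipschitz continuity and eigenvalue bounds on $\bH$ and $\hat{\bH}$ (Lemma~\ref{lemma:eigvalbound}, Theorems~\ref{thm:Hpd} and~\ref{thm:Happroxpd}) to obtain the quadratic and linear rates, with special care for the null space of $\bH$.

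The differences are mainly in packaging. For the exact-Hessian case the paper simply invokes Lemma~2.5 and Theorem~3.1 of \citet{dunn1980}, whereas you spell out the Taylor-expansion bound $\|\Delta^{t+1}\|\le \tilde{C}\|\Delta^{t}\|^{2}$ directly; these are the same argument. For the approximate-Hessian case the paper appeals to the steepest-descent-in-a-quadratic-norm analysis of \citet{cvxbook}, while you take the modified-Newton/fixed-preconditioner viewpoint; both yield the same linear rate, though note that \citet{Tseng2009} and \citet{quic} are not quite the right citations for this particular point (neither treats a fixed approximate Hessian bounded away from the true one), so \citet{cvxbook} is the cleaner reference. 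Finally, your quotient-space treatment of the null space of $\bH$ is a crisper formalization of what the paper states more tersely, namely that $\bH$ is positive definite on the relevant directions since the iterates stay in $\mathbb{KS}^{pq}_{++}$ and the descent direction avoids the null space by Lemma~\ref{lem:delta_exact}.
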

\begin{proof}
Since the exact Hessian $\bH$ is Lipschitz continuous from Lemma \ref{lemma:eigvalbound},
the proof for EiGLasso with the exact Hessian follows from Lemma 2.5 and Theorem 3.1 of
\citet{dunn1980}. 
While the convergence analysis in \citet{dunn1980} assumes that the Hessian is positive definite,
the exact Hessian in EiGLasso is not positive definite everywhere. 
However, according to Theorem \ref{thm:Hpd}, our Hessian 
is positive definite  for all iterates $(\bThe^t,\bPsi^t)\in \mathbb{KS}^{pq}_{++}$,
since  $\mathbb{KS}^{pq}_{++}$ is outside of the null space of $\bH$, so
the analysis in \citet{dunn1980} can be applied to EiGLasso.
The proof for EiGLasso with the approximate Hessian $\hat{\bH}$ follows from the analysis of steepest descent with the quadratic norm~\citep{cvxbook}, 
given the bounded eigenvalues of $\bH$ according to Lemma \ref{lemma:eigvalbound}.
\end{proof}

\section{Experiments} \label{sec:exp}

We compare the performance of EiGLasso with that of TeraLasso \citep{TeraLasso} on simulated data 
and on real-world data from genomics and finance. TeraLasso is the state-of-the art method
for Kronecker-sum inverse covariance estimation, and has been shown to be substantially
more efficient than BiGLasso~\citep{biglasso}, so we did not include BiGLasso in our experiments.
We implemented EiGLasso 
in C++ with the sequential version of Intel Math Kernel Library. 
We downloaded the authors' implementation of TeraLasso and modified it to perform more iterations during line search, 
when the safe-step approach suggested by the authors failed 
to find a step-size that satisfies the positive definite condition on $\bThe\oplus\bPsi$.
All experiments were run on a single core of Intel(R) Xeon(R) CPU E5-2630 v3 @ 2.40GHz.
In all of our experiments, we selected the regularization parameters 
$\gamma=\gamma_{\bThe}=\gamma_{\bPsi}$ for EiGLasso 
and used the selected $\gamma$ for TeraLasso,
because EiGLasso is significantly faster than TeraLasso 
and minimizes the same objective as TeraLasso.
To assess convergence, we used the criterion that the decrease in the objective function 
value $f^t$ at iteration $t$
satisfies the condition $\left|\frac{f^t-f^{t-1}}{f^{t}}\right| < \epsilon$ for three consecutive iterations.

\begin{figure*}[t!]
\centering
\setlength\tabcolsep{0pt}
\begin{tabular}{ccl}
     \begin{subfigure}{0.33\linewidth}
        \includegraphics[width=\linewidth]{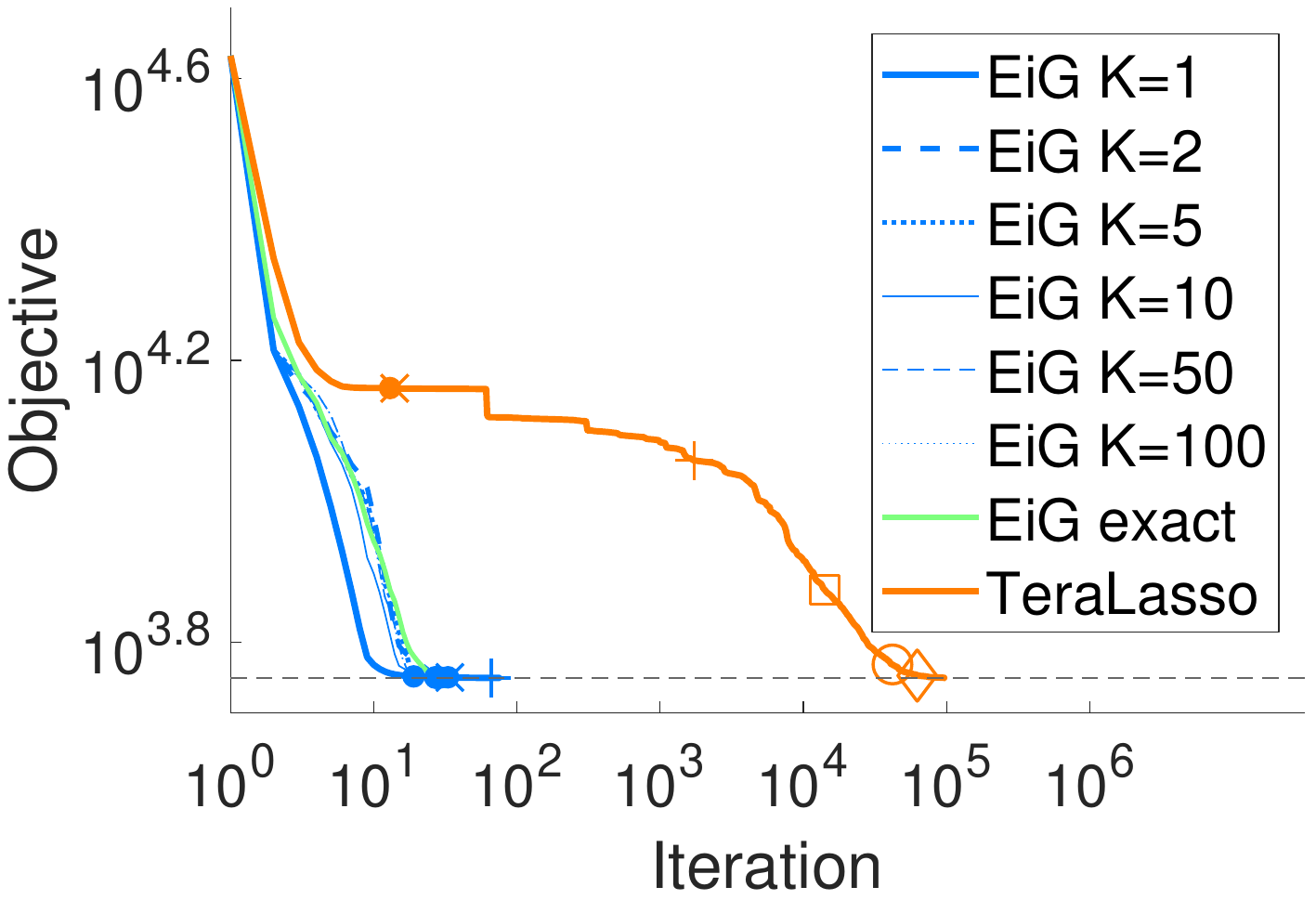}
        \caption*{(a)}
    \end{subfigure} &
    \begin{subfigure}{0.33\linewidth}
        \includegraphics[width=\linewidth]{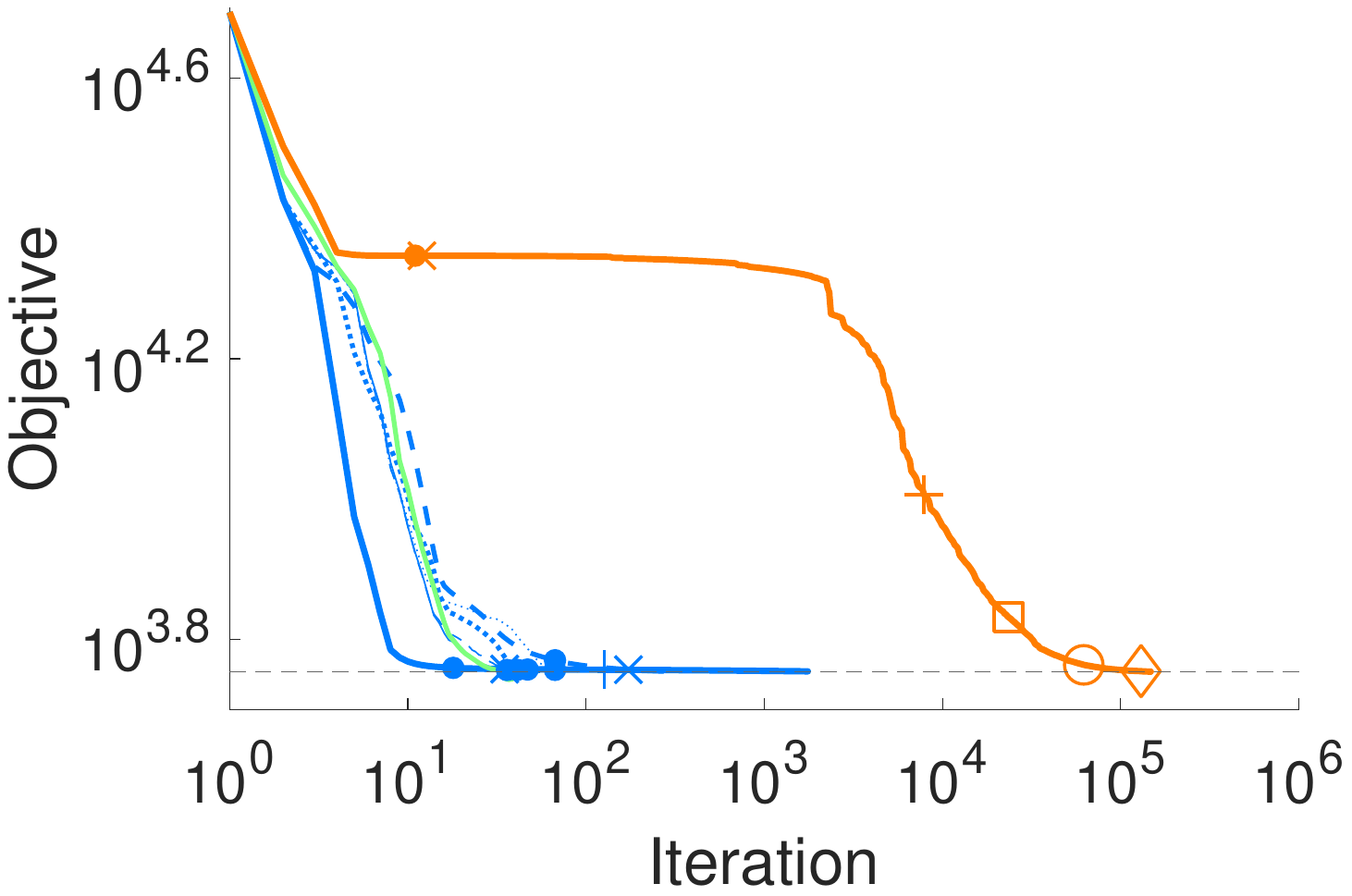}
        \caption*{(b)}
    \end{subfigure} &
    \multirow{2}{0.3\linewidth}[-1em]{
        \begin{subfigure}{\linewidth}
            \includegraphics[width=\linewidth]{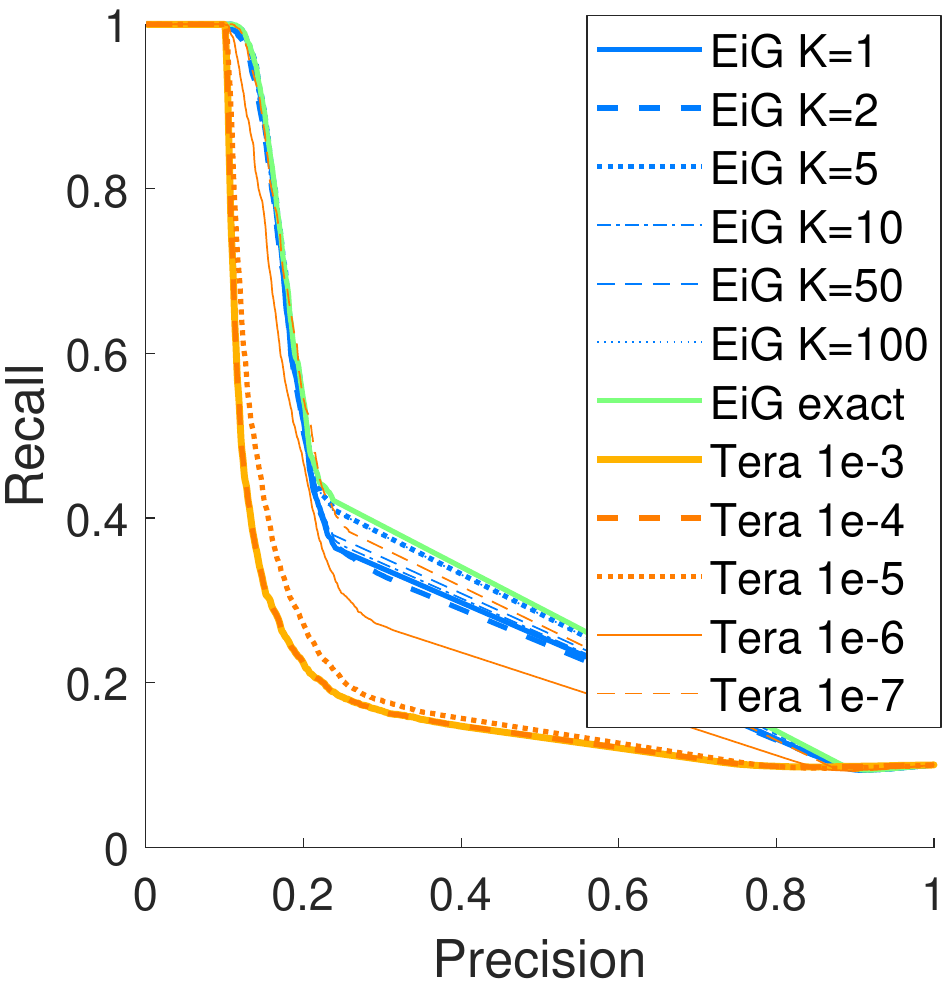}
            \caption*{(e)}
        \end{subfigure}
    }
    \\
    \begin{subfigure}{0.33\linewidth}
        \includegraphics[width=\linewidth]{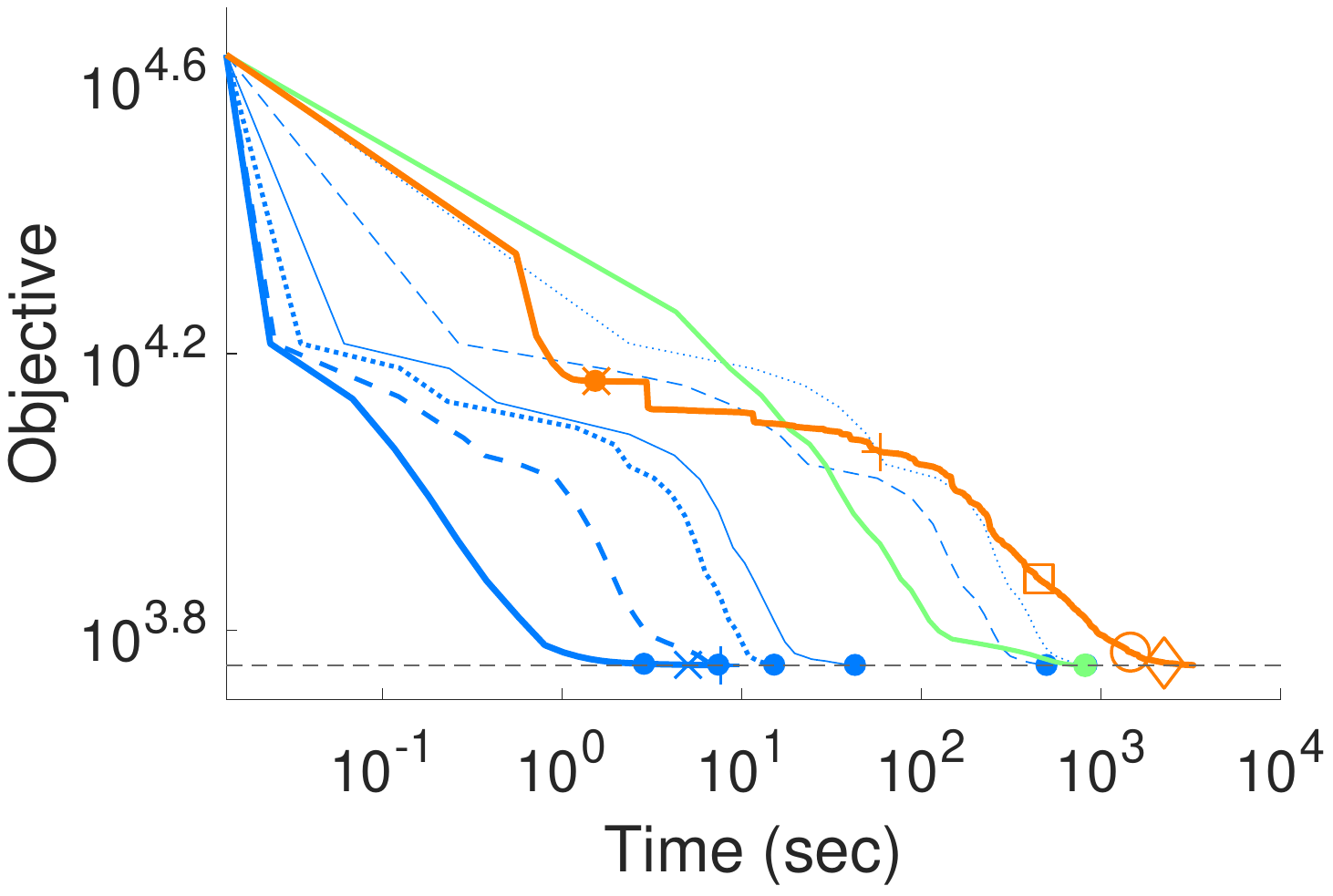}
        \caption*{(c)}
    \end{subfigure} &
    \begin{subfigure}{0.33\linewidth}
        \includegraphics[width=\linewidth]{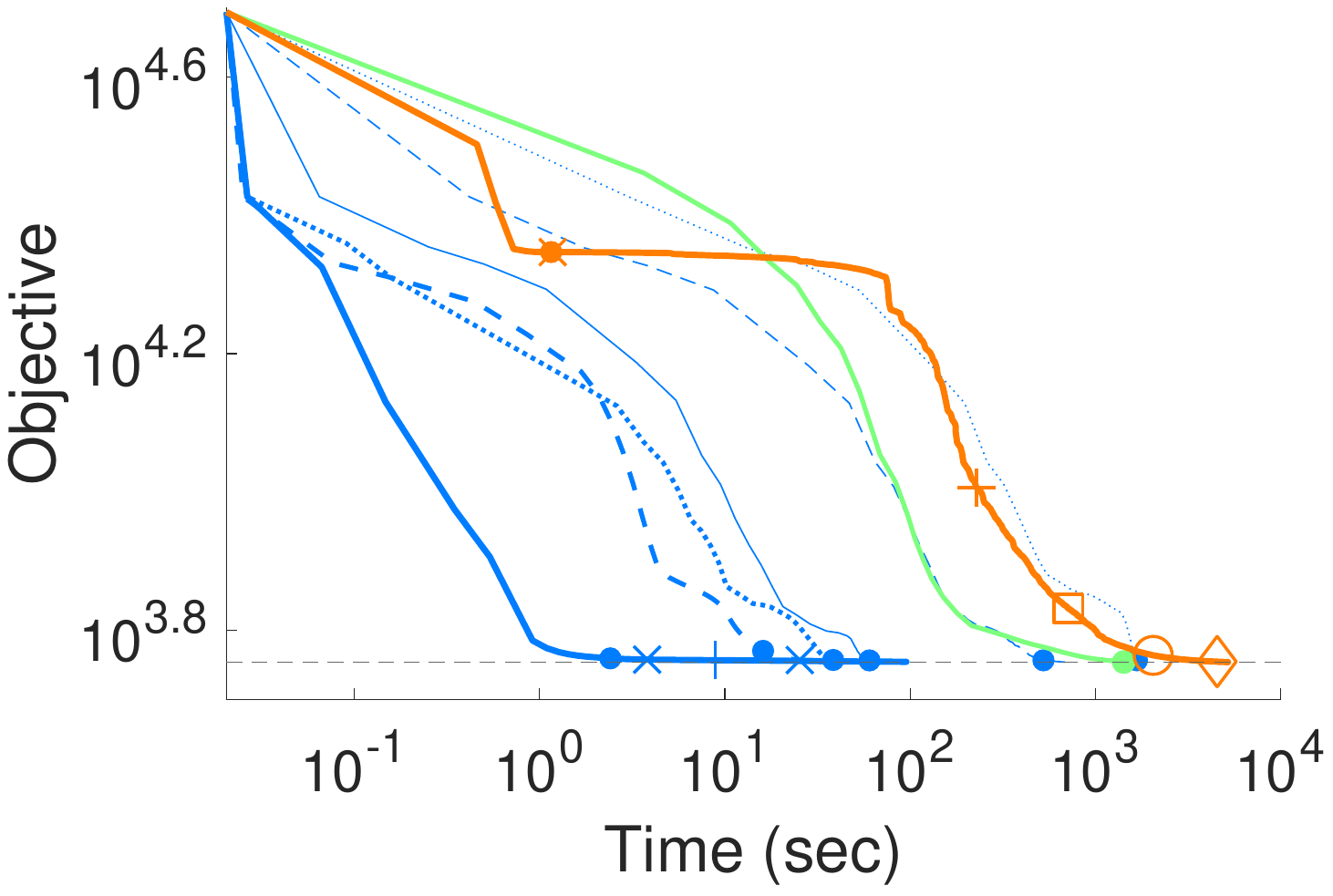}
        \caption*{(d)}
    \end{subfigure} &
\end{tabular}

    \caption{
Comparison of the convergence of EiGLasso and TeraLasso on data simulated from random graphs.
Objective values over iterations are shown for two datasets
in (a) and (b). Objective values over time are shown in (c) and (d) for the same two datasets. 
All methods were run until they reached the objective that EiGLasso with the exact Hessian reached with the convergence criterion $\epsilon=10^{-3}$.
The `$\bullet$', `$\times$', `+', `$\Box$', `$\bigcirc$' and `$\diamondsuit$' mark the points that satisfy 
the convergence criteria $\epsilon=10^{-3},\,10^{-4},\,10^{-5},\,10^{-6},\,10^{-7}$, and $10^{-8}$, respectively. 
Precision-recall curves for EiGLasso at $\epsilon=10^{-3}$ and TeraLasso at different $\epsilon$'s averaged over 10 datasets are shown in (e).
Graph size $p, q=100$ was used.
}
    \label{fig:100r}
\end{figure*}

\subsection{Simulated Data}\label{subsec:simulation}

We compared EiGLasso and TeraLasso on data simulated from the known $\bThe$ and $\bPsi$. 
We used the true $\bThe$ and $\bPsi$ of different sizes ($p,q=$ 100, 200, 500, 1000, 2000, and 5000),
assuming two types of graph structures. 
\begin{itemize}
\item
{\bf Random graph}: To set the ground-truth $\bThe$, 
we first generated a sparse $p\times p$ matrix $\bm{A}$  by assigning $-1$, 0, or 1 to each element with 
probabilities $\frac{1-\rho}{2}$, $\rho$, and $\frac{1-\rho}{2}$, respectively. We chose $\rho$ such that 
the number of non-zero elements of $\bThe$ is $10p$. 
To ensure $\bThe$ is positive definite, we set $\bThe$ to $\bm{A}\bm{A}^T$ after adding $\sigma + 10^{-4}$ with $\sigma\sim\text{Unif}(0,0.1)$
to each diagonal element of $\bm{A}\bm{A}^T$. 
\item{\bf Graph with clusters}: We set $\bThe$ to a block-diagonal matrix such that each block corresponds to a cluster. 
For graphs with $p=100$ and $200$, we assumed five blocks, each with size $\frac{p}{5}\times \frac{p}{5}$. 
For larger graphs with $p=500$, 1000, 2000, and $5000$, we assumed 10 blocks, each with size $\frac{p}{10}\times \frac{p}{10}$. 
Each block was generated as a random graph described above, setting $\rho$ so that we have $p$ nonzero elements in the block. 
\end{itemize}
The ground-truth $\bPsi$ was set similarly. Given these parameters, we simulated matrix-variate data 
from Gaussian distribution with mean zeros and inverse covariance $\bThe \oplus \bPsi$.

First, we evaluated TeraLasso and EiGLasso on simulated data with graph size $p, q=100$.
The two methods were compared in terms of computation time and the number of iterations required to reach 
the same level of optimality,  which we define as the objective value that
EiGLasso with the exact Hessian converged to with the convergence criterion $\epsilon=10^{-3}$. 
The regularization parameters were selected such that the number of non-zero elements in the estimated 
parameters roughly  matches that of the true parameters.
EiGLasso with approximate Hessian was run with different $K$'s ranging from 
$K=1$ to $K=100$.
Results are shown for four datasets, two from random graphs (Figures \ref{fig:100r}(a)-(d))
and two from graphs with clusters (Figures \ref{fig:100bd}(a)-(d)).
Regardless of the degree of Hessian approximation, EiGLasso required significantly 
fewer iterations than TeraLasso (Figures \ref{fig:100r}(a)-(b) and 
\ref{fig:100bd}(a)-(b)),
as expected since methods that use the second-order information in general converge in fewer iterations 
than the first-order methods. 
With the exact Hessian and approximate Hessian with large $K$, 
EiGLasso took longer in each iteration than TeraLasso, but overall required
two to three times less computation time than TeraLasso
(Figures \ref{fig:100r}(c)-(d) and \ref{fig:100bd}(c)-(d)).
As we reduced $K$, the time taken by EiGLasso 
decreased substantially, and EiGLasso with $K=1$ achieved two to three orders-of-magnitude 
speed-up compared to TeraLasso.

\begin{figure*}[t!]
\centering
\setlength\tabcolsep{0pt}
\begin{tabular}{ccl}
   \begin{subfigure}{0.33\linewidth}
        \includegraphics[width=\linewidth]{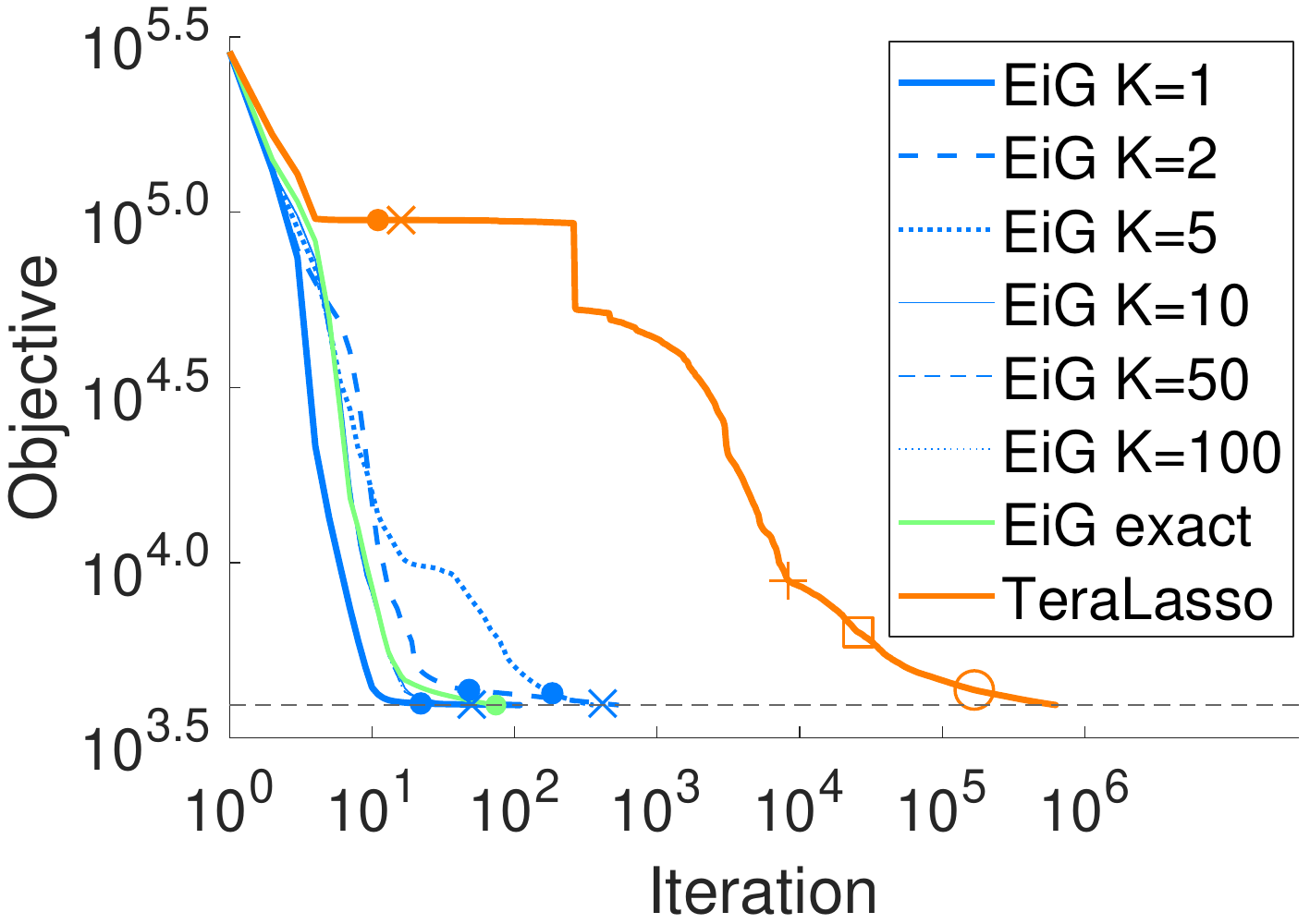}
        \caption*{(a)}
    \end{subfigure}
    &
    \begin{subfigure}{0.33\linewidth}
        \includegraphics[width=\linewidth]{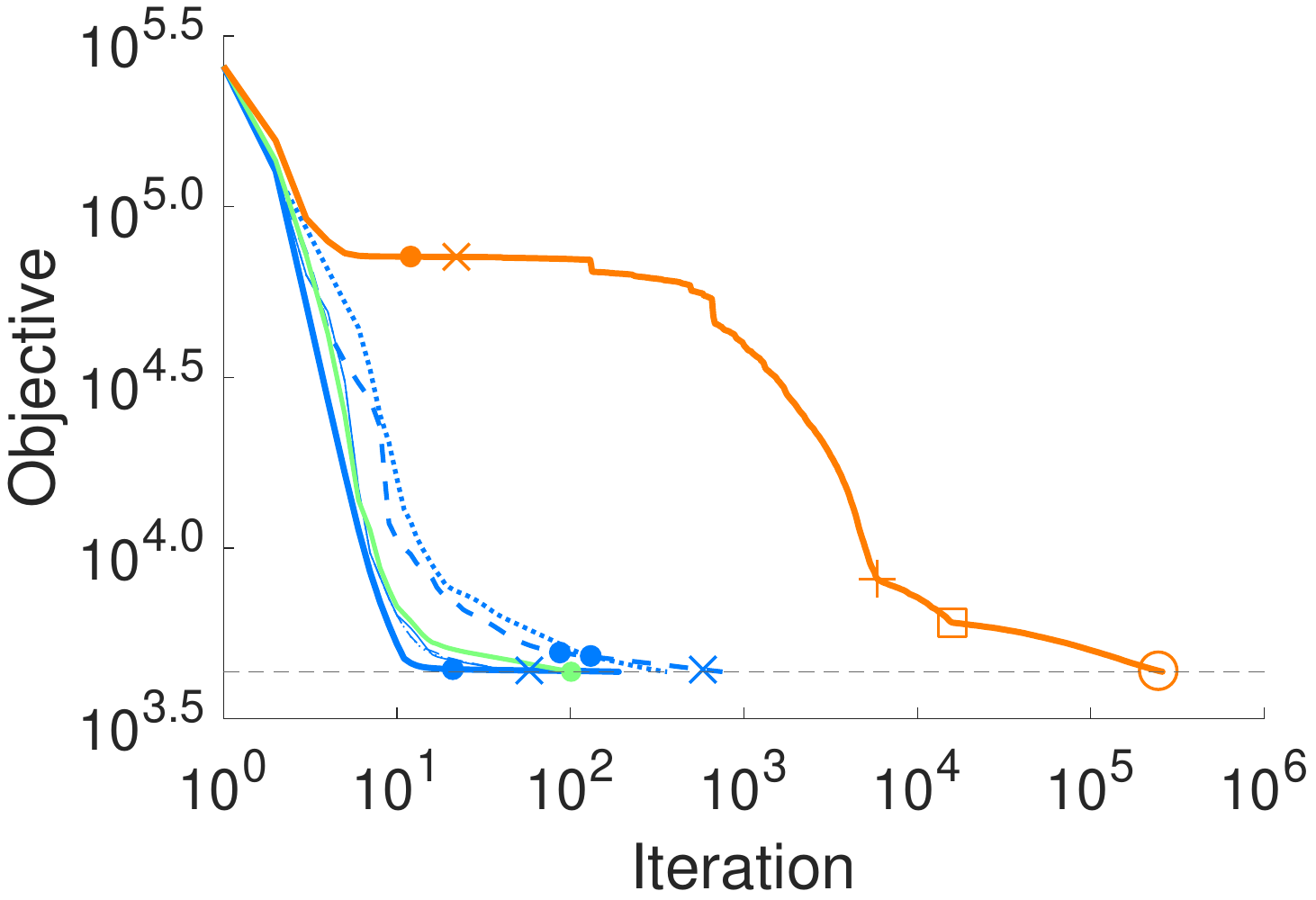}
        \caption*{(b)}
    \end{subfigure} &
    \multirow{2}{0.3\linewidth}[-1em]{
        \begin{subfigure}{\linewidth}
            \includegraphics[width=\linewidth]{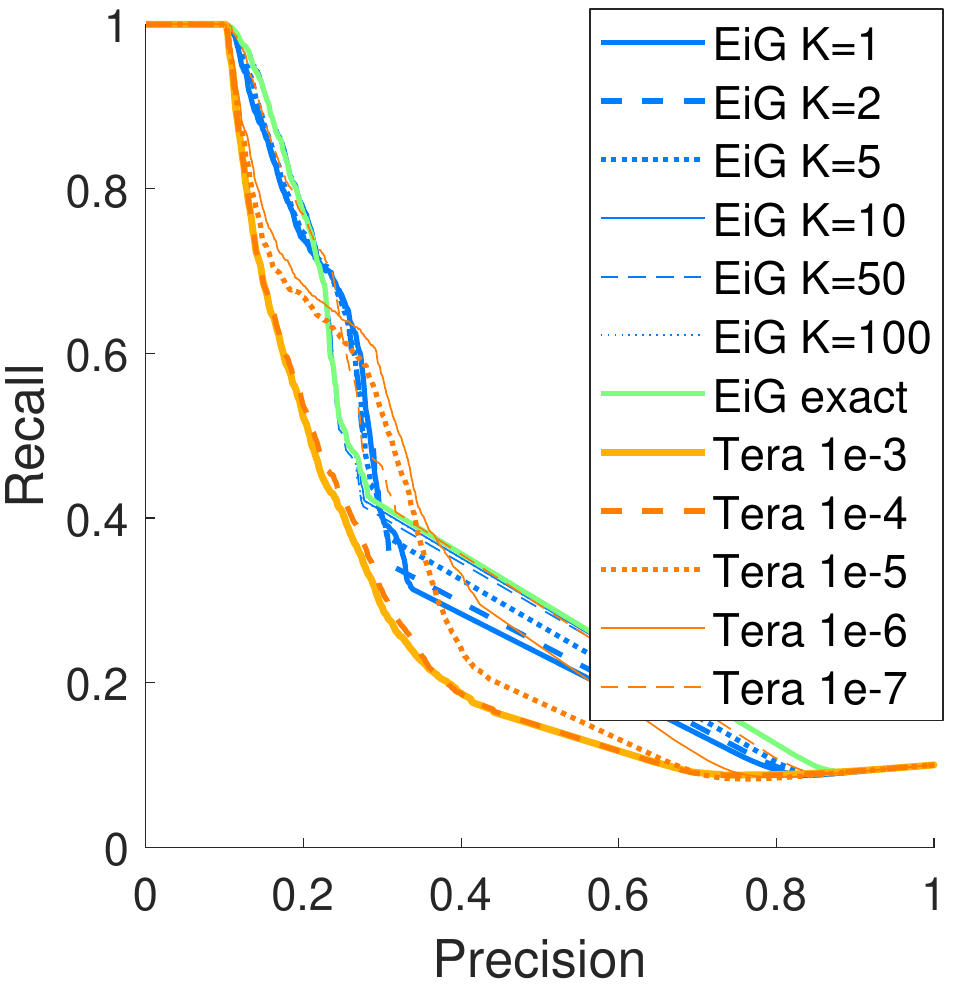}
            \caption*{(e)}
        \end{subfigure}
    }
    \\
   \begin{subfigure}{0.33\linewidth}
        \includegraphics[width=\linewidth]{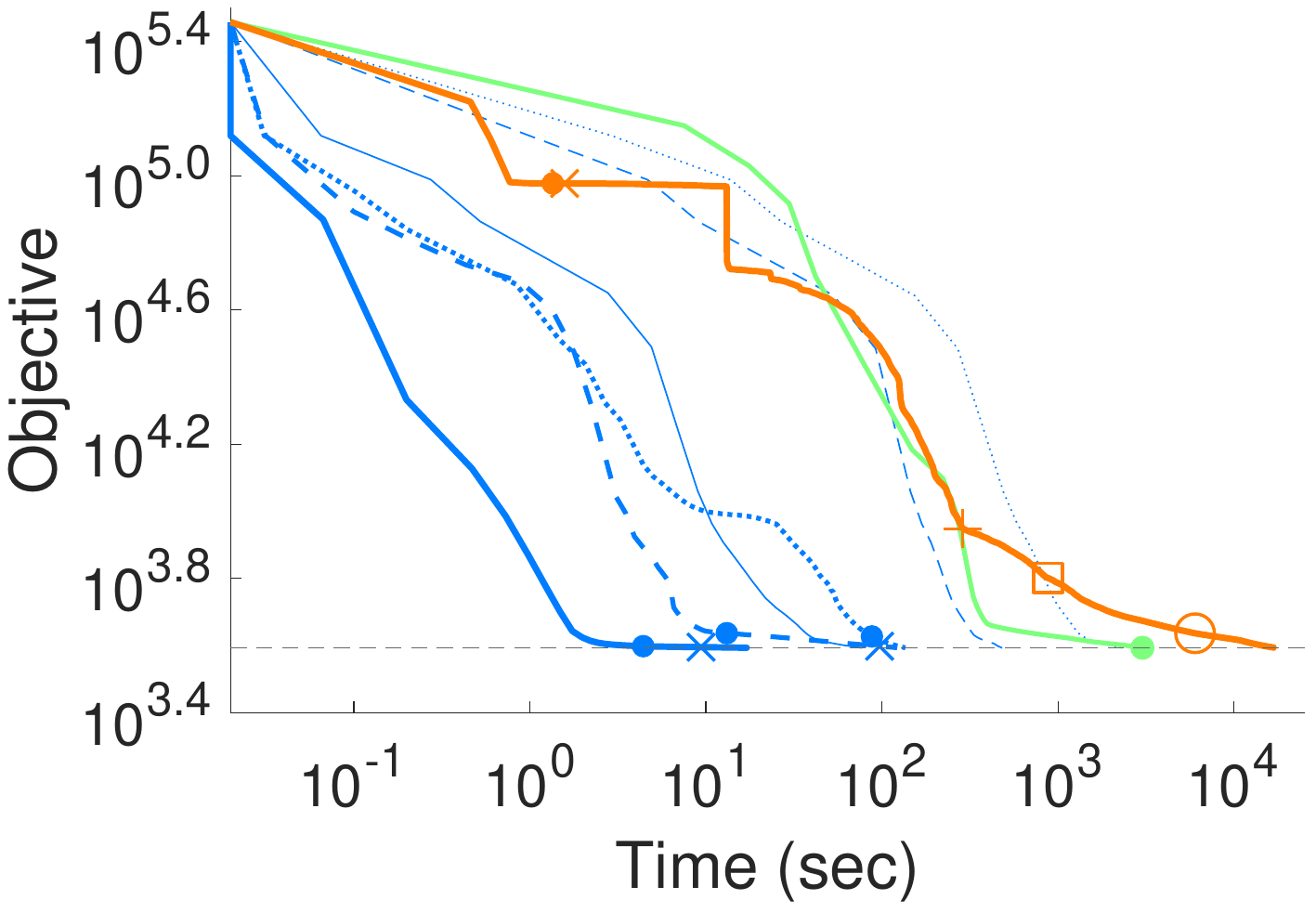}
        \caption*{(c)}
    \end{subfigure}
    &
    \begin{subfigure}{0.33\linewidth}
        \includegraphics[width=\linewidth]{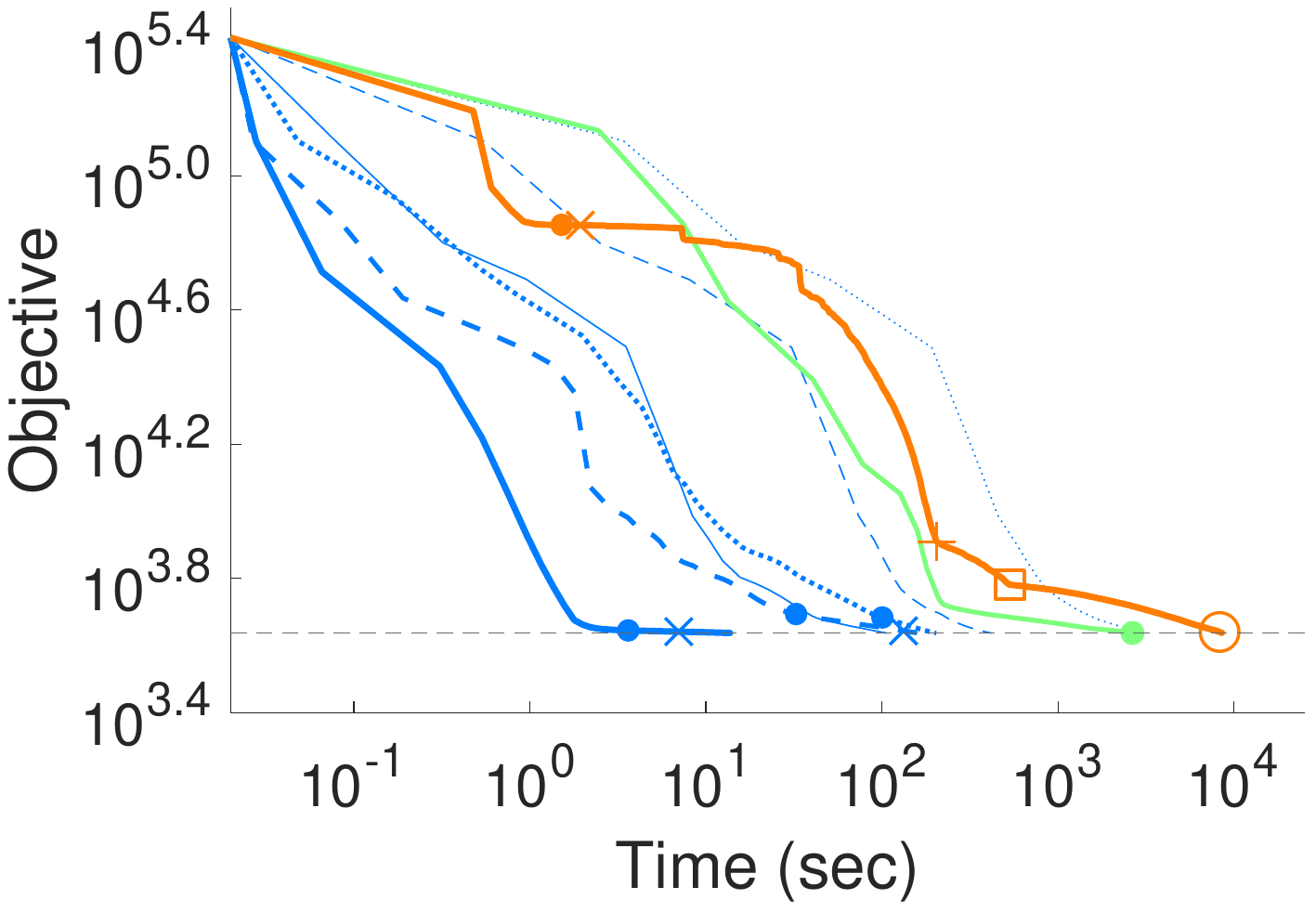}
        \caption*{(d)}
    \end{subfigure} &
\end{tabular}

    \caption{
Comparison of the convergence of EiGLasso and TeraLasso on data simulated from graphs with clusters.
Objective values over iterations are shown for two datasets
in (a) and (b). Objective values over time are shown in (c) and (d) for the same two datasets. 
All methods were run until they reached the objective that EiGLasso with the exact Hessian reached with the convergence criterion $\epsilon=10^{-3}$.
The `$\bullet$', `$\times$', `+', `$\Box$', and `$\bigcirc$' mark the points that satisfy 
$\epsilon=10^{-3},\,10^{-4},\,10^{-5},\,10^{-6},$ and $\,10^{-7}$, respectively. 
Precision-recall curves for EiGLasso at $\epsilon=10^{-3}$ and TeraLasso at different $\epsilon$'s averaged over 10 datasets are shown in (e).
Graph size $p, q=100$ was used.
}
    \label{fig:100bd}
\end{figure*}

\begin{table}[h!]
    \centering
    \begin{tabular}{|c|c|rrrrrr|r|}
    \hline
    \multicolumn{1}{|c|}{\multirow{2}{*}{Graph}} & \multicolumn{1}{c}{\multirow{2}{*}{$p,q$}} & \multicolumn{6}{|c|}{EiGLasso} & TeraLasso\\
    && $K=1$ & $K=2$ & $K=5$ & $K=10$ & $K=50$ & $K=100$ & \\
    \hline
    \multirow{6}{*}{\shortstack{Random\\Graphs}} & 100 & 1 & 4 & 6 & 40 & 729 & 1019 & 4194 \\
    & 200 & 7 & 164 & 551 & 3059 & 27049 & 70311 & 16765 \\
    & 500 & 29 & 155 & 2252 & 5994 & 31826 &&\\
    & 1000 & 515 & 11601 & 39568 &&&&\\
    & 2000 & 6361 & $28539$ &&&&&\\
    & 5000 & 10458 & $62685$ &&&&&\\
    \hline
    \multirow{6}{*}{\shortstack{
    Graphs\\with\\Clusters}} & 100 & 2 & 21 & 28 & 629 & 6516 & 12443 & 21115 \\
    & 200 & 11 & 410 & 930 & 1042 & 11281 & 59813 & $67669$ \\
    & 500 & 28 & 5231 & 9252 & 10549 &&&\\
    & 1000 & 2169 & 12614 & 58168 &&&&\\
    & 2000 & 9310 & $41524$ &&&&&\\
    & 5000 & 10461 & $67608$ &&&&&\\
    \hline
    \end{tabular}
    \caption{Comparison of computation time (in seconds) on simulated data. 
    }
    \label{tab:simresult}
\end{table}

We compared TeraLasso and EiGLasso on the accuracy of the recovered non-zero elements in $\bThe$ and $\bPsi$, 
using 10 datasets simulated as above.
The precision-recall curves averaged over the 10 simulated datasets 
(Figure \ref{fig:100r}(e) and \ref{fig:100bd}(e)) show that  
the TeraLasso estimates at the convergence criterion $\epsilon=10^{-3}$ and $10^{-4}$ are significantly inferior to
the EiGLasso estimates at the same $\epsilon$. TeraLasso needed $\epsilon=10^{-6}$ 
to achieve the accuracy similar to EiGLasso at $\epsilon=10^{-3}$,
whereas more stringent criteria such as $\epsilon=10^{-4}$ and $10^{-5}$ 
led to nearly identical results in EiGLasso.
Thus, in all experiments in the rest of Section \ref{sec:exp},
we used $\epsilon=10^{-3}$ as convergence criterion for EiGLasso with all $K$'s, 
but ran TeraLasso until it reached the similar objective value 
that EiGLasso reached with $\epsilon=10^{-3}$, 
which typically required $\epsilon = 10^{-6}$, $10^{-7},$ or $10^{-8}$.

\begin{figure*}[b!]
\centering
\setlength\tabcolsep{0pt}
\begin{tabular}{c@{}c@{}c}
    \begin{subfigure}{0.33\linewidth}
        \includegraphics[width=\linewidth]{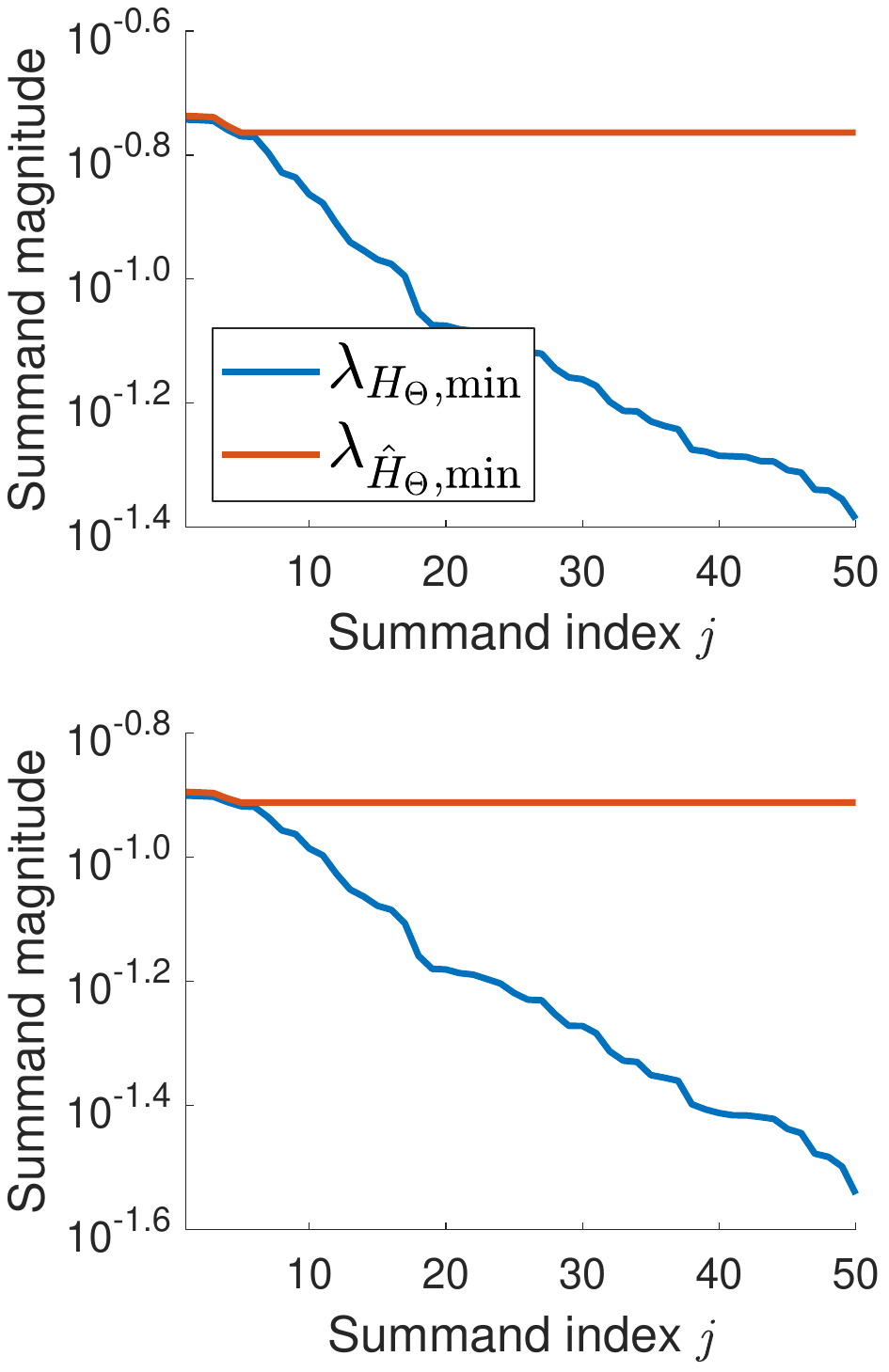}
        \caption{}
    \end{subfigure} 
    &
   \begin{subfigure}{0.33\linewidth}
        \includegraphics[width=\linewidth]{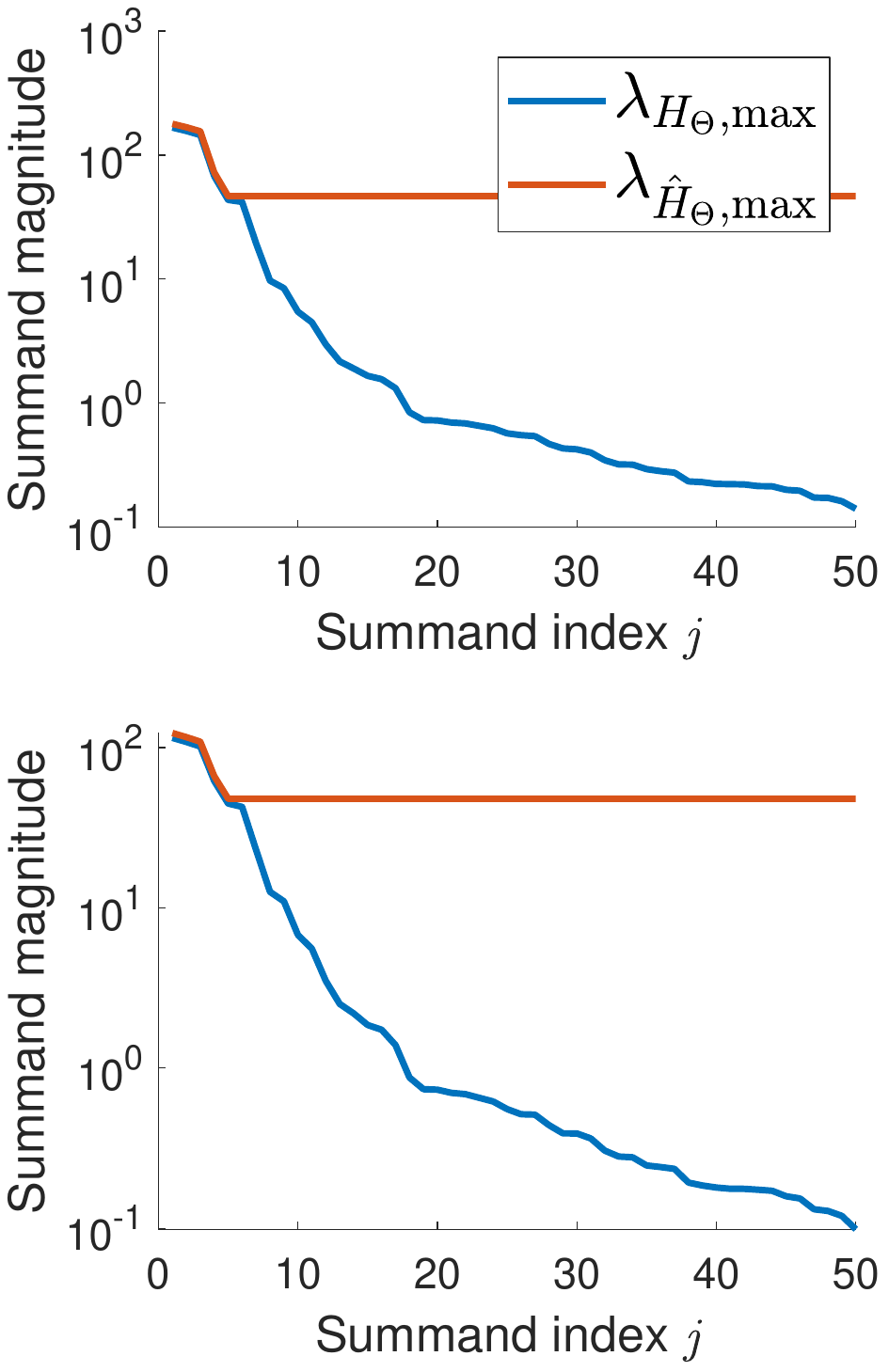}
        \caption{}
    \end{subfigure}
    &
    \begin{subfigure}{0.33\linewidth}
        \includegraphics[width=\linewidth]{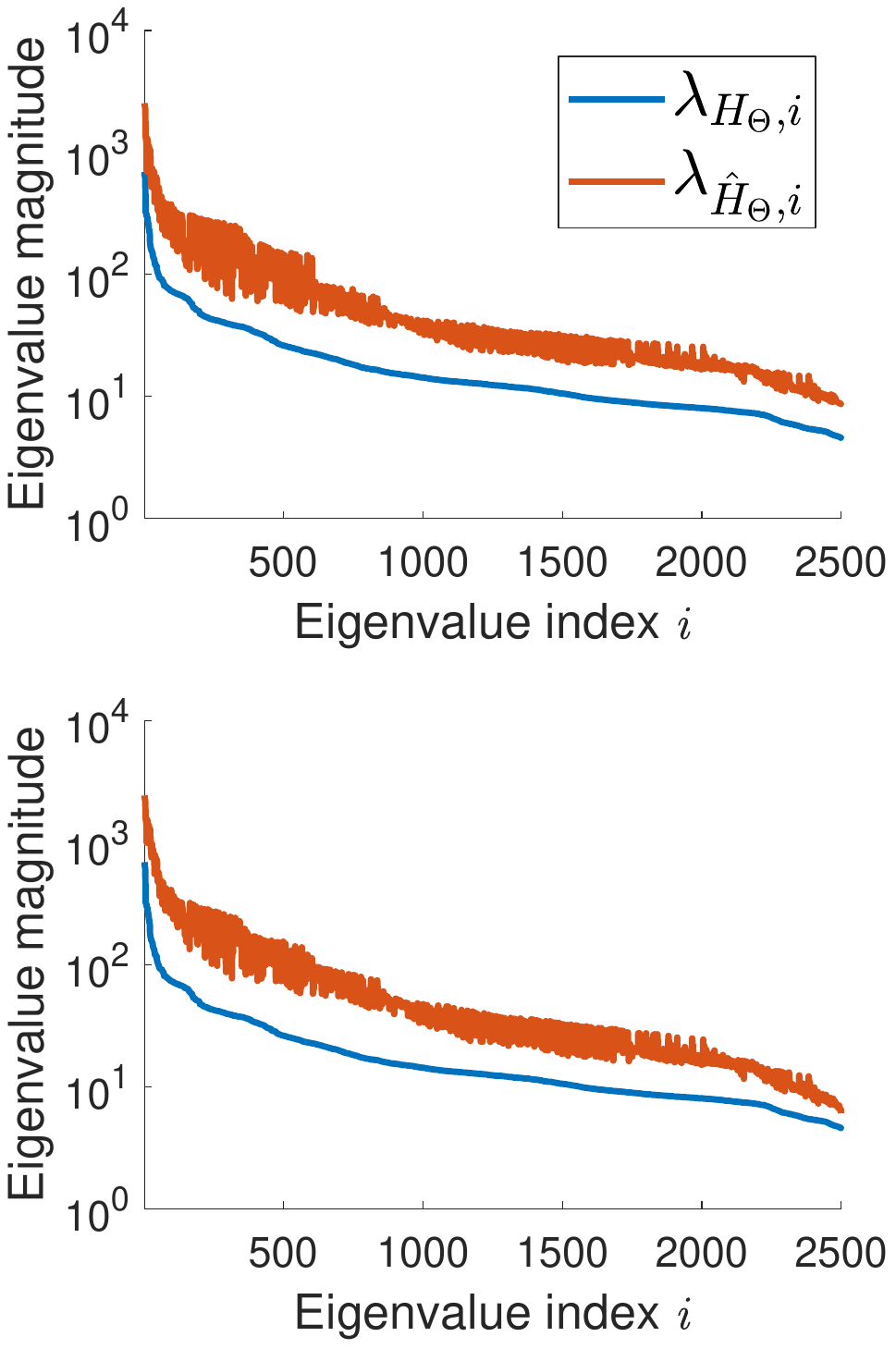}
        \caption{}
    \end{subfigure} 
    \\
    \begin{subfigure}{0.33\linewidth}
        \includegraphics[width=\linewidth]{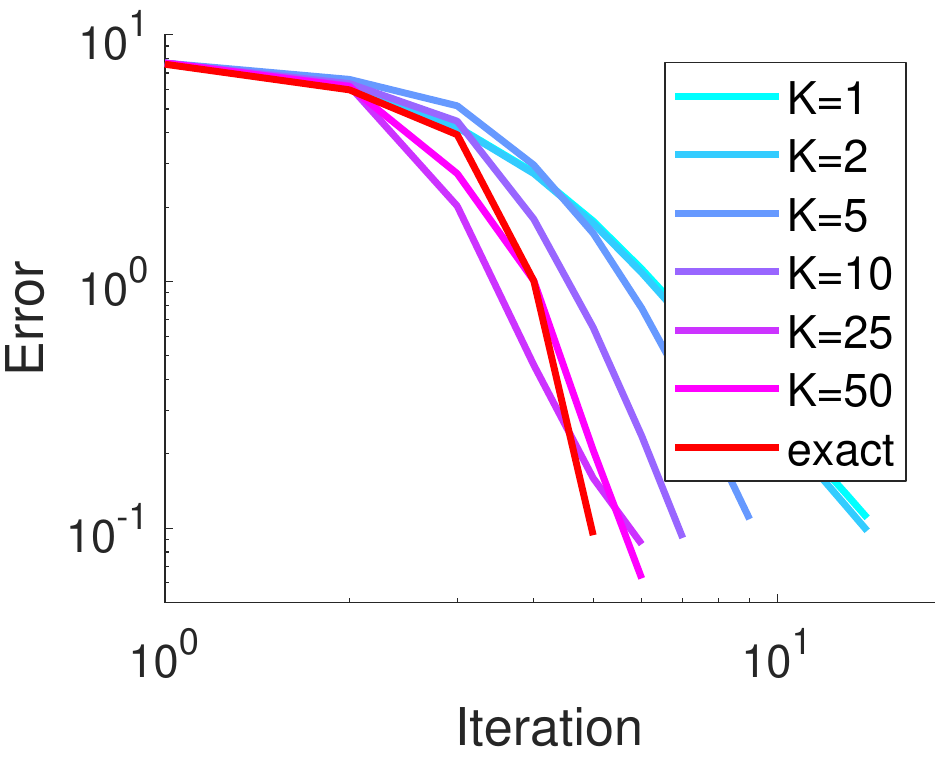}
        \caption{}
    \end{subfigure} 
    &
   \begin{subfigure}{0.33\linewidth}
        \includegraphics[width=\linewidth]{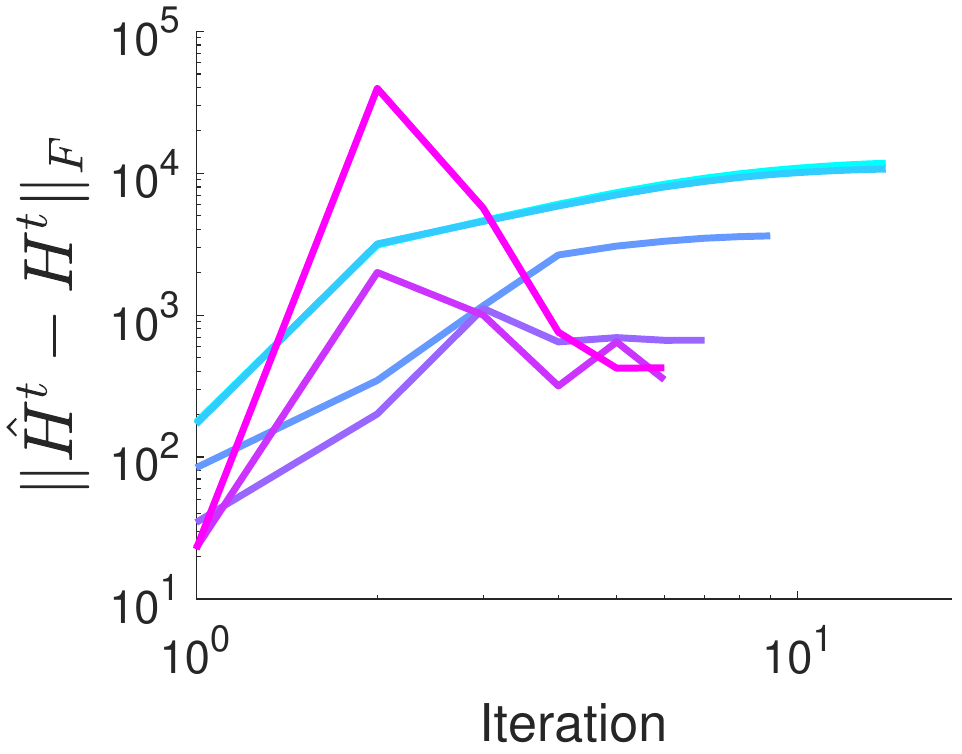}
        \caption{}
    \end{subfigure}
    &
\end{tabular}
\caption{
    Effects of the Hessian approximation on the convergence behavior of EiGLasso 
    for data simulated from a random graph. 
    (a) The summands in $\lambda_{\bH_{\bThe},\min}=\sum_{j=1}^q (\lambda_{\bThe, p} 
    + \lambda_{\bPsi,j})^{-2}$ and 
    $\lambda_{\hat{\bH}_{\bThe},\min}=\sum_{j=1}^K (\lambda_{\bThe, p} + \lambda_{\bPsi,j})^{-2} 
    + (q-K)(\lambda_{\bThe, p} + \lambda_{\bPsi,K})^{-2}$ for EiGLasso with $K=5$. 
    (b) The summands in $\lambda_{\bH_{\bThe},\max}$ and $\lambda_{\hat{\bH}_{\bThe},\max}$ 
    as in Panel (a).
    (c) The eigenvalues of $\bH_{\bThe}$ and $\hat{\bH}_{\bThe}$, sorted 
    with the eigenvalues of $\bH_{\bThe}$.
    In Panels (a)-(c), the results are shown 
    at the 5th iteration (top) and at the last iteration (bottom).
    (d) Error measured as 
   $\left\|\protect\begin{bmatrix}
    \tvec(\bThe^{t}-\bThe^*) \\
    \tvec(\bPsi^{t}-\bPsi^*)
    \protect\end{bmatrix}\right\|_2$ 
    over iterations.
    (e) $\|\hat{\bH}-\bH\|_F$ over iterations.
    A dataset simulated from a random graph with $p, q=50$ was used.
}
    \label{fig:50rndemp}
\end{figure*}

\begin{figure*}[b!]
\centering
\setlength\tabcolsep{0pt}
\begin{tabular}{c@{}c@{}c}
    \begin{subfigure}{0.33\linewidth}
        \includegraphics[width=\linewidth]{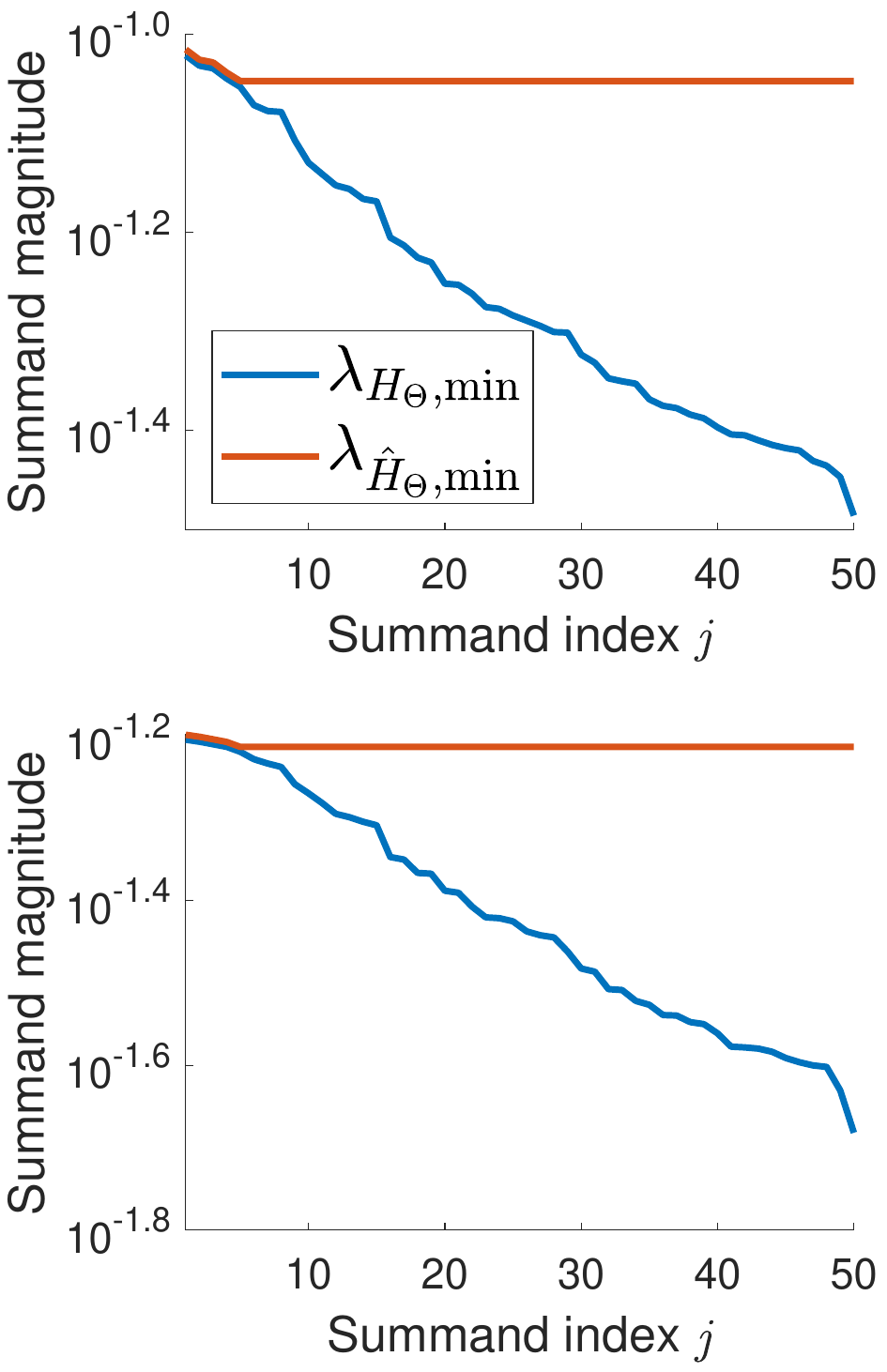}
        \caption{}
    \end{subfigure} 
    &
   \begin{subfigure}{0.33\linewidth}
        \includegraphics[width=\linewidth]{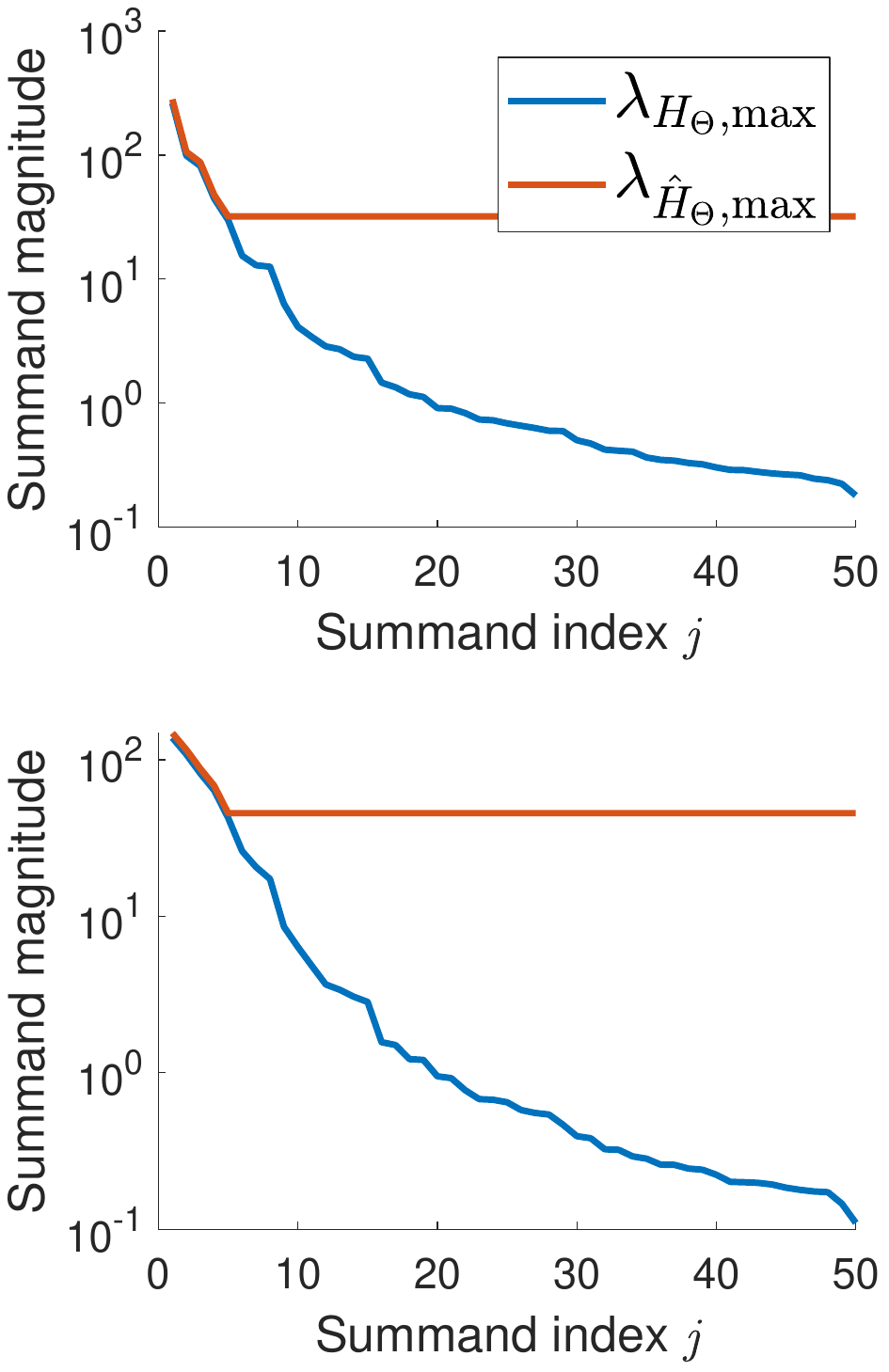}
        \caption{}
    \end{subfigure}
    &
    \begin{subfigure}{0.33\linewidth}
        \includegraphics[width=\linewidth]{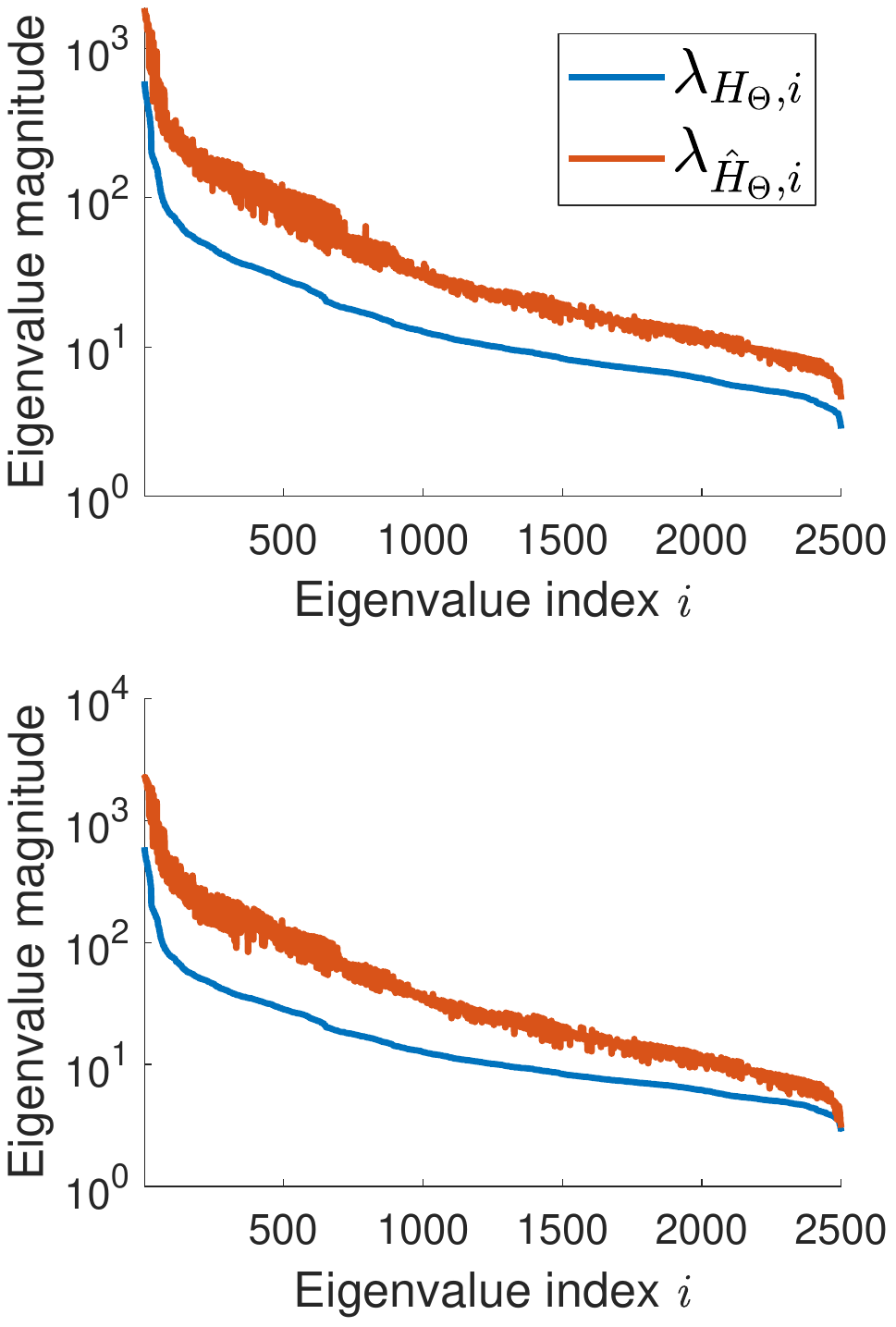}
        \caption{}
    \end{subfigure} 
    \\
    \begin{subfigure}{0.33\linewidth}
        \includegraphics[width=\linewidth]{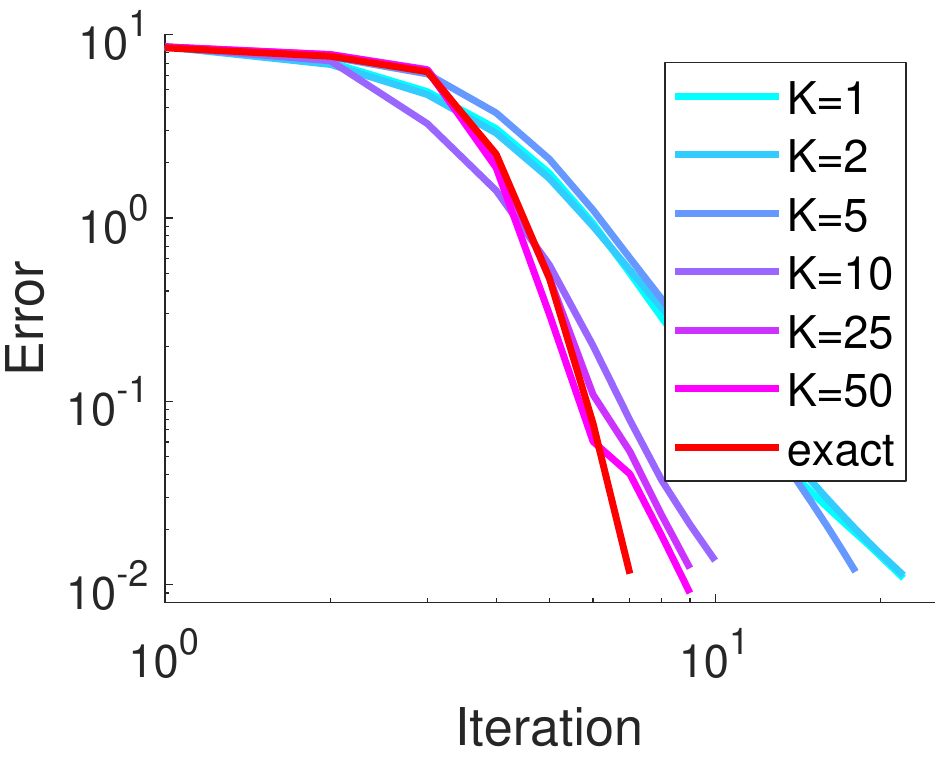}
        \caption{}
    \end{subfigure} 
    &
   \begin{subfigure}{0.33\linewidth}
        \includegraphics[width=\linewidth]{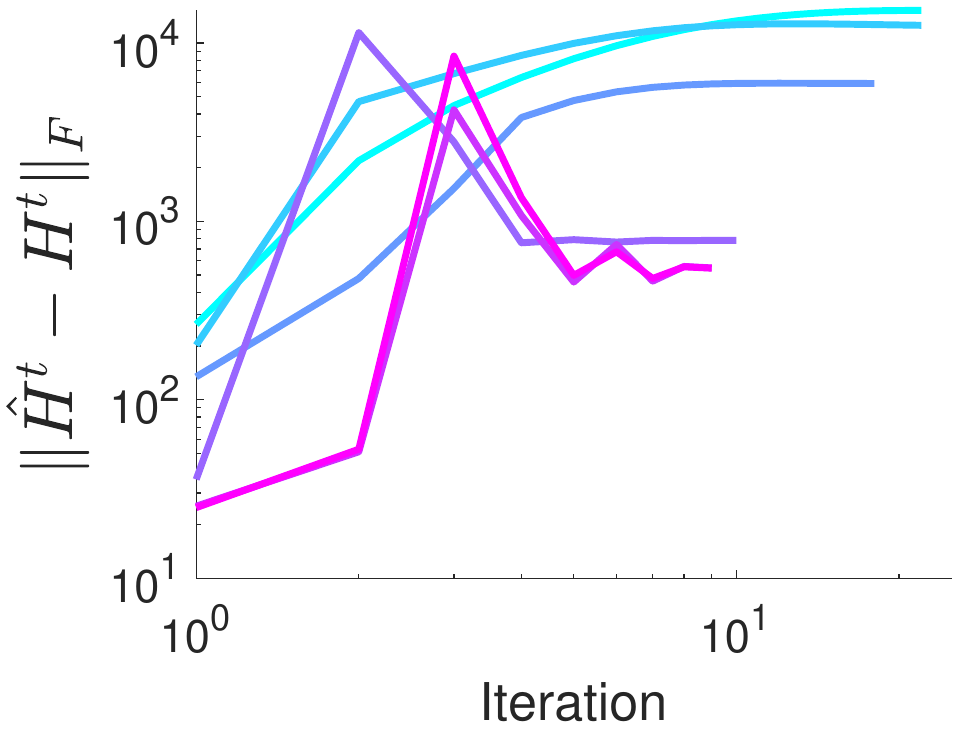}
        \caption{}
    \end{subfigure}
    &
\end{tabular}
\caption{
    Effects of Hessian approximation on the convergence behavior of EiGLasso
    for data simulated from a graph with clusters. 
    (a) The summands in $\lambda_{\bH_{\bThe},\min}=\sum_{j=1}^q (\lambda_{\bThe, p} 
    + \lambda_{\bPsi,j})^{-2}$ and 
    $\lambda_{\hat{\bH}_{\bThe},\min}=\sum_{j=1}^K (\lambda_{\bThe, p} + \lambda_{\bPsi,j})^{-2} 
    + (q-K)(\lambda_{\bThe, p} + \lambda_{\bPsi,K})^{-2}$ for EiGLasso with $K=5$. 
    (b) The summands in $\lambda_{\bH_{\bThe},\max}$ and $\lambda_{\hat{\bH}_{\bThe},\max}$, 
    as in Panel (a).
    (c) The eigenvalues of $\bH_{\bThe}$ and $\hat{\bH}_{\bThe}$, sorted 
    with the eigenvalues of $\bH_{\bThe}$.
    In Panels (a)-(c), the results are shown 
    at the 5th iteration (top) and at the last iteration (bottom).
    (d) Error measured as $\left\|\protect\begin{bmatrix}
    \tvec(\bThe^{t}-\bThe^*)\\
    \tvec(\bPsi^{t}-\bPsi^*)
    \protect\end{bmatrix}\right\|_2$ over iterations.
    (e) $\|\hat{\bH}-\bH\|_F$ over iterations.
    A dataset simulated from a random graph with $p, q=50$ was used.
}
    \label{fig:50bdemp}
\end{figure*}

Next, we compared the computation time of TeraLasso and EiGLasso on data simulated from larger graphs.
For datasets with size $p,q=100, 200, 500, 1000, 2000$, and $5000$, we ran TeraLasso and EiGLasso with
varying $K$ and recorded the time taken by each method if it converged within 24 hours
(Table \ref{tab:simresult}).
Across the different graph types and sizes, EiGLasso with different $K$'s almost always converged 
faster than TeraLasso. In particular, EiGLasso with $K=1$ and $2$ was 
consistently two to three orders-of-magnitude faster than
TeraLasso. For example, EiGLasso with $K=1$ took 3 hours for graphs with $p,q=5000$, 
while TeraLasso did not converge on the smaller graphs with $p,q=500$ in 24 hours.

We empirically evaluated the effects of Hessian approximation on the convergence behavior of EiGLasso,
using data simulated from a random graph of size $p,q=50$. 
In $\hat{\bH}_{\bThe}$ and $\hat{\bH}_{\bPsi}$ of the approximate Hessian $\hat{\bH}$, 
the components of the eigenvalues 
of $\bH_{\bThe}$ and $\bH_{\bPsi}$ are modified
from the summands in Eq. \eqref{eq:Hcomp} to
the summands in Eq. \eqref{eqn:approxHL}. 
For EiGLasso with $K=5$,
these summands in the eigenvalues before and after the modification
are shown for the minimum and maximum eigenvalues of $\bH_{\bThe}$
at the $5$th and last iterations
in Figures \ref{fig:50rndemp}(a) and \ref{fig:50rndemp}(b).
The full spectrums of the same $\bH_{\bThe}$ and $\hat{\bH}_{\bThe}$
are shown in Figure \ref{fig:50rndemp}(c).
Next, we examined the effects of such Hessian approximation on the convergence 
of EiGLasso (Figures \ref{fig:50rndemp}(d) and \ref{fig:50rndemp}(e)).
While EiGLasso with the exact Hessian converges quadratically,
with the approximate Hessian, 
as the degree of Hessian approximation increases with smaller $K$, 
the convergence of EiGLasso slows down. However,
even with $K=1$, the convergence
is not significantly slower than EiGLasso with the exact Hessian. 
Even if $\|\hat{\bH}-\bH\|_F$ does not approach zero over iterations 
(Figure \ref{fig:50rndemp}(e)), EiGLasso with the approximate Hessian
quickly reaches the same level of optimality as EiGLasso with the exact Hessian. 
We obtained similar results from a graph with clusters 
(Figure \ref{fig:50bdemp}). 


\subsection{Mouse Gene Expression Data from RNA-seq}

We compared EiGLasso and TeraLasso using the gene expression levels obtained from RNA sequencing of brain tissues 
for multiple related mice  from the same pedigree \citep{Gonzales2018}. 
The brain tissue samples were collected from generations 50-56 of the LG/J 
$\times$ SM/J advanced intercross line of mice. RNA was extracted 
from three mouse brain tissue types: prefrontal cortex (PFC; 185 mice), striatum (STR; 169 mice), and hippocampus (HIP; 208 mice). 
Several genes from these tissues have been found relevant to psychiatric and  metabolic disorders \citep{Gonzales2018}.  

We analyzed this data with EiGLasso and TeraLasso to estimate $\bThe$
for the relationship among the $p$ mice and $\bPsi$ for the network over $q$ genes.
We selected $10,000$ genes from each tissue type by excluding the genes with low variance in expression 
across the mice. 
As TeraLasso did not converge within 24 hours on this dataset with $q=10,000$, 
to compare the computation time of EiGLasso and TeraLasso,
we formed three smaller datasets for each tissue type 
by performing hierarchical clustering and selecting clusters of genes.
We selected the regularization parameters from the range $[0.1, 1.0]$ 
using Bayesian Information Criterion (BIC). 
EiGLasso was run with $K=1$ with the convergence criterion $\epsilon = 10^{-3}$. 
TeraLasso was run up to 24 hours, until it reached similar objective values to those from EiGLasso, 
which corresponded to $\epsilon = 10^{-6}$ or $10^{-7}$. 

\begin{table}[t!]
\centering
\begin{tabular}{|c|r|r|r|}
    \hline
    Tissue Type & Genes ($q$) & EiGLasso (sec) & TeraLasso (sec) \\
    \hline
    \multicolumn{1}{|c|}{\multirow{4}{*}{\shortstack{PFC\\($p=185$)}}} & 436 & 2 & 887\\
    & 877 & 13 & 18194\\
    & 5108 & 6141 &\\
    & 10000 & 25360 &\\
    \hline
    \multicolumn{1}{|c|}{\multirow{4}{*}{\shortstack{STR\\($p=169$)}}} & 447 & 9 & 6917\\
    & 977 & 77 & 39645\\
    & 5007 & 32977 &\\
    & 10000 & 65797 &\\
    \hline
    \multicolumn{1}{|c|}{\multirow{4}{*}{\shortstack{HIP\\($p=208$)}}} & 508 & 90 & 54454\\
    & 879 & 708 & \\
    & 5529 & 3106 &\\
    & 10000 & 20104 &\\
    \hline
\end{tabular}
\caption{Comparison of the computation time of EiGLasso with $K=1$ and TeraLasso on mouse gene-expression data.}
\label{tab:realruntime}
\end{table}

\begin{figure*}[t!]
\centering
\setlength\tabcolsep{0.1pt}
\begin{tabular}{ccc}
    \begin{subfigure}{0.33\linewidth}
        \includegraphics[width=\linewidth]{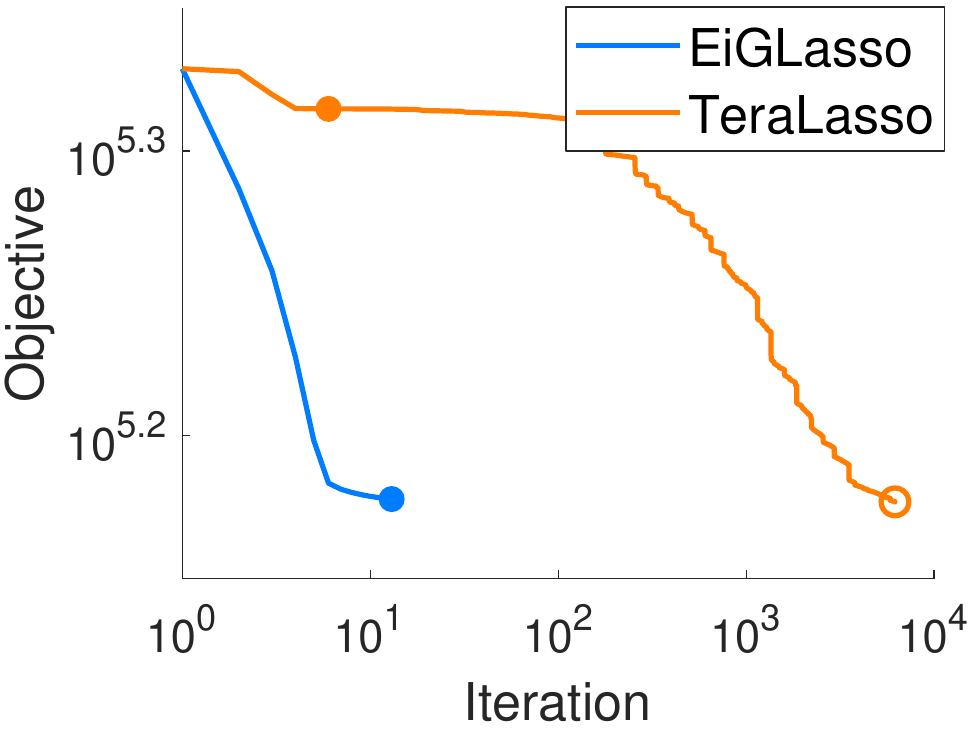}
    \end{subfigure} &
    \begin{subfigure}{0.33\linewidth}
        \includegraphics[width=\linewidth]{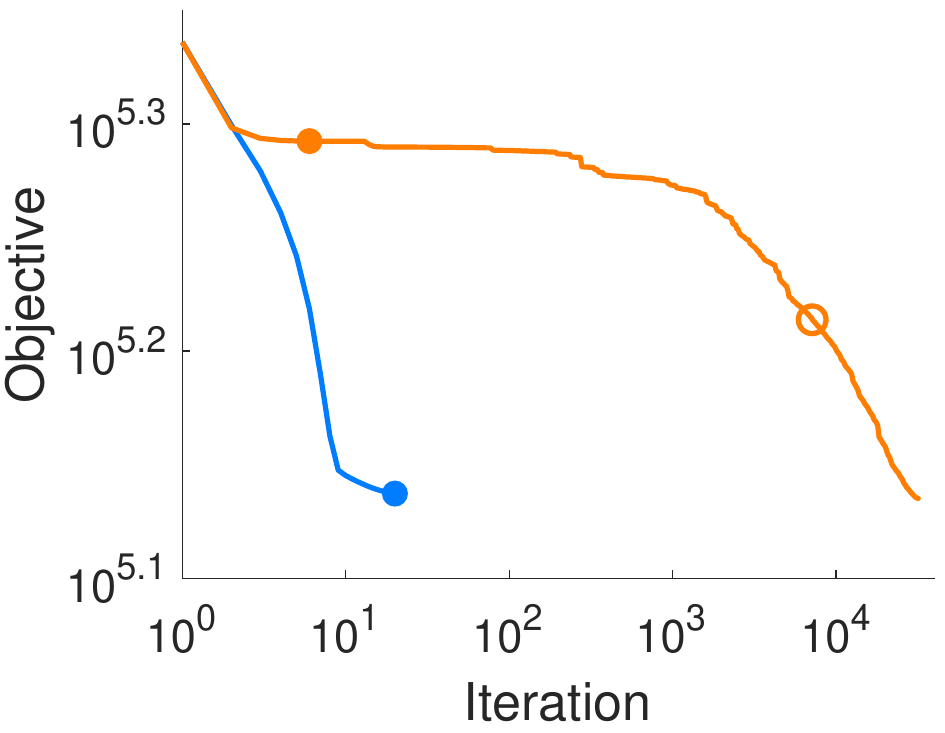}
    \end{subfigure} &
    \begin{subfigure}{0.33\linewidth}
        \includegraphics[width=\linewidth]{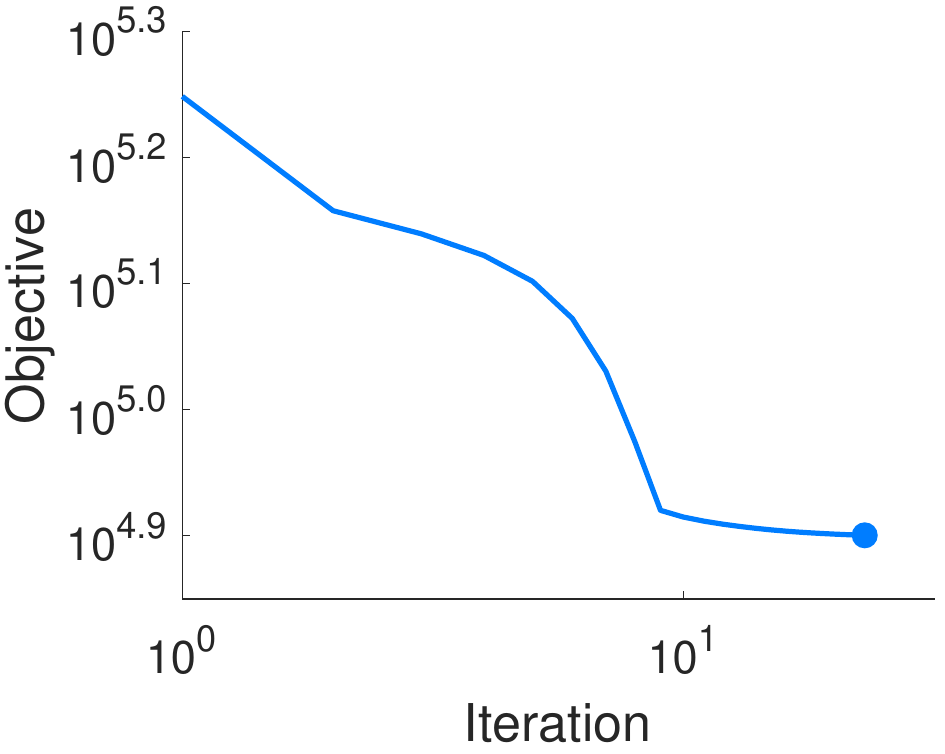}
    \end{subfigure}
    \\
    \begin{subfigure}{0.33\linewidth}
        \includegraphics[width=\linewidth]{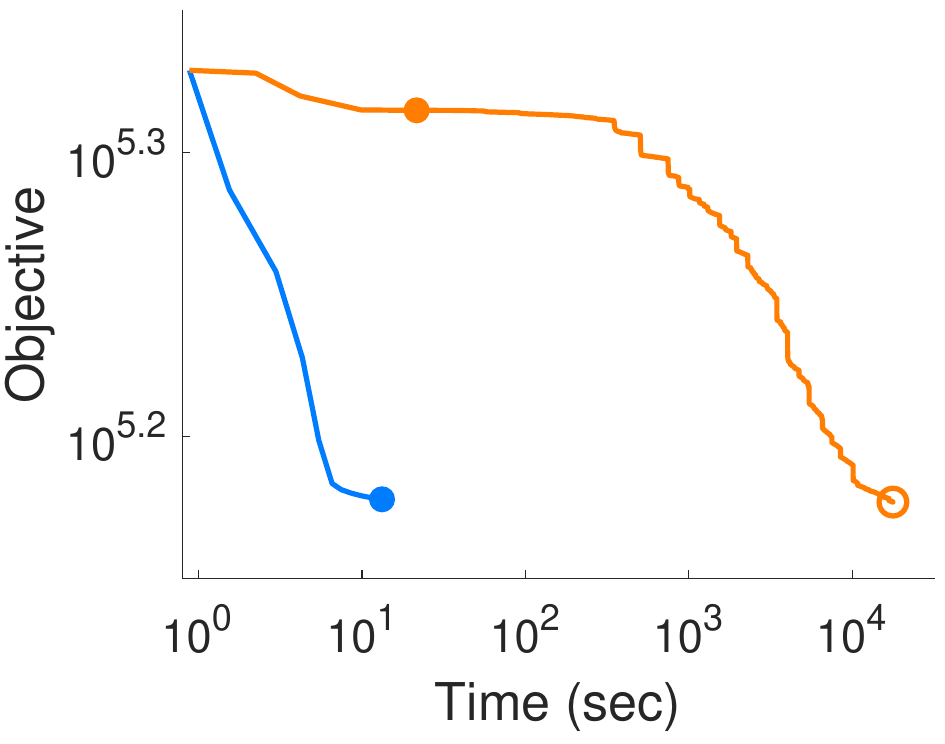}
        \caption{}
    \end{subfigure} &
    \begin{subfigure}{0.33\linewidth}
        \includegraphics[width=\linewidth]{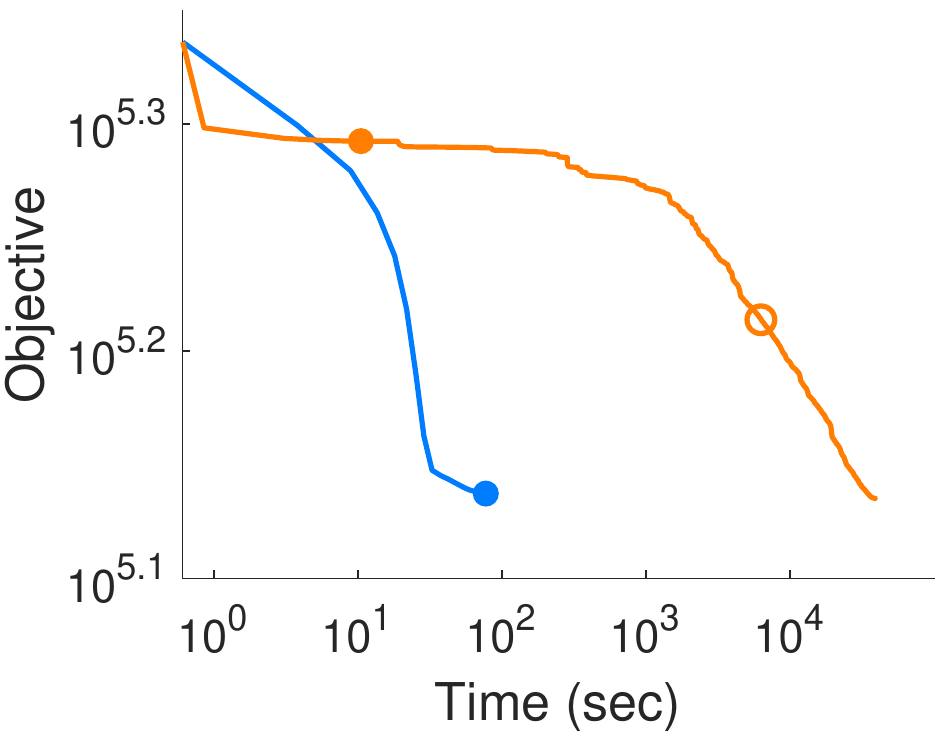}
        \caption{}
    \end{subfigure} &
    \begin{subfigure}{0.33\linewidth}
        \includegraphics[width=\linewidth]{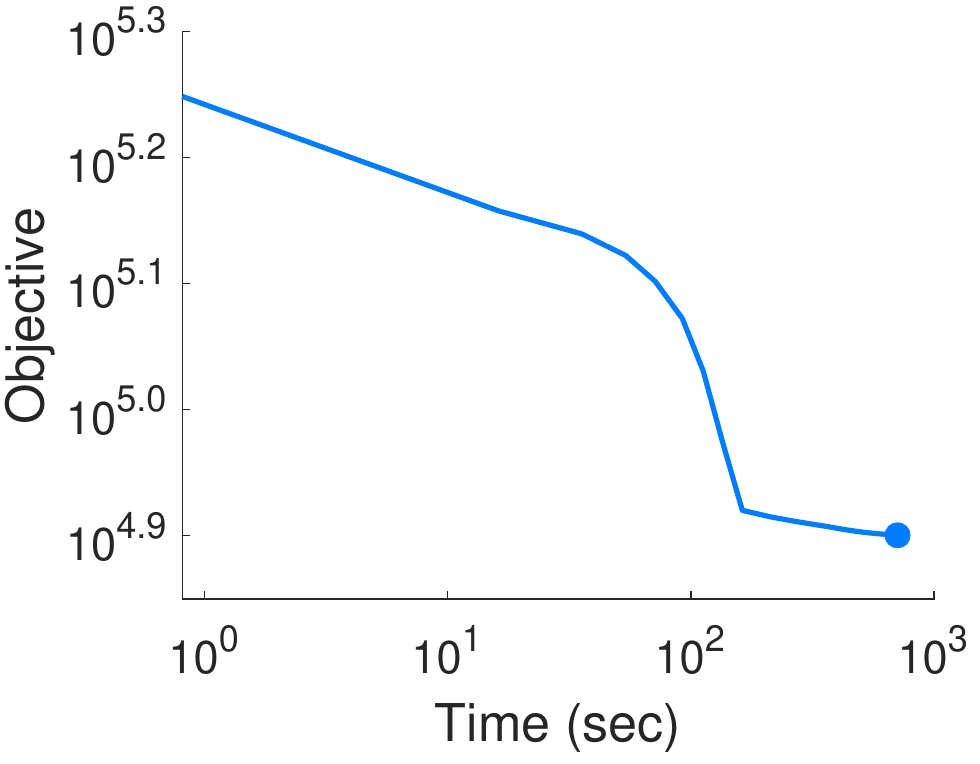}
        \caption{}
    \end{subfigure}
\end{tabular}
\caption{Comparison of the convergence of EiGLasso and TeraLasso on mouse gene-expression data. 
(a) PFC with $q=877$ genes. 
(b) STR with $q=977$ genes. 
(c) HIP with $q=879$ genes.
The objective values over iterations (top) and over time (bottom) are shown for 
each of the three brain tissue types. 
The `$\bullet$' and `$\bigcirc$' mark the points that satisfy the convergence criteria $\epsilon=10^{-3}$ and $10^{-7}$, respectively. 
EiGLasso with $K=1$ was used. 
}
    \label{fig:realconvg}
\end{figure*}

\begin{figure*}[t!]
\centering
\setlength\tabcolsep{0.2pt}
\begin{tabular}{ccc}
    \begin{subfigure}{0.33\linewidth}
        \includegraphics[width=\linewidth]{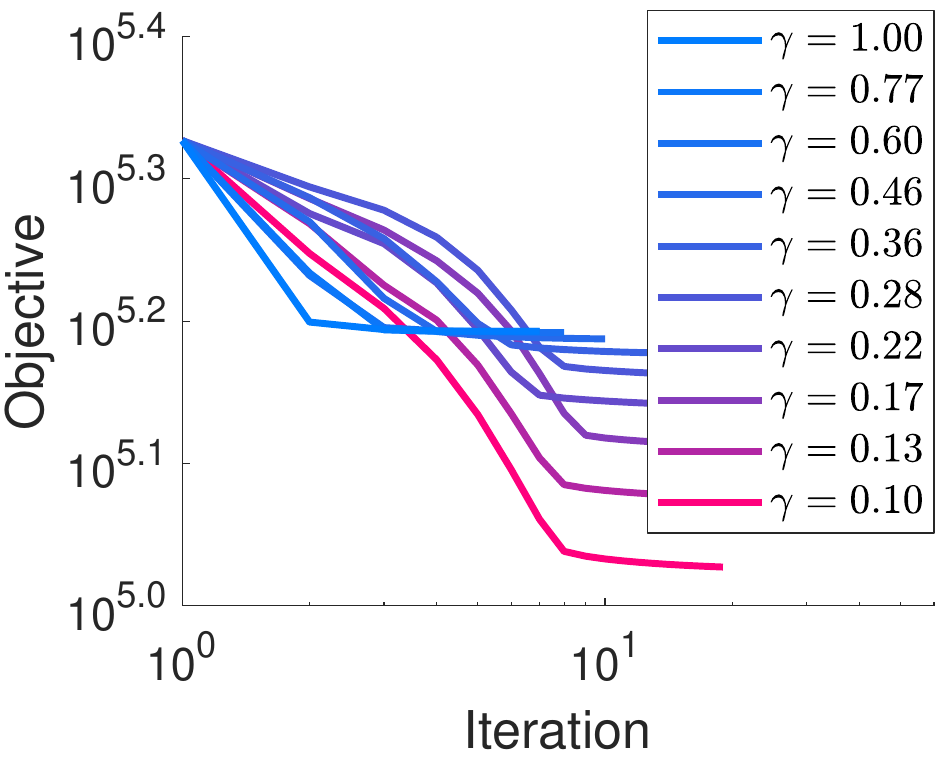}
        \caption*{(a)}
    \end{subfigure} &
    \begin{subfigure}{0.33\linewidth}
        \includegraphics[width=\linewidth]{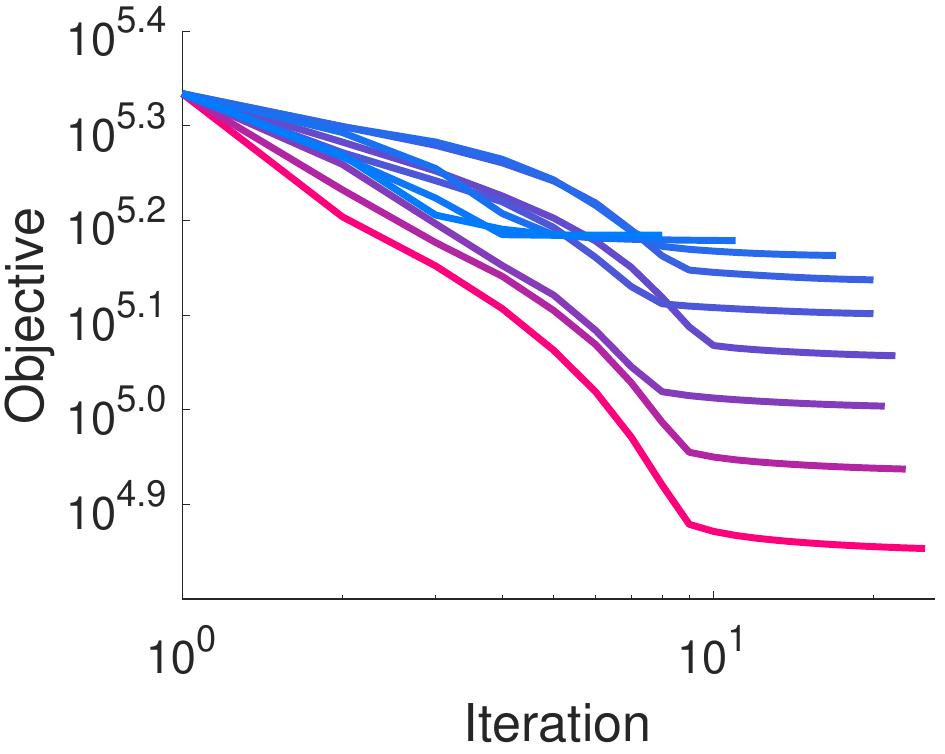}
        \caption*{(b)}
    \end{subfigure} &
    \begin{subfigure}{0.33\linewidth}
        \includegraphics[width=\linewidth]{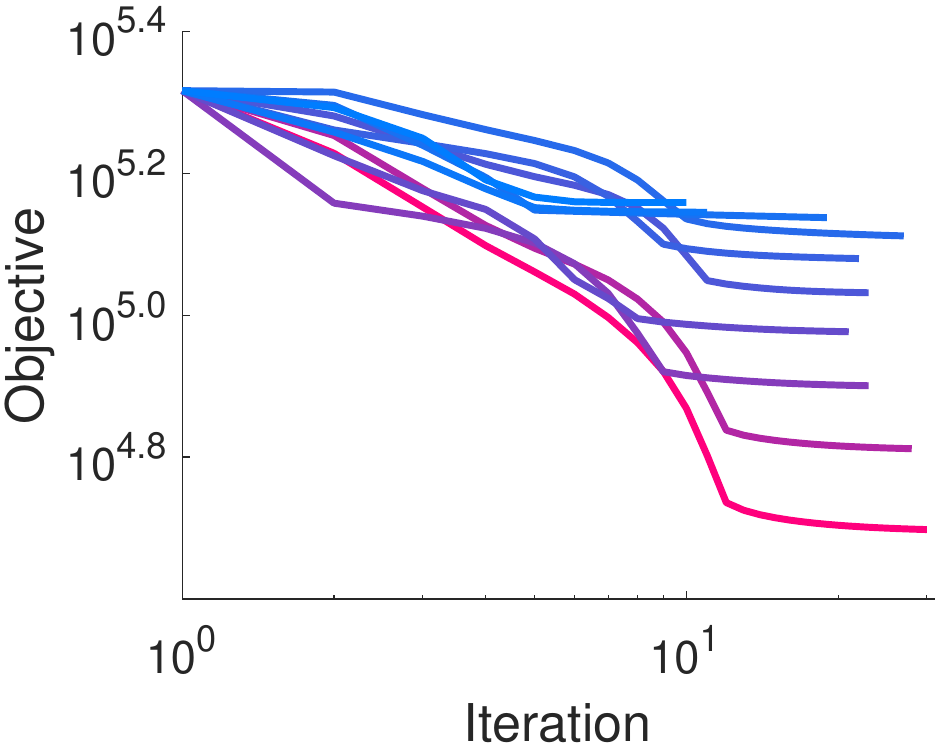}
        \caption*{(c)}
    \end{subfigure}
\end{tabular}
\caption{Convergence of EiGLasso on mouse gene-expression data for different regularization parameters. 
For ten different values of the regularization parameter $\gamma=\gamma_{\bThe}=\gamma_{\bPsi}$, 
the objective values over iterations are shown for  (a) PFC, (b) STR, and (c) HIP. 
EiGLasso with $K=1$ was run on the data for the same set of genes used in Figure \ref{fig:realconvg}.
}
    \label{fig:real1000cv}
\end{figure*}

Across all datasets with different tissue types and with different sizes,
EiGLasso was significantly faster than TeraLasso (Table \ref{tab:realruntime}).
On smaller datasets  with less than 1,000 genes,
EiGLasso was two to three orders-of-magnitude faster than TeraLasso. 
On HIP tissue with 879 genes, within 24 hours, TeraLasso was not able to converge to the similar objective value 
that EiGLasso reached, while EiGLasso converged in about 12 minutes. 
On the larger datasets with more than 5,000 genes, TeraLasso was not able to reach
the same objective as EiGLasso with $K=1$ within 24 hours.
On the full dataset with 10,000 genes, EiGLasso took 6 and 7 hours for the HIP and PFC tissue types
and 18 hours for the STR tissue type.

On the second largest dataset from each tissue type in Table \ref{tab:realruntime},
we examined the objective values over iterations and over time for EiGLasso and TeraLasso 
(Figure \ref{fig:realconvg}).
In all three datasets, EiGLasso converged two orders-of-magnitude faster than TeraLasso 
with much fewer iterations. 
To reach the similar objective value as EiGLasso, TeraLasso needed stricter convergence criterion 
$\epsilon=10^{-6}$ or $10^{-7}$. 
The number of iterations required by EiGLasso was not affected significantly
by different regularization  parameters $\gamma$ (Figure \ref{fig:real1000cv}).

\subsection{Financial data}

We applied EiGLasso and TeraLasso to the historical daily stock price data of 
the S\&P 500 constituents
to model the relationship among companies and dependencies across time points.
We obtained the daily stock closing prices for 306 companies 
that remained in the index from 2/16/2010 to 2/13/2020 for 2,523 days.
We excluded the data for six days when the stocks for only a small subset of the constituents were traded. 
We normalized the data by computing the proportion of price change on day $t$
from the previous day as $\frac{\textrm{price}^t - \textrm{price}^{t-1}}{\textrm{price}^{t-1}}$. 
We ran EiGLasso with $K=1$ and $\epsilon=10^{-3}$ and ran TeraLasso until
it reached the similar objective as EiGLasso.
We selected the optimal regularization parameter from
10 different values within the range of $[0.01, 1.0]$ using BIC.

\begin{figure}[t!]
\centering
\begin{tabular}{cc}
\multirow{2}{*}[2.5em]{
        \begin{subfigure}{.45\linewidth}
            \includegraphics[width=.888\linewidth]{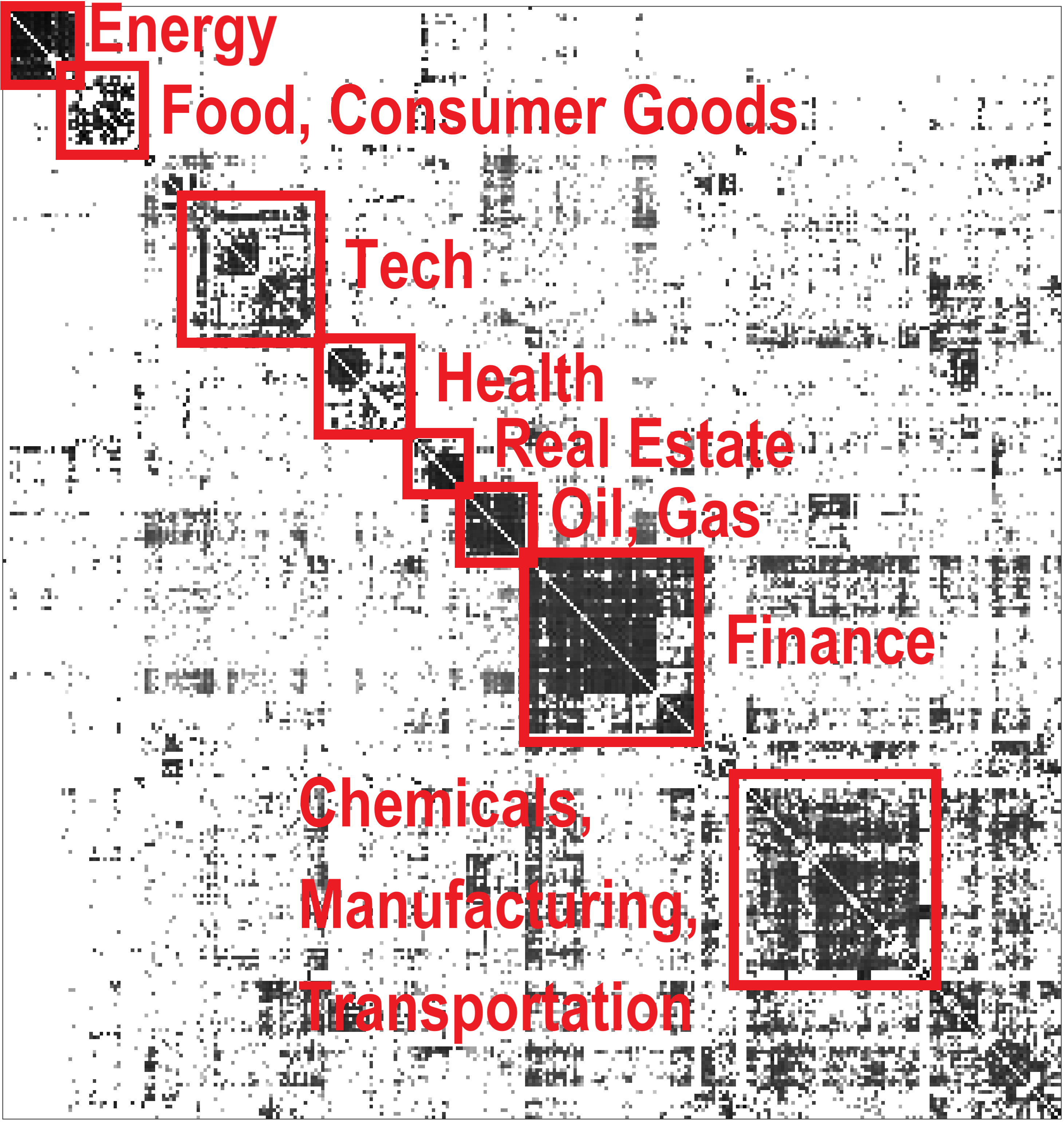}
        \end{subfigure}
        } &
        \begin{subfigure}{.33\linewidth}
            \includegraphics[width=\linewidth]{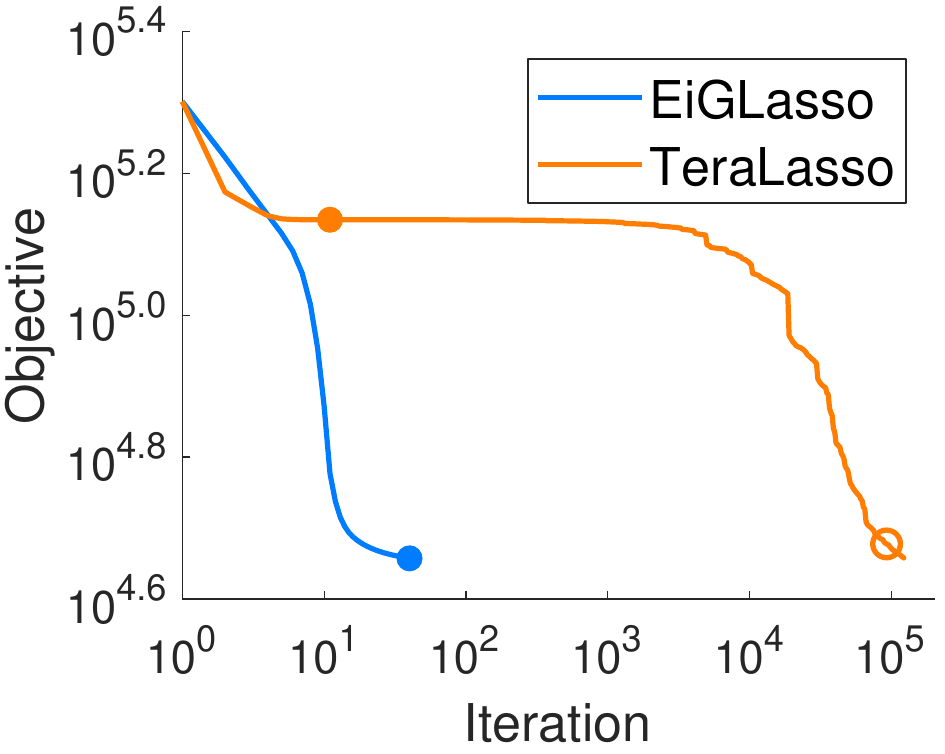}
        \end{subfigure}\\
        &
        \begin{subfigure}{.33\linewidth}
            \includegraphics[width=\linewidth]{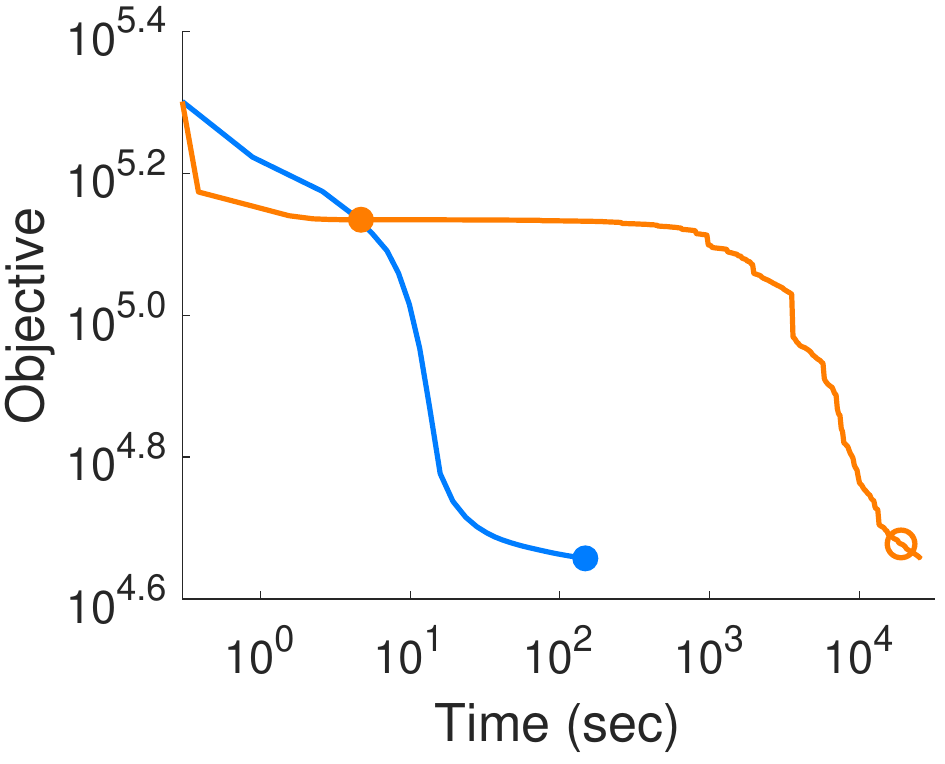}
        \end{subfigure}
\end{tabular}
\caption*{\hspace{3em}(a)\hspace{17em}(b)}
\caption{
Results on S\&P 500 data. (a) Graph over S\&P 500 companies estimated by EiGLasso from the stock closing price data.
Data from 306 companies over 2,516 days were used. 
The off-diagonals of the estimated graph are shown. (b) Comparison of the convergence of EiGLasso and TeraLasso on the data from 306 companies over 500 days. The objective values over iterations (top) and time (bottom) are shown. The `$\bullet$' and `$\bigcirc$' mark the points that satisfy 
the convergence criteria $\epsilon=10^{-3}$ and $10^{-7}$, respectively. 
EiGLasso with $K=1$ was used. 
}
    \label{fig:realrecover}
\end{figure}

EiGLasso with $K=1$ took 46 hours, whereas TeraLasso did not 
converge within 48 hours.
The graph over 306 companies estimated by EiGLasso
match the known sectors (Figure \ref{fig:realrecover}(a)).
We compared the computation time of EiGLasso with $K=1$ and TeraLasso
on a small subset over 500 days for the same 306 companies.
On this smaller dataset, EiGLasso took 148 seconds to converge and
TeraLasso took 25,739 seconds, showing that EiGLasso was again two orders-of-magnitude faster than TeraLasso (Figure \ref{fig:realrecover}(b)).



\section{Conclusion} \label{sec:conclusion}

We introduced EiGLasso, an efficient algorithm for estimating an inverse covariance matrix as
the Kronecker sum of two matrices, each representing a graph over samples and a graph over features. 
Extending the second-order optimization method in QUIC for Gaussian graphical models~\citep{quic}, 
EiGLasso employed the eigendecomposition of the two matrices for individual graphs
during optimization, to avoid an expensive computation of large gradient and Hessian 
matrices with an inflated structure that arises from the Kronecker sum.
EiGLasso approximated the Hessian based on this eigenstructure of the parameters to further 
reduce the computation time. We showed the quadratic convergence of EiGLasso with the exact Hessian
and linear convergence of EiGLasso with the approximate Hessian. In our experiments,
EiGLasso achieved two to three orders-of-magnitude speed-up
over the state-of-the-art method, TeraLasso~\citep{TeraLasso}.

In addition, we introduced a new approach for identifying and estimating
the unidentifiable diagonal elements of the matrices for the feature and sample graphs.
We showed that given the ratio of the traces of the two matrices, the diagonal parameters
can be identified uniquely within the equivalence class of the unidentifiable parameters. 
EiGLasso performed the optimization without identifying 
these parameters, since all quantities involved in the optimization are
invariant within the equivalence class regardless of the trace ratio. 
We further showed that it is sufficient to identify the parameters 
with the fixed trace ratio once,  after the optimization was complete. 
Thus, EiGLasso used a significantly simpler strategy for estimating the
unidentifiable parameters than the previous methods.

There are several possible future directions. One natural extension of EiGLasso
is to adopt Big\&QUIC~\citep{bigquic}, an extension of QUIC to remove the memory 
requirement, within EiGLasso.
Because unlike QUIC, EiGLasso uses the eigendecomposition of the parameter
matrices, which in itself has the space complexity cubic in the graph sizes, 
further research is needed to incorporate the eigendecomposition within
the block-wise update used in Big\&QUIC.  
Another future direction is to develop a Hessian approximation technique with faster
theoretical convergence. 
While we showed that EiGLasso with our approximate Hessian 
converges linearly to the optimum, it may be possible to achieve superlinear convergence
with an approximation Hessian that approaches the exact Hessian over iterations.

\section*{Acknowledgements}
This work was supported by NIH 1R21HG011116 and 1R21HG010948. This work used the Extreme Science and Engineering Discovery Environment (XSEDE), which is supported by NSF ACI-1548562 and ACI-1445606.

\vskip 0.2in
\bibliography{refs}

\begin{thebibliography}{28}
\providecommand{\natexlab}[1]{#1}
\providecommand{\url}[1]{\texttt{#1}}
\expandafter\ifx\csname urlstyle\endcsname\relax
  \providecommand{\doi}[1]{doi: #1}\else
  \providecommand{\doi}{doi: \begingroup \urlstyle{rm}\Url}\fi

\bibitem[Benzi and Simoncini(2017)]{Benzi2017}
M.~Benzi and V.~Simoncini.
\newblock Approximation of functions of large matrices with {K}ronecker
  structure.
\newblock \emph{Numerische Mathematik}, 135\penalty0 (1):\penalty0 1--26, 2017.

\bibitem[Bertsekas(1995)]{bertsekas1995}
D.~P. Bertsekas.
\newblock \emph{Nonlinear programming}.
\newblock Athena Scientific, 1995.

\bibitem[Boyd and Vandenberghe(2004)]{cvxbook}
S.~Boyd and L.~Vandenberghe.
\newblock \emph{Convex Optimization}.
\newblock Cambridge University Press, 2004.

\bibitem[Dai et~al.(2015)Dai, Li, Bai, Yang, Liu, Zhan, and
  Shi]{breastcancersubtype}
X.~Dai, T.~Li, Z.~Bai, Y.~Yang, X.~Liu, J.~Zhan, and B.~Shi.
\newblock Breast cancer intrinsic subtype classification, clinical use and
  future trends.
\newblock \emph{American Journal of Cancer Research}, 5:\penalty0 2929--43,
  2015.

\bibitem[Dunn(1980)]{dunn1980}
J.~Dunn.
\newblock Newton's method and the {G}oldstein step length rule for constrained
  minimization.
\newblock In \emph{Proceedings of the 19th IEEE Conference on Decision and
  Control including the Symposium on Adaptive Processes}, pages 17--22. IEEE,
  1980.

\bibitem[Friedman et~al.(2007)Friedman, Hastie, and Tibshirani]{glasso}
J.~Friedman, T.~Hastie, and R.~Tibshirani.
\newblock {Sparse inverse covariance estimation with the graphical lasso}.
\newblock \emph{Biostatistics}, 9\penalty0 (3):\penalty0 432--441, 2007.

\bibitem[Gao et~al.(2020)Gao, Wang, and Ji]{Gao2020}
H.~Gao, Z.~Wang, and S.~Ji.
\newblock Kronecker attention networks.
\newblock In \emph{Proceedings of the 26th ACM SIGKDD International Conference
  on Knowledge Discovery \& Data Mining}, pages 229--237, 2020.

\bibitem[Gonzales et~al.(2018)Gonzales, Seo, Hernandez~Cordero, St.~Pierre,
  Gregory, Distler, Abney, Canzar, Lionikas, and Palmer]{Gonzales2018}
N.~M. Gonzales, J.~Seo, A.~I. Hernandez~Cordero, C.~L. St.~Pierre, J.~S.
  Gregory, M.~G. Distler, M.~Abney, S.~Canzar, A.~Lionikas, and A.~A. Palmer.
\newblock Genome-wide association analysis in a mouse advanced intercross line.
\newblock \emph{Nature Communications}, 9\penalty0 (1):\penalty0 5162, 2018.

\bibitem[Greenewald et~al.(2019)Greenewald, Zhou, and Hero~III]{TeraLasso}
K.~Greenewald, S.~Zhou, and A.~Hero~III.
\newblock Tensor graphical lasso ({TeraLasso}).
\newblock \emph{Journal of the Royal Statistical Society: Series B (Statistical
  Methodology)}, 81\penalty0 (5):\penalty0 901--931, 2019.

\bibitem[Haupt et~al.(2020)Haupt, Zeilmann, Ahadova, von Knebel~Doeberitz,
  Kloor, and Heuveline]{Haupt2020}
S.~Haupt, A.~Zeilmann, A.~Ahadova, M.~von Knebel~Doeberitz, M.~Kloor, and
  V.~Heuveline.
\newblock Mathematical modeling of multiple pathways in colorectal
  carcinogenesis using dynamical systems with {K}ronecker structure.
\newblock \emph{bioRxiv doi: 10.1101/2020.08.14.250175}, 2020.

\bibitem[Hsieh et~al.(2013)Hsieh, Sustik, Dhillon, Ravikumar, and
  Poldrack]{bigquic}
C.-J. Hsieh, M.~A. Sustik, I.~S. Dhillon, P.~Ravikumar, and R.~A. Poldrack.
\newblock {BIG} \& {QUIC}: Sparse inverse covariance estimation for a million
  variables.
\newblock In \emph{Advances in Neural Information Processing Systems},
  volume~26, pages 3165--3173, 2013.

\bibitem[Hsieh et~al.(2014)Hsieh, Sustik, Dhillon, and Ravikumar]{quic}
C.-J. Hsieh, M.~A. Sustik, I.~S. Dhillon, and P.~Ravikumar.
\newblock {QUIC}: Quadratic approximation for sparse inverse covariance
  estimation.
\newblock \emph{Journal of Machine Learning Research}, 15\penalty0
  (83):\penalty0 2911--2947, 2014.

\bibitem[Kalaitzis et~al.(2013)Kalaitzis, Lafferty, Lawrence, and
  Zhou]{biglasso}
A.~Kalaitzis, J.~Lafferty, N.~D. Lawrence, and S.~Zhou.
\newblock The bigraphical lasso.
\newblock In \emph{Proceedings of the 30th International Conference on Machine
  Learning}, pages 1229--1237. PMLR, 2013.

\bibitem[Kalchbrenner et~al.(2017)Kalchbrenner, Oord, Simonyan, Danihelka,
  Vinyals, Graves, and Kavukcuoglu]{videopixel}
N.~Kalchbrenner, A.~Oord, K.~Simonyan, I.~Danihelka, O.~Vinyals, A.~Graves, and
  K.~Kavukcuoglu.
\newblock Video pixel networks.
\newblock In \emph{Proceedings of the 34th International Conference on Machine
  Learning}, pages 1771--1779. PMLR, 2017.

\bibitem[King(1966)]{stocktogether}
B.~F. King.
\newblock Market and industry factors in stock price behavior.
\newblock \emph{The Journal of Business}, 39\penalty0 (1):\penalty0 139--190,
  1966.

\bibitem[Leng and Tang(2012)]{lengandtang2012}
C.~Leng and C.~Y. Tang.
\newblock Sparse matrix graphical models.
\newblock \emph{Journal of the American Statistical Association}, 107\penalty0
  (499):\penalty0 1187--1200, 2012.

\bibitem[Li et~al.(2018)Li, Luo, Xiao, Liang, and Ding]{Li2018}
G.~Li, J.~Luo, Q.~Xiao, C.~Liang, and P.~Ding.
\newblock Prediction of micro{RNA}–disease associations with a {K}ronecker
  kernel matrix dimension reduction model.
\newblock \emph{Royal Society of Chemistry}, 8:\penalty0 4377--4385, 2018.

\bibitem[Mazumder and Hastie(2012)]{JMLR:v13:mazumder12a}
R.~Mazumder and T.~Hastie.
\newblock Exact covariance thresholding into connected components for
  large-scale graphical lasso.
\newblock \emph{Journal of Machine Learning Research}, 13\penalty0
  (27):\penalty0 781--794, 2012.

\bibitem[Park et~al.(2017)Park, Shedden, and Zhou]{Park2017}
S.~Park, K.~Shedden, and S.~Zhou.
\newblock Non-separable covariance models for spatio-temporal data, with
  applications to neural encoding analysis.
\newblock \emph{arXiv preprint arXiv:1705.05265}, 2017.

\bibitem[Rudelson and Zhou(2017)]{RudelsonZhou2017}
M.~Rudelson and S.~Zhou.
\newblock {Errors-in-variables models with dependent measurements}.
\newblock \emph{Electronic Journal of Statistics}, 11\penalty0 (1):\penalty0
  1699--1797, 2017.

\bibitem[Tseng and Yun(2009)]{Tseng2009}
P.~Tseng and S.~Yun.
\newblock A coordinate gradient descent method for nonsmooth separable
  minimization.
\newblock \emph{Mathematical Programming}, 117\penalty0 (1):\penalty0 387--423,
  2009.

\bibitem[{Tsiligkaridis} and {Hero}(2013)]{Tsiligkaridis13}
T.~{Tsiligkaridis} and A.~O. {Hero}.
\newblock Covariance estimation in high dimensions via {K}ronecker product
  expansions.
\newblock \emph{IEEE Transactions on Signal Processing}, 61\penalty0
  (21):\penalty0 5347--5360, 2013.

\bibitem[Witten et~al.(2011)Witten, Friedman, and Simon]{witten2011}
D.~M. Witten, J.~H. Friedman, and N.~Simon.
\newblock New insights and faster computations for the graphical lasso.
\newblock \emph{Journal of Computational and Graphical Statistics}, 20\penalty0
  (4):\penalty0 892--900, 2011.

\bibitem[Yin and Li(2012)]{YIN2012}
J.~Yin and H.~Li.
\newblock Model selection and estimation in the matrix normal graphical model.
\newblock \emph{Journal of Multivariate Analysis}, 107:\penalty0 119--140,
  2012.

\bibitem[Yoon and Kim(2020)]{yoon2020}
J.~H. Yoon and S.~Kim.
\newblock {E}i{GL}asso: Scalable estimation of {C}artesian product of sparse
  inverse covariance matrices.
\newblock In \emph{Proceedings of the 35th Conference on Uncertainty in
  Artificial Intelligence}, pages 1248--1257. PMLR, 2020.

\bibitem[Zhang et~al.(2018)Zhang, Zheng, Cui, and Li]{Zhang2018tensor}
T.~Zhang, W.~Zheng, Z.~Cui, and Y.~Li.
\newblock Tensor graph convolutional neural network.
\newblock \emph{arXiv preprint arXiv:1803.10071}, 2018.

\bibitem[Zhang(2020)]{Zhang2020}
X.~Zhang.
\newblock \emph{Statistical Analysis for Network Data using Matrix Variate
  Models and Latent Space Models}.
\newblock Ph.{D}. disseration, University of Michigan, 2020.

\bibitem[Zhou(2014)]{zhou2014}
S.~Zhou.
\newblock Gemini: Graph estimation with matrix variate normal instances.
\newblock \emph{Annals of Statistics}, 42\penalty0 (2):\penalty0 532--562,
  2014.

\end{thebibliography}

\section*{Appendix A. Hessian Computation}

We show that the collapse of $\bW \otimes \bW$ in Eq. \eqref{eq:HcollapseWW} to obtain the Hessian
can be viewed as the same type of deflation for the gradient in Figure \ref{fig:collapseW}(b) applied twice 
to $\bW \otimes \bW$.

Let $\bm{A}$ denote a matrix of size $m_1\times m_2$. 
Then, the elements of the Kronecker product 
$\bm{A}\otimes\bm{A}$ are given as
$[\bm{A}\otimes\bm{A}]_{(i-1)m_1 + k,\, (j-1)m_2 + l}=[\bm{A}]_{ij}[\bm{A}]_{kl}$
for $i,k=1, \ldots, m_1,$ and $j, l = 1, \ldots, m_2$.
As a representation of $\bm{A}\otimes\bm{A}$ with different row/column indices,
we consider $\tvec(\bm{A})\tvec(\bm{A})^T$
whose elements are given as
$[\tvec(\bm{A})\tvec(\bm{A})^T]_{(i-1)m_1 + j,\, (k-1)m_1 + l}=[\bm{A}]_{ij}[\bm{A}]_{kl}$. 
We define an operator 
\begin{equation*}
    \mathcal{P}_{m_1,m_2}(\bm{A}\otimes\bm{A}) = \tvec(\bm{A})\tvec(\bm{A})^T, 
\end{equation*}
which maps the elements of $\bm{A}\otimes\bm{A}$ 
to those of $\tvec(\bm{A})\tvec(\bm{A})^T$.
The same operator can be applied to $\bm{D} = \sum_{i=1}^{N}\bm{A}_i\otimes\bm{A}_i$,
where $\bm{A}_i\in\mathbb{R}^{m_1\times m_2}$,
\begin{equation}
    \mathcal{P}_{m_1,m_2}(\bm{D}) =
    \mathcal{P}_{m_1,m_2}(\sum_i\bm{A}_i\otimes\bm{A}_i) = \sum_i \tvec(\bm{A}_i)\tvec(\bm{A}_i)^T. \label{eq:op}
\end{equation}
Collapsing $\bW \otimes \bW$ into $\bH_{\bThe}$ amounts to applying the same type of collapse as in
the gradient (Figure \ref{fig:collapseW}(b)) twice 
on $\mathcal{P}_{p^2q^2,p^2q^2}(\bW\otimes\bW)$  to obtain $\mathcal{P}_{p,p}(\bH_{\bThe})$, 
$\mathcal{P}_{q,q}(\bH_{\bPsi})$, and $\mathcal{P}_{p,q}(\bH_{\bThe\bPsi})$. 
To see this, we re-write $\bH_{\bThe}$ as
\begin{align*}
    \bH_{\bThe}
    &= \bP_{\bThe}^T(\bW\otimes\bW)\bP_{\bThe}\\
    &= \sum_{i=1}^q\sum_{j=1}^q ((\bI_p\otimes\bm{e}_{q,i})\otimes(\bI_p\otimes\bm{e}_{q,i}))^T(\bW\otimes\bW)
    ((\bI_p\otimes\bm{e}_{q,j})\otimes(\bI_p\otimes\bm{e}_{q,j}))\\
    &= \sum_{i=1}^q\sum_{j=1}^q \Big[(\bI_p\otimes\bm{e}_{q,i})^T\bW(\bI_p\otimes\bm{e}_{q,j})\Big]\otimes\Big[(\bI_p\otimes\bm{e}_{q,i})^T\bW(\bI_p\otimes\bm{e}_{q,j})\Big].
\end{align*}
Then, we apply the operator in Eq. \eqref{eq:op} to $\bH_{\bThe}$:
\begin{align}
    \mathcal{P}_{p,p}(\bH_{\bThe}) 
    &= \sum_{i=1}^q\sum_{j=1}^q \tvec\left( (\bI_p\otimes\bm{e}_{q,i})^T\bW(\bI_p\otimes\bm{e}_{q,j}) \right) 
    \tvec\left( (\bI_p\otimes\bm{e}_{q,i})^T\bW(\bI_p\otimes\bm{e}_{q,j}) \right)^T \nonumber \\
    &= \sum_{i=1}^q\sum_{j=1}^q (\bI_p\otimes\bm{e}_{q,j}\otimes\bI_p\otimes\bm{e}_{q,i})^T
    \tvec(\bW)\tvec(\bW)^T(\bI_p\otimes\bm{e}_{q,j}\otimes\bI_p\otimes\bm{e}_{q,i}) \nonumber \\
    &= \sum_{i=1}^q\sum_{j=1}^q \Bigg(\Big[(\bI_p\otimes\bm{e}_{q,j}\otimes\bI_p\otimes\bI_q)
    (\bI_p\otimes1\otimes\bI_p\otimes\bm{e}_{q,i})\Big]^T \nonumber \\
    &\qquad\qquad\qquad\qquad\qquad\mathcal{P}_{pq,pq}(\bW\otimes\bW)
    \Big[(\bI_p\otimes\bm{e}_{q,j}\otimes\bI_p\otimes\bI_q)(\bI_p\otimes1\otimes\bI_p\otimes\bm{e}_{q,i})\Big]\Bigg) \nonumber \\
    &= \sum_{i=1}^q (\bI_p\otimes(\bI_{p}\otimes\bm{e}_{q,i}))^T 
    \bigg[ \sum_{j=1}^q ((\bI_p\otimes\bm{e}_{q,j})\otimes\bI_{pq})^T \nonumber \\
    &\qquad\qquad\qquad\qquad\qquad\mathcal{P}_{pq,pq}(\bW\otimes\bW)((\bI_p\otimes\bm{e}_{q,j})\otimes\bI_{pq}) \bigg]
    (\bI_p\otimes(\bI_{p}\otimes\bm{e}_{q,i})).  \nonumber \\
    &= \sum_{i=1}^q (\bI_p\otimes(\bI_{p}\otimes\bm{e}_{q,i}))^T 
    \bm{M}_{\bThe} (\bI_p\otimes(\bI_{p}\otimes\bm{e}_{q,i})),  \label{eq:collapseH1}
\end{align}
where $\bm{M}_{\bThe}= \sum_{j=1}^q ((\bI_p\otimes\bm{e}_{q,j})\otimes\bI_{pq})^T 
    \mathcal{P}_{pq,pq}(\bW\otimes\bW)((\bI_p\otimes\bm{e}_{q,j})\otimes\bI_{pq})$.
Similarly, 
\begin{align}
    \mathcal{P}_{q,q}(\bH_{\bPsi}) &= \sum_{i=1}^p (\bI_{q}\otimes(\bm{e}_{p,i}\otimes\bI_{q}))^T  
     \bm{M}_{\bPsi} (\bI_{q}\otimes(\bm{e}_{p,i}\otimes\bI_{q})), \label{eq:collapseH2}  \\
    \mathcal{P}_{p,q}(\bH_{\bThe\bPsi}) 
    &= \sum_{i=1}^q (\bI_{p}\otimes(\bI_{p}\otimes\bm{e}_{q,i}))^T  \bm{M}_{\bPsi}
    (\bI_{q}\otimes(\bI_{p}\otimes\bm{e}_{q,i})), \label{eq:collapseH3}
\end{align}
where $\bm{M}_{\bPsi} = \sum_{j=1}^p ((\bm{e}_{p,j}\otimes\bI_{q})\otimes \bI_{pq})^T 
     \mathcal{P}_{pq,pq}(\bW\otimes\bW)((\bm{e}_{p,j}\otimes\bI_{q})\otimes\bI_{pq})$.
Eqs. \eqref{eq:collapseH1}-\eqref{eq:collapseH3} show the two-stage collapse for Hessian computation:
$\mathcal{P}_{pq,pq}(\bW\otimes\bW)$ (Figure \ref{fig:collapseH}(a))
is collapsed to $\bm{M}_{\bThe}$ and $\bm{M}_{\bPsi}$ (Figure \ref{fig:collapseH}(b)),
which are then collapsed to 
$\mathcal{P}_{p,p}(\bH_{\bThe})$, $\mathcal{P}_{p,q}(\bH_{\bThe\bPsi})$, 
and $\mathcal{P}_{q,q}(\bH_{\bPsi})$ (Figure \ref{fig:collapseH}(c)).
The first collapse of $\mathcal{P}_{pq,pq}(\bW\otimes\bW)$ to $\bm{M}_{\bThe}$ and $\bm{M}_{\bPsi}$
is equivalent to applying the same collapse for the gradient in Figure \ref{fig:collapseW}(b)  
at the level of cells in Figure \ref{fig:collapseH}(a)
each with size $pq\times pq$, instead of at the level of matrix elements.
The second collapse from $\bm{M}_{\bThe}$ and $\bm{M}_{\bPsi}$ to 
$\mathcal{P}_{p,p}(\bH_{\bThe})$, $\mathcal{P}_{p,q}(\bH_{\bThe\bPsi})$, 
and $\mathcal{P}_{q,q}(\bH_{\bPsi})$ 
is equivalent to applying the same collapse again
on each $pq\times pq$ cell of $\bm{M}_{\bThe}$ and $\bm{M}_{\bPsi}$ 
in Figure \ref{fig:collapseH}(b) at the level of matrix elements.
Notice that not all elements of $\mathcal{P}_{pq,pq}(\bW\otimes\bW)$ contribute to 
$\mathcal{P}_{p,p}(\bH_{\bThe})$, $\mathcal{P}_{p,q}(\bH_{\bThe\bPsi})$, 
and $\mathcal{P}_{q,q}(\bH_{\bPsi})$:
Only the elements of $\mathcal{P}_{pq,pq}(\bW\otimes\bW)$ corresponding to the non-zero
elements of the inflated Kronecker-sum mask $(\One_{p} \oplus \One_{q}) \otimes \One_{pq}$
and the elements of $\bm{M}_{\bThe}$ and $\bm{M}_{\bPsi}$, 
each masked with $\One_{p}\otimes(\One_{p} \oplus \One_{q})$ and $\One_{q}\otimes(\One_{p} \oplus \One_{q})$,
are used to compute 
$\mathcal{P}_{p,p}(\bH_{\bThe})$, $\mathcal{P}_{p,q}(\bH_{\bThe\bPsi})$,
and $\mathcal{P}_{q,q}(\bH_{\bPsi})$.

\begin{figure}[t!]
    \centering
    \includegraphics[width=\linewidth]{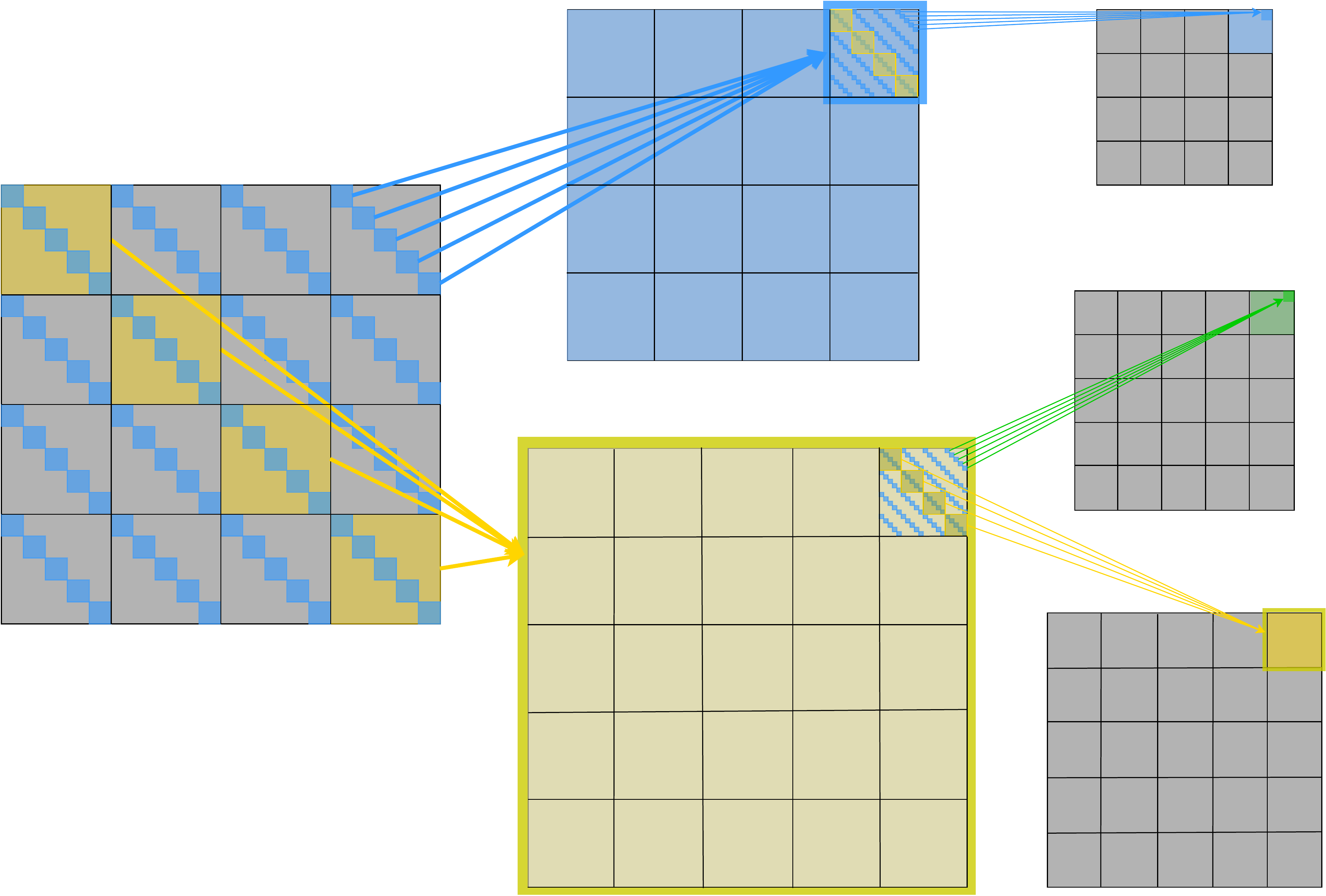}
    \caption*{$\;\;\qquad$(a)$\quad\qquad\qquad\qquad\qquad\qquad\qquad$(b)$\;\quad\qquad\qquad\qquad\qquad\qquad\qquad$(c)}
    \caption{Illustration of the Hessian computation in EiGLasso.
    A two-stage collapse of  $\mathcal{P}_{pq,pq}(\bW\otimes\bW)$ 
    into $\mathcal{P}_{p,p}(\bH_{\bThe})$, $\mathcal{P}_{p,q}(\bH_{\bThe\bPsi})$, and $\mathcal{P}_{q,q}(\bH_{\bPsi})$
    is shown for the same $\bThe$ and $\bPsi$ with $p=4$ and $q=5$ in Figure \ref{fig:collapseW}.
    (a) $\mathcal{P}_{pq,pq}(\bW\otimes\bW)$ of size $p^2q^2\times p^2q^2$
    with a $pq\times pq$ matrix in each cell.
    Only the cells with non-zero elements in the Kronecker-sum-like mask
    $(\One_{p}\oplus\One_{q})\otimes\One_{pq}=(\One_{p}\otimes \bI_{q}+\bI_{p}\otimes\One_{q})\otimes\One_{pq}$
    contribute to the Hessian. The two components of this mask, $(\One_{p}\otimes \bI_{q})\otimes\One_{pq}$ 
    and $(\bI_{p}\otimes\One_{q})\otimes\One_{pq}$, are shown with blue and yellow, respectively.
    (b) Two intermediate matrices after the first collapse, 
    each with size $p^2q\times p^2q$ (top) and size $pq^2\times pq^2$ (bottom), 
    with a $pq\times pq$ matrix in each block.
    Only the elements of these matrices that correspond to non-zero elements in the Kronecker-sum-like
    mask $\One_{p} \otimes (\One_{p}\oplus\One_{q})$ (top)  and
    $\One_{q} \otimes (\One_{p}\oplus\One_{q})$ (bottom) contribute to the Hessian.  
    This is equivalent to applying the mask $\One_{p}\oplus\One_{q}$ to each block. 
    (c) $\mathcal{P}_{p,p}(\bH_{\bThe})$ of size $p^2 \times p^2$ with a $p\times p$ matrix in each block (top),
    $\mathcal{P}_{p,q}(\bH_{\bThe\bPsi})$ of size $p^2 \times q^2$ with a $p\times q$ matrix in each block (middle),
    and $\mathcal{P}_{q,q}(\bH_{\bPsi})$ of size $q^2 \times q^2$, with a $q\times q$ matrix in each block (bottom).
    The two collapses from Panel (a) to Panel (b) and from Panel (b) to Panel (c) are shown with arrows
    that indicate the sum of the cells in the larger matrix into the cell in the smaller matrix.
    }
    \label{fig:collapseH}
\end{figure}

\section*{Appendix B. Computing the Newton Direction}
We provide the details on the coordinate descent updates for $D_{\bThe}$. Updating $D_{\bPsi}$ can be done in a similar manner.

\subsection*{B1. Exact Hessian}
Solving Eq. \eqref{eq:newton} for $[D_{\bThe}]_{ij}$ assuming that all the other elements
of $D_{\bThe}$ are fixed
amounts to solving the following optimization problem:
\begin{align*}
    \argmin_\mu\quad &\mu\Big(q[\bm{S}]_{ij} - [\bW_{\bThe}]_{ij} + \sum_{k=1}^q \bm{v}_{k,i}^T D_{\bThe} \bm{v}_{k,j} + \sum_{l=1}^p\sum_{k=1}^q \lambda_{\bW,lk}^2 [\bm{Q}_{\bThe}]_{il}[\bm{Q}_{\bThe}]_{jl}\bm{q}_{\bPsi, k}^TD_{\bPsi}\bm{q}_{\bPsi, k}\Big) \\
    &+ \frac{\mu^2}{2}\bigg(\sum_{k=1}^q [\bm{V}_{\bThe, k}]_{ij}^2 + [\bm{V}_{\bThe, k}]_{ii}[\bm{V}_{\bThe,k}]_{jj} \bigg) 
    + q\gamma_{\bThe}\left|[\bThe]_{ij} + [D_{\bThe}]_{ij} + \mu\right|,
\end{align*}
where $\bm{v}_{k,i}$ is the $i$th column of $\bm{V}_{\bThe,k} = \bm{Q}_{\bThe}\bXi_{\bThe,k}\bm{Q}_{\bThe}^T$.
This problem has a closed-form solution
\begin{equation*}
    \mu = -c+\mathcal{S}\left(c-\frac{b}{a},\frac{q\gamma_{\bThe}}{a}\right),
\end{equation*}
where
\begin{align*}
    a &= \sum_{k=1}^q [\bm{V}_{\bThe, k}]_{ij}^2 + [\bm{V}_{\bThe, k}]_{ii}[\bm{V}_{\bThe,k}]_{jj},\\
    b &= q[\bm{S}]_{ij} - [\bW_{\bThe}]_{ij} + \sum_{k=1}^q \bm{v}_{k,i}^T D_{\bThe} \bm{v}_{k,j} + \sum_{l=1}^p\sum_{k=1}^q \lambda_{\bW,lk}^2 [\bm{Q}_{\bThe}]_{il}[\bm{Q}_{\bThe}]_{jl}\bm{q}_{\bPsi, k}^TD_{\bPsi}\bm{q}_{\bPsi, k},\\
    c &= [\bThe]_{ij} + [D_{\bThe}]_{ij},
\end{align*}
and $\mathcal{S}(z,r)=\text{sign}(z)\max\{|z|-r,0\}$ is the soft-thresholding function.

\subsection*{B2. Approximate Hessian}
Similarly, solving Eq. \eqref{eq:newton} 
with $\hat{\bH}$
instead of $\bH$
for $[D_{\bThe}]_{ij}$
while fixing all the other elements of $D_{\bThe}$ 
amounts to solving the following optimization problem:
\begin{align*}
    \argmin_\mu\quad &\mu\Big(q[\bm{S}]_{ij} - [\bW_{\bThe}]_{ij} + \sum_{k=1}^K \bm{v}_{k,i}^T D_{\bThe} \bm{v}_{k,j} + (q-K)\bm{v}_{K,i}^T D_{\bThe} \bm{v}_{K,j} \Big) \\
    &+ \frac{\mu^2}{2}\bigg(\sum_{k=1}^K [\bm{V}_{\bThe, k}]_{ij}^2 + [\bm{V}_{\bThe, k}]_{ii}[\bm{V}_{\bThe,k}]_{jj} + (q-K)\left([\bm{V}_{\bThe, K}]_{ij}^2 + [\bm{V}_{\bThe, K}]_{ii}[\bm{V}_{\bThe, K}]_{jj}\right) \bigg) \\
    &+ q\gamma_{\bThe}\left|[\bThe]_{ij} + [D_{\bThe}]_{ij} + \mu\right|,
\end{align*}
with a closed-form solution
\begin{equation*}
    \mu = -c+\mathcal{S}\left(c-\frac{b}{a},\frac{q\gamma_{\bThe}}{a}\right),
\end{equation*}
where
\begin{align*}
    a &= \sum_{k=1}^K [\bm{V}_{\bThe, k}]_{ij}^2 + [\bm{V}_{\bThe, k}]_{ii}[\bm{V}_{\bThe,k}]_{jj} + (q-K)\left([\bm{V}_{\bThe, K}]_{ij}^2 + [\bm{V}_{\bThe, K}]_{ii}[\bm{V}_{\bThe, K}]_{jj}\right),\\
    b &= q[\bm{S}]_{ij} - [\bW_{\bThe}]_{ij} + \sum_{k=1}^K \bm{v}_{k,i}^T D_{\bThe} \bm{v}_{k,j} + (q-K)\bm{v}_{K,i}^T D_{\bThe} \bm{v}_{K,j},\\
    c &= [\bThe]_{ij} + [D_{\bThe}]_{ij}.
\end{align*}



\end{document}